\newtheorem{claim}{Claim}
\newtheorem{theorem}{Theorem}
\newtheorem{lemma}{Lemma}
\newtheorem{definition}{Definition}
\newtheorem{informaldefinition}{Informal Definition}
\DeclareRobustCommand{\MicahComment}[1]{%
  \ifthenelse{\boolean{include-notes}}%
    {{\protect\color{cyan}M: #1}}{}%
}
\DeclareRobustCommand{\davis}[1]{\ifthenelse{\boolean{include-notes}}
 {{\color{orange}D: #1}}{}}
\DeclareRobustCommand{\adnote}[1]{\ifthenelse{\boolean{include-notes}}
 {{\color{blue}AD: #1}}{}}
\DeclareRobustCommand{\todo}[1]{\ifthenelse{\boolean{include-notes}}
 {{\color{red} #1}}{}}
\newcommand{\prg}[1]{\textbf{#1}}
\newcommand{\tiredemoji}{
    \scalerel*{
        \includegraphics{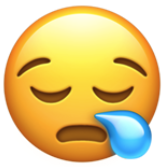}
    }{(}
}
\newcommand{\energyemoji}{
    \scalerel*{
        \includegraphics{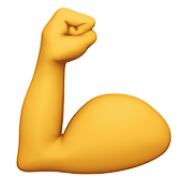}
    }{(}
}
\newcommand{\anoop}{a_{\text{noop}}}
\newcommand{\pinoop}{\pi_{\text{noop}}}
\newcommand{\nocontentsline}[3]{}
\newcommand{\tocless}[2]{\bgroup\let\addcontentsline=\nocontentsline#1{#2}\egroup}
\newcommand{\toclesslab}[3]{\bgroup\let\addcontentsline=\nocontentsline#1{#2\label{#3}}\egroup}
\setlist[itemize]{topsep=-0.5em, partopsep=-0.5em, parsep=0.1em}
\setlist[enumerate]{topsep=-0.5em, partopsep=-0.5em, parsep=0.1em}
\icmltitlerunning{AI Alignment with Changing and Influenceable Reward Functions}%
\newtcolorbox{storybox}{
colback=yellow!30, %
fontupper=\fontsize{9}{10.8}\selectfont, %
boxrule=1.2pt, %
left=4pt, %
right=4pt, %
top=3pt, %
bottom=3pt, %
breakable %
}
\begin{document}

\setlength{\abovedisplayskip}{6pt}
\setlength{\belowdisplayskip}{0pt}

\twocolumn[
\icmltitle{AI Alignment with Changing and Influenceable Reward Functions}

\icmlsetsymbol{equal}{*}

\begin{icmlauthorlist}
\icmlauthor{Micah Carroll}{to}
\icmlauthor{Davis Foote}{to}
\icmlauthor{Anand Siththaranjan}{to}
\icmlauthor{Stuart Russell}{to}
\icmlauthor{Anca Dragan}{to}
\end{icmlauthorlist}

\icmlaffiliation{to}{UC Berkeley}

\icmlcorrespondingauthor{mdc@berkeley.edu}{}

\icmlkeywords{Machine Learning, ICML}

\vskip 0.3in
]

\printAffiliationsAndNotice{} %

\begin{abstract}
Existing AI alignment approaches assume that preferences are static, which is unrealistic: our preferences change, and may even be influenced by our interactions with AI systems themselves. 
To clarify the consequences of incorrectly assuming static preferences, we introduce Dynamic Reward Markov Decision Processes (DR-MDPs), which explicitly model preference changes and the AI's influence on them.
We show that despite its convenience, the static-preference assumption may undermine the soundness of existing alignment techniques, leading
them to implicitly reward AI systems for influencing user preferences in ways users may not truly want. 
We then explore potential solutions. 
First, we offer a unifying perspective on how an agent's optimization horizon may partially help reduce undesirable AI influence. Then, we formalize different notions of AI alignment that account for preference change from the outset. 
Comparing the strengths and limitations of 8 such notions of alignment, we find that they all either err towards causing undesirable AI influence, or are overly risk-averse, suggesting that a straightforward solution to the problems of changing preferences may not exist. As there is no avoiding grappling with changing preferences in real-world settings, this makes it all the more important to handle these issues with care, balancing risks and capabilities. 
We hope our work can provide conceptual clarity and constitute a first step towards AI alignment practices which \textit{explicitly} account for (and contend with) the changing and influenceable nature of human preferences.
\end{abstract}
\vspace{-1.9em}

\toclesslab\section{Introduction}{sec:intro}
The goal of AI alignment is to make AI systems act according to our preferences (broadly construed). In practice, existing AI alignment techniques model human preferences with a single, static reward function that the AI system is trained to optimize \cite{leike_scalable_2018}. However, our preferences change over time, making it unclear whether AI systems should optimize the satisfaction of our past, present, or future preferences. Consider the following example:

\vspace{-0.5em}
\begin{storybox}
Due to health issues, Alice asks her AI assistant to help her be more healthy, refusing \textit{any} future requests for unhealthy foods. Sometime later, she later asks the AI to disregard her initial requests, and help her order fast food.
\end{storybox}
\vspace{-0.5em}

In such a scenario, it may be unclear if the AI assistant should respect Alice's original preference for healthy foods or respect the autonomy of ``current Alice''. Ultimately, to align AI systems we must answer the following question: when a person's preferences change over time, which (aggregation of) preferences should AI systems optimize?

While the challenge of aggregating preferences across time shares similarities with that of aggregating preferences across different people \cite{conitzer_social_2024}, it is further complicated by the fact that \textit{AI systems' actions can influence humans and their preferences}
\cite{burtell_artificial_2023}. 
Indeed, as prior work has argued, if AI systems are straightforwardly optimized to satisfy users' future preferences,
they will try to influence them to be easier to satisfy \cite{russell_human_2019,carroll_estimating_2022}. Back to the example:

\vspace{-0.5em}
\begin{storybox}
Alice's AI assistant was trained to maximize her future satisfaction. During training, the AI assistant learned that soothing Alice's health concerns would lead to higher satisfaction than continuously encouraging her to have healthy eating habits. Consequently, to maximize her satisfaction, it's optimal for the AI to ignore her initial wishes and even support her routine unhealthy eating. Indeed, Alice is ultimately truly satisfied.
\end{storybox}
\vspace{-0.5em}

In this case, the AI assistant seems aligned with ``final Alice'', as she endorses the AI's influence towards guilt-free unhealthy eating. However, if ``initial Alice'' would find this outcome horrifying and see the AI's influence as manipulative, should we still consider the AI aligned?
More broadly, in cases in which people are susceptible to undue influence from AI systems, how can we establish which of their preferences should have authority and legitimacy?

While past work has acknowledged that the static-preference assumption is unrealistic \cite{franklin_recognising_2022}, there have only been limited attempts at relaxing it, in part also due to a lack of a clear formal language for grounding notions of alignment which explicitly accounts for preference changes. 
We introduce a natural extension of Markov Decision Processes: \textbf{Dynamic Reward MDPs} (DR-MDPs), which account for changing preferences by modeling them as changing reward functions. Importantly, notions of alignment under changing preferences can easily be specified in the form of DR-MDP optimization objectives (\Cref{subsec:dr-mdps}).

Viewing current alignment practices through the lens of DR-MDPs, we can now ask: which snapshot(s) of a person's time-varying preferences do existing alignment methods implicitly optimize when they model dynamic-preference settings as static-preference ones (e.g., by using MDPs)? 
We show that common alignment practices, such as those for RL recommender systems and standard reward modeling, are most similar to DR-MDP objectives that actively reward AI systems for influencing users' reward functions or inducing ``reward lock-in'' (\Cref{subsec:current-AI}). In \Cref{sec:solutions} we further extend these same arguments to other major existing alignment approaches, e.g., to variants of RLHF for LLMs.

Having established that existing alignment practices may reward undue influence from AI systems, we turn to potential solutions.
We first offer a unifying perspective on how the optimization horizon can be a useful---but imperfect---lever for managing the incentives for influence which emerge from current alignment techniques, building on prior work on this topic (\Cref{sec:influence-and-opt-horizon}). Then, trying to address the problem at its root, we turn to consider 8 intuitive notions of alignment in DR-MDPs that explicitly account for changing preferences. While most of them have (approximate) correspondences in the literature, we also propose some of our own, including one which rewards influence only when it is unambiguously desirable.
By comparing the AI actions (and resulting influence) that are ``optimal'' under their 8 corresponding DR-MDP objectives,
we find that they all have flaws: some lead to undesirable influence, while others are impractically risk-averse---such that inaction is the only behavior considered optimal for many settings (\Cref{sec:solutions}).

Taken together with prior work in philosophy \cite{parfit_reasons_1984}, our analysis suggests that it may not be possible to ground a definitive notion of \textit{optimality} under changing preferences.
However, we have no choice but to confront the practical reality of preference changes and its consequences. Despite this, we remain cautiously optimistic: as humans, even without a unifying theory of assistance under changing selves, we are generally able to help others in ways we consider acceptable, 
despite the fact that what is ``helpful'' may often be ambiguous.
This suggests that creating AI assistants with similarly \textit{acceptable} trade-offs may also be possible. %
In this regard, 
there are likely lessons to be drawn from our coexistence with impossibility results in many other domains, such as preference aggregation across people \cite{mishra_ai_2023}.

Ultimately, we hope our work can provide a first step towards developing AI alignment practices that explicitly account for (and contend with) the changing and influenceable nature of human preferences.

Our main contributions can be summarized as follows:
\begin{enumerate}[leftmargin=17pt]
    \item We provide the formal language of Dynamic Reward-MDPs (DR-MDPs) for analyzing AI decisions and influence in settings with changing reward functions. %
    \item We show how existing AI alignment techniques may systematically incentivize questionable influence when used in dynamic-reward settings.
    \item 
    By comparing 8 natural notions of alignment, and showing that they all may either fail to avoid undesirable influence or are impractically risk-averse, we elucidate trade-offs that seem inherent to choosing any objective.
\end{enumerate}
\vspace{-0.2em}

\toclesslab\section{Dynamic Reward MDPs (DR-MDPs)}{subsec:dr-mdps}

MDPs have been extensively used to model AI decision-making under a static reward function. Instead, Dynamic Reward MDPs (DR-MDPs) model settings in which human reward functions can change and be influenced by the AI.

Recall the standard definition of an MDP: $\langle\mathcal S, \mathcal{A}, \mathcal{T}, R\rangle$, where $\mathcal{S}$ is the state space; $\mathcal{A}$ is the action space; $\mathcal{T}(s'|s,a)$ is the state transition function; and $R(s,a,s')$ is the reward function. The goal is to find a policy $\pi$ which maximizes the expected sum of rewards: $\mathbb{E}_{\pi}\big[\sum^{H-1}_{t=0} R(s_t, a_t, s_{t+1})\big]$ \cite{sutton_reinforcement_2018}. %
We now turn to defining DR-MDPs:

\begin{definition}
    \label{def:dr-mdp}
    A DR-MDP is a tuple $M = \langle\mathcal S, \Theta, \mathcal{A}, \mathcal{T}, R_\theta\rangle$:
    \begin{itemize} 
        \item $\mathcal{S}$ is a set of states (the state space).
        \item $\Theta$ is a set of reward parameterizations.
        \item $\mathcal{A}$ is a set of actions (the action space).
        \item $\mathcal{T}(s_{t+1}, \theta_{t+1} | s_t,\theta_t,a_t)$ is a transition function, which encodes both state and reward dynamics.
        \item $\{R_\theta(s_t, a_t, s_{t+1}) \}_{\theta \in \Theta}$ is a family of reward functions parameterized by $\theta \in \Theta$.
    \end{itemize}
\end{definition} 

Each $\theta \in \Theta$ can be thought of as the cognitive state of the human, which includes anything affecting their evaluation of state-action pairs (e.g. preferences, beliefs, emotions).\footnote{This technically makes DR-MDPs more general than how we framed them in \Cref{sec:intro}, which focused on preference changes. For more discussion on this, see \Cref{app:cognitive-states-will-be-messy}.} Crucially, unlike in MDPs, in DR-MDPs a single transition can be evaluated differently by different reward functions, i.e., it is possible for $R_\theta(s_t, a_t, s_{t+1})\neq R_{\theta'}(s_t, a_t, s_{t+1})$ if $\theta\neq\theta'$.
This makes it unclear which $\theta$ should be chosen to evaluate each transition $(s_t, a_t, s_{t+1})$. 
Importantly, even if one were to augment the state to include $\theta$, this would not resolve the normative questions around optimality which are central to our paper, as discussed in \Cref{app:context-vs-pov}. 
As a final note, throughout the paper we consider all cognitive states $\Theta$ to be \textit{reachable} (i.e., realizable under some policy).

\toclesslab\subsection{DR-MDP optimality and normative ambiguity}{subsec:normative-ambiguity}

Unlike MDPs, DR-MDPs may lack a clear notion of optimality: the different reward functions in a specific DR-MDP may disagree on what actions (and policies) are optimal, making optimality ambiguous.

\begin{definition}[Optimality with respect to $\theta$]\label{def:opt-wrt-theta}
We say a policy $\pi_\theta^*$ for a DR-MDP $M$ is \textbf{optimal with respect to} $\theta$ if: $\pi_\theta^* \in \arg\max_{\pi} \mathbb{E}_{\pi} \left[\sum_{t=0}^{H-1} R_{\theta}(s_t, a_t, s_{t+1})\right].$
\end{definition}
\begin{definition}[Normative ambiguity] 
\label{def:normative-ambiguity}
A DR-MDP is \textbf{normatively ambiguous} if there is no policy that is optimal with respect to all reachable reward functions $\Theta$, i.e. $\nexists \ \pi \in \Pi$ \ s.t. $\forall \ \theta \in \Theta$: $\pi \in \arg\max_{\pi'} \mathbb{E}_{\pi'} \Big[\sum_{t=0}^{H-1} R_{\theta}(s_t, a_t, s_{t+1})\Big].$%
\end{definition}
For normatively \textit{unambiguous} DR-MDPs, there will be one (or more) policies which are optimal with respect to \textit{all} $\theta$s, making it a natural choice for such policies to be considered optimal for the DR-MDP as a whole. Instead, in normatively ambiguous DR-MDPs, it will often be unclear what AI behavior is desirable and should be considered optimal.\footnote{Note that any MDP can be viewed as a DR-MDP with a single reward $\theta$---and is thus normatively unambiguous.} %

\Cref{fig:influencev2} describes a toy example in which Bob may be in one of two possible ``cognitive states'': $\theta_\text{natural}$ and $\theta_\text{influenced}$. There is a single state $s$, which is omitted. Bob's evaluations of the AI's actions change according to his cognitive state---and are represented by corresponding reward functions $R_{\theta_\text{natural}}$ and $R_{\theta_\text{influenced}}$.
At each timestep, the AI can choose to influence Bob's cognitive state to $\theta_\text{influenced}$, 
or do nothing, which has Bob go back to $\theta_\text{natural}$.
The optimal policy with respect to $\theta_{\text{natural}}$ 
would be to always choose the ``do nothing'' action.\footnote{See \Cref{app:example-formalisms} for the full formalism of any example.\label{ftnt:examples}} 
Instead, the optimal policy with respect to $\theta_{\text{influenced}}$ would be to always influence Bob, even if he starts off in the ``natural'' state. As there is no overlap in optimal policies, the DR-MDP is normatively ambiguous.\footnote{The attentive reader may have noticed that the normatively ambiguity of the DR-MDP from \Cref{fig:influencev2} relies on our choices of numerical values of the reward functions. We discuss the reasonableness of our examples in \Cref{app:justifying-reward-values}.}

\toclesslab\subsection{Evaluating behavior under normative ambiguity}{subsec:resolving-na}
Choosing a notion of optimality in normatively ambiguous DR-MDPs entails a normative choice: one must specify \textit{which} reward function(s) should be the target of alignment in spite of their differences in optimal policies. This also implicitly specifies which forms of AI influence are (sub)optimal. In this work, we only consider specifications of optimality expressible as utility functions $U(\xi)$ over trajectories $\xi = \{(s_t, \theta_t, a_t, s_{t+1}, \theta_{t+1})\}_{t=0}^{H-1}$. %

\begin{definition}[Optimality with respect to $U(\xi)$]\label{def:opt-wrt-U}
    In a DR-MDP $M$, we say a policy $\pi^*$ is optimal with respect to a utility function $U(\xi)$ if it maximizes the expected utility: $\pi^* \in \arg\max_\pi \mathbb{E}_{\xi \sim \pi}[U(\xi)].$   
\end{definition}
By choosing an objective $U(\xi)$, one can reduce a DR-MDP to an MDP equipped with a specific notion of alignment.\footnote{
This may require putting history in the state (\Cref{app:preference-mdp-reduction}).} %

\prg{Challenges with choosing $\mathbf{U(\xi)}$.} Considering the example from \Cref{fig:influencev2}, one may have strong normative intuitions that $\theta_\text{influenced}$ is an ``unreliable'' grounding for evaluating the AI's behavior, as it may seem like $\theta_\text{influenced}$ can only be the result of illegitimate AI influence. However, changing the narrative of the example without altering the DR-MDP's mathematical structure can affect one's normative intuitions: some alternate narratives can even make the influence seem desirable (see \Cref{fig:working-out}). This suggests that the ``correct'' notion of optimality for a DR-MDP may sometimes be unidentifiable from its formal structure alone, as discussed in \Cref{app:unidentifiability}. More broadly, the rest of the paper demonstrates the challenges of settling on a single $U(\xi)$ (or equivalently, a single notion of alignment) which generalizes favorably to all domains.

\prg{Risks of incorrectly choosing $\mathbf{U(\xi)}$.} The choice of $U(\xi)$ is crucial: choosing $U(\xi)$ ``incorrectly'', may lead to highly undesirable downstream outcomes. Notably, it can create incentives for the AI to influence the human to adopt certain reward functions over others, potentially relying on deceit, manipulation, or coercion \cite{kenton_alignment_2021,ward_honesty_2023,carroll_characterizing_2023}.%

\begin{figure}[t]
  \centering
  \vspace{-0.3em}
  \includegraphics[width=\columnwidth]{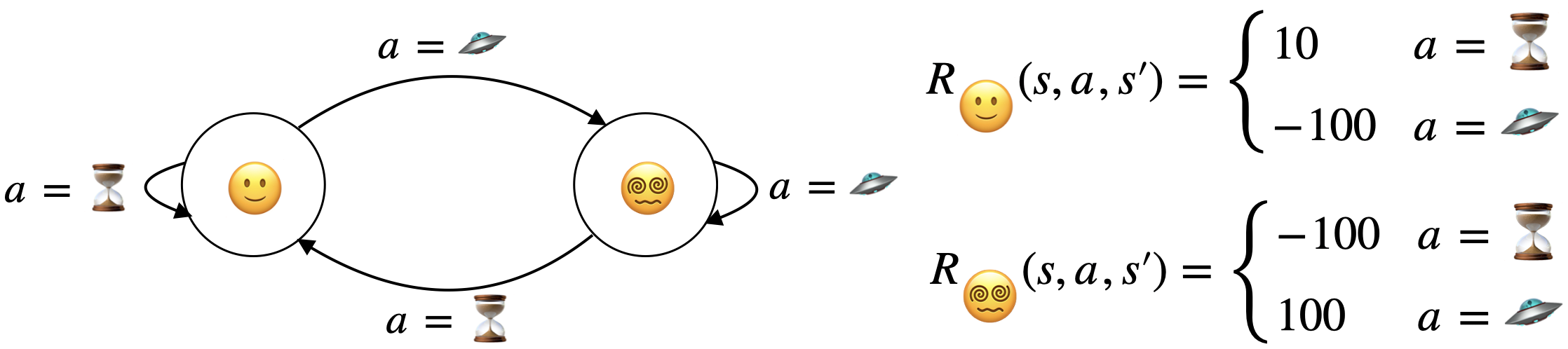}
  \vspace{-2em}
  \caption{\textbf{Conspiracy Influence DR-MDP.} The AI system can choose whether or not to expose Bob to conspiracies, which would turn him into a conspiracy theorist. Under his original preferences, Bob would want the system to \textit{never} show him conspiracies, even if he were to become a conspiracy theorist. Instead, if Bob were a conspiracy theorist, he would want the AI to \textit{always} show him such content, including if were to cease being a conspiracy theorist. Because there is no policy that maximizes both of Bob’s potential reward functions, the DR-MDP is \textit{normatively ambiguous}.}
  \label{fig:influencev2}
  \vspace{-1.4em}
\end{figure}

\toclesslab\section{Implicit Objectives of Current Alignment Techniques and their Influence Incentives}{subsec:current-AI}%

Most alignment techniques ultimately involve maximizing a static reward function, generally using the objective $\sum_{t=0}^{H-1} R(s_t, a_t, s_{t+1})$. 
However, AI systems are already deployed in domains in which users' preferences can change significantly over the course of their interactions with the system---as with recommender systems or chatbots \cite{rafailidis_modeling_2016,aggarwal_artificial_2023}. %
Seen through the lens of DR-MDPs, this means that the objective $U(\xi)$ corresponding to current alignment techniques is of the form $\sum_{t=0}^{H-1}R_{\theta}(s_t,a_t,s_{t+1})$, where the choice of $\theta$ for each timestep is not explicitly specified (and will depend on details of the training technique).

In this section, we discuss which choices of $\theta$ are implicitly made by the training methods of RL recommender systems and (one interpretation of) reward modeling: %
in particular, we argue that their optimization objectives are most similar to DR-MDP objectives that lead to potentially undesirable influence. %
In \Cref{sec:solutions}, we show that this issue is not unique to these two techniques: other current alignment approaches, such as many variants of RLHF, are most similar to DR-MDP objectives with similarly undesirable properties.

\toclesslab\subsection{Optimizing cumulative (real-time) rewards}{subsec:real-time}

If each timestep $t$ is evaluated according to the person's cognitive state at that timestep $\theta_t$, maximizing cumulative reward reduces to the \textit{real-time reward} DR-MDP objective: $\underset{\pi}{\max}\ \mathbb{E}_{\pi}[U_\text{RT}(\xi)] = \underset{\pi}{\max}\ \mathbb{E}_\pi\left[\sum_{t=0}^{H-1} R_{\theta_t}(s_t, a_t, s_{t+1})\right].$

While this might seem like an intuitively promising objective (``shouldn't we maximize the person's happiness as experienced at each point of time?''), we'll show that it can lead to questionable influence incentives.

\prg{RL Recsystems implicitly use $\mathbf{U_\text{RT}(\xi)}$.}
RL recommender systems maximize the cumulative reward objective $\sum_{t=0}^{H-1} R(s_t, a_t, s_{t+1})$ \citep{afsar_reinforcement_2021}.
The reward for each timestep $t$ is generally equated with the engagement (e.g. clicks) at that timestep, which is provided ``online'', i.e., from the point of view of the person's \textit{current} cognitive state $\theta_t$. Therefore $R(s_t, a_t, s_{t+1}) = R_{\theta_t}(s_t, a_t, s_{t+1})$, meaning that such systems are implicitly optimizing the real-time reward objective $U_\text{RT}(\xi) = \sum_{t=0}^{H-1} R_{\theta_t}(s_t, a_t, s_{t+1})$.\footnote{The correspondence to this DR-MDP objective, and all others we consider, depend on further simplifications (\Cref{app:objective-reductions}).} 
However, systems trained with $U_\text{RT}$ may be incentivized to influence users: intuitively, 1) users' preference dynamics are just one part of the environment dynamics that the system must model implicitly to maximize reward, and 2) it may be worth changing users' cognitive states (and corresponding reward functions) to ones that lead to higher future rewards 
\citep{carroll_estimating_2022,kasirzadeh_user_2023}.

\prg{$\mathbf{U_\text{RT}(\xi)}$ and the conspiracy influence example.} As an example of why real-time reward maximization can lead to undesirable incentives to influence users, consider the DR-MDP from \Cref{fig:influencev2}. 
For any horizon $>2$, the optimal policy with respect to $U_\text{RT}(\xi)$ will \textit{always} take the ``influence''' action, regardless of Bob's current cognitive state: Bob initially has the $\theta_{\text{natural}}$ cognitive state, leading the first ``influence'' action to receive $-100$ reward; however, later ``influence'' actions are evaluated by Bob under $\theta_\text{influenced}$ as worth $100$ reward, which quickly make up for the initial ``influence cost''.
The fact that the optimal policy under $U_\text{RT}(\xi)$ systematically chooses to turn Bob into a conspiracy theorist, despite him initially dispreferring it, seems objectionable.\footnote{We refer skeptics of the example to \Cref{app:criticisms}.}

We explore further issues with $U_\text{RT}(\xi)$ in \Cref{subsec:influence-and-horizon}, showing that under weak conditions optimizing real-time rewards over sufficiently long horizons will \textit{always} lead to influence incentives; however, shortening the horizon can make other influence incentives emerge.\footnote{An additional issue with $U_\text{RT}$ not discussed in \Cref{subsec:influence-and-horizon} is that optimal policies under $U_\text{RT}$ may even differ from the normatively unambiguous solutions of a DR-MDP---see \Cref{app:CIR-disagrees}.}

\toclesslab\subsection{Learning a reward model $\mathbf{R_{\theta_0}}$, then optimizing it}{subsec:initial-reward}

If instead each timestep $t$ is evaluated by the person's initial cognitive state $\theta_0$, the standard cumulative reward objective reduces to the \textit{initial reward} DR-MDP objective: $\underset{\pi}{\max}\ \mathbb{E}_{\pi}\left[U_\text{IR}(\xi)\right] = \underset{\pi}{\max}\  \mathbb{E}_\pi \left[\sum_{t=0}^{H-1} R_{\theta_0}(s_t, a_t, s_{t+1})\right].$

This may also seem like a promising objective, because ``by optimizing the human's initial wants, at least there won't be incentives to influence their future wants''. We show that this intuition is not only wrong---the resulting influence incentives can be arbitrarily bad according to $U_\text{RT}$.%

\prg{Reward modeling may use $\mathbf{U_\text{IR}(\xi)}$.} 
A common idea for aligning AI systems is based on a two-phase process: first, performing reward learning, and then optimizing the learned reward model \cite{leike_scalable_2018}. One possible interpretation of this is approach is that one is learning the reward function at some initial time $t=0$ (i.e., $R_{\theta_0}$), and then training an agent with RL to optimize cumulative reward using such a reward function, which is equivalent to using the $U_\text{IR}(\xi)$ objective.
Let's consider a therapy chatbot setting, in which we initially learn a personalized reward model for a user, Charlie, based on his current preferences $\theta_0$. We then train the system with RL over simulated multi-turn interactions maximize $U_\text{IR}(\xi)$, i.e. long-term reward as evaluated by the static reward model $R_{\theta_0}$.\footnote{Note that this setup is different from the example with Alice in the introduction, which is closer to using real-time or final reward.} If Charlie initially endorses unhealthy thought patterns, at deployment we may expect the chatbot to encourage them, even if he was bound to question their value later. More broadly, $U_\text{IR}$ will lead AI systems to only perform behaviors that would have been evaluated highly by the person as they were \textit{at reward learning time}, which can hinder (potentially important) changes in the user's preferences, values, and cognitive states.%

\prg{$\mathbf{U_\text{IR}(\xi)}$ can lead to ``reward lock-in''.\footnote{For more context on the term ``lock-in'', see \Cref{app:lock-in}.}} To better understand the incentives for AI systems trained to maximize the initial reward function, consider the example from \Cref{fig:influencev2} again. If Bob's initial cognitive state were $\theta_\text{normal}$, the optimal policy under $U_\text{IR}(\xi)$ would be to always take the ``noop'' action, which seems reasonable. However, if Bob's initial cognitive state were $\theta_\text{influenced}$, the optimal policy would be to always take the ``influence'' action (keeping Bob with $\theta_\text{influenced}$). In this case, even if Bob were to later somehow find himself with the $\theta_\text{natural}$ cognitive state which encodes a preference to not be influenced (say there is a small probability of a random transition), the optimal behavior according to $U_\text{IR}(\xi)$ would be to influence him back to $\theta_\text{influenced}$, ignoring his current reward function. More broadly, $U_\text{IR}(\xi)$ will entrench the ``desirable agent behaviors'' expressed at the time of the reward learning, even though later one might legitimately change their mind. Even periodically retraining the reward model wouldn't necessarily be sufficient:
once ``locked-in'', the person may simply re-express a preference to remain in the current state, e.g., Bob once in $\theta_\text{influenced}$.\footnote{We discuss re-training/planning further in \Cref{app:replanning}.}

\begin{figure}[t]
  \centering
  \vspace{-0.5em}
  \includegraphics[width=\columnwidth]{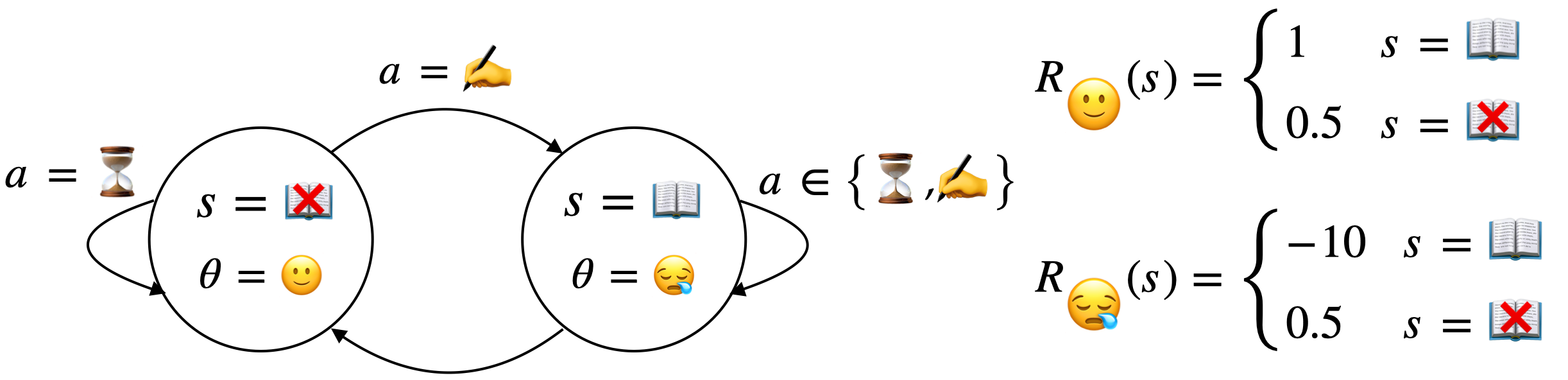}
  \vspace{-2em}
  \caption{\textbf{Writer's curse (adapted from \citet{parfit_reasons_1984}, p. 157).} Derek's greatest ambition is to be a poet, even if it wouldn't bring him happiness. Despite his ambition he does not pursue this path, though his AI assistant could motivate him to do so. Yet, should he embrace the life of a poet, he will find himself averse to it.}
  \label{fig:writer-curse}
  \vspace{-1.5em}
\end{figure}

\prg{$\mathbf{U_\text{IR}(\xi)}$ can lead to influence ``away from'' $\mathbf{\theta_0}$.} Maximizing the sum of rewards evaluated by the initial reward function $R_{\theta_0}$ need not lead to lock-in: surprisingly, it may even create reward influence incentives ``away from'' the optimized preferences $\theta_0$.\footnote{We define this more formally in \Cref{app:influence-towards}.} Intuitively, accessing the highest reward region of the state space as evaluated under $\theta_0$ might correlate with having a cognitive state $\theta'\neq\theta_0$, or even \textit{require} shifting to it. Consider the example from \Cref{fig:writer-curse}: maximizing reward as evaluated by $R_{\theta_0}$ entails encouraging Derek to become a poet, which causes his reward function to become $R_{\theta_1}$ (which dislikes being a poet!).

\prg{$\mathbf{U_\text{IR}(\xi)}$ can lead to arbitrarily poor real-time reward.} Note that getting Derek to become a poet and endlessly encouraging him to remain one (which is optimal under $U_\text{IR}$), would lead him to have poor reward evaluations under $U_\text{RT}$ in the resulting state ($-10$ per timestep), as he'd be unhappy remaining a poet. 
Indeed, one can easily construct examples in which maximizing $\theta_0$ will lead to an incentive to influence the reward function to be $\theta'\neq\theta_0$, where $\theta'$ would be arbitrarily unhappy with the actions taken in order to satisfy $\theta_0$. The upshot is that optimizing the initial-reward objective $U_\text{IR}$ could be arbitrarily bad from the perspective of the real-time reward $U_\text{RT}$.  Regardless of the limitations of real-time reward as an evaluation mechanism, this still seems normatively relevant: it seems undesirable for an AI system to lead someone to a state of constant unhappiness or dissatisfaction, solely to satisfy an initial goal that is no longer truly aligned with the person's current objectives.

\toclesslab\section{Influence and Optimization Horizon}{sec:influence-and-opt-horizon}

We showed that $U_\text{RT}(\xi)$ and $U_\text{IR}(\xi)$---which are implicitly optimized by some alignment techniques---may lead to policies that ``influence'' humans undesirably. We now formalize the notion of influence incentives more rigorously.%

\toclesslab\subsection{Formalizing influence and influence incentives}{sec:influence-incentives}
To say an AI system influenced a human, one must answer the question `relative to what?' We anchor our notion of influence to how the human's reward function would have evolved in the absence of the system.
For the purpose of this analysis, we assume the existence of an \textit{inaction policy} $\pinoop$ that we can compare to, which always takes a no-operation action $\anoop \in \mathcal{A}$. We discuss the reasonableness of this assumption in \Cref{sec:lim-and-disc}. %

\begin{definition}[\textbf{Natural reward evolution}]
Let $\xi^\theta = (\theta_0, \dots, \theta_{H-1})$ denote a `reward function trajectory'. The natural reward evolution of a DR-MDP is the distribution $\mathbb{P}(\xi^\theta | \pinoop)$ induced by the inaction policy $\pinoop$.\footnote{Any policy $\pi$ in a DR-MDP will induce a distribution over trajectories (and thus over reward function trajectories). Once one sets a policy, any DR-MDP can be modeled as a Markov Chain, for which one can compute probabilities of this kind.}
\label{def:natural-reward-evolution}
\end{definition}

\begin{definition}[\textbf{$\pi$ influences the reward}]
\label{def:rew-influence}
We say that a policy $\pi$ influences the reward in a DR-MDP $M$ if it induces a different reward evolution than the natural reward evolution, that is, if $\mathbb{P}(\xi^\theta | \pi) \neq \mathbb{P}(\xi^\theta | \pinoop)$.
\end{definition}

\begin{definition}[\textbf{Incentives for reward influence}]
\label{def:rew-influence-incentives}
We say that a notion of optimality $U(\xi)$ leads to incentives for reward influence in a DR-MDP if all policies optimal with respect to $U(\xi)$ influence the reward evolution, i.e., if $\mathbb{P}(\xi^\theta | \pi^*) \neq \mathbb{P}(\xi^\theta | \pinoop)$ for any optimal policy $\pi^*$.\footnote{This is a broader definition relative to prior ones grounded in Causal Influence Diagrams. See \Cref{app:CIDs} for a comparison.}
\end{definition}

In other words, if there are incentives for reward influence, maximizing the objective will always change the evolution of the reward function relative to the inaction policy.

\toclesslab\subsection{The relationship between horizon and influence}{subsec:influence-and-horizon}

Prior work has suggested that an AI system's influence incentives are often be related to the length of the optimization horizon used. However, different works offer contrasting views on this the form this dependence takes. \citet{krueger_hidden_2020} and \citet{carroll_characterizing_2023} argue to keep systems myopic in order to avoid influence incentives, whereas \citet{farquhar_path-specific_2022} and \citet{everitt_agent_2021} give examples of 1-timestep horizons that lead to influence incentives, and \citet{chen_reinforcement_2019} suggests that certain influence incentives can be removed by increasing the horizon.

We reconcile these intuitions by identifying three distinct ways that changing the optimization horizon may affect influence incentives. In the three headers that follow (which match the rows in \Cref{fig:opt-horizon}), we describe these effects and provide evidence for them.

\prg{A shorter/longer optimization horizon makes the system capable of fewer/more types of influence (\Cref{fig:opt-horizon}, top).}
As argued by prior work \cite{carroll_characterizing_2023}, as the horizon increases, new avenues for reward influence which required longer horizons may become available. Instead, avenues for reward influence present for shorter horizons will still be reachable, increasing the total number of influence avenues which the system can discover during training time. Indeed, any type of influence (e.g. teaching a user a complex concept, or manipulating their preferences) which would requires at least $N$ steps cannot even be executed during training by an agent whose optimization horizon is shorter than $N$. Inversely, we can eliminate some avenues for reward influence just by decreasing the horizon (but not all, as discussed later).

\begin{figure}[t]
  \centering
  \vspace{-0.2em}
  \includegraphics[width=\columnwidth]{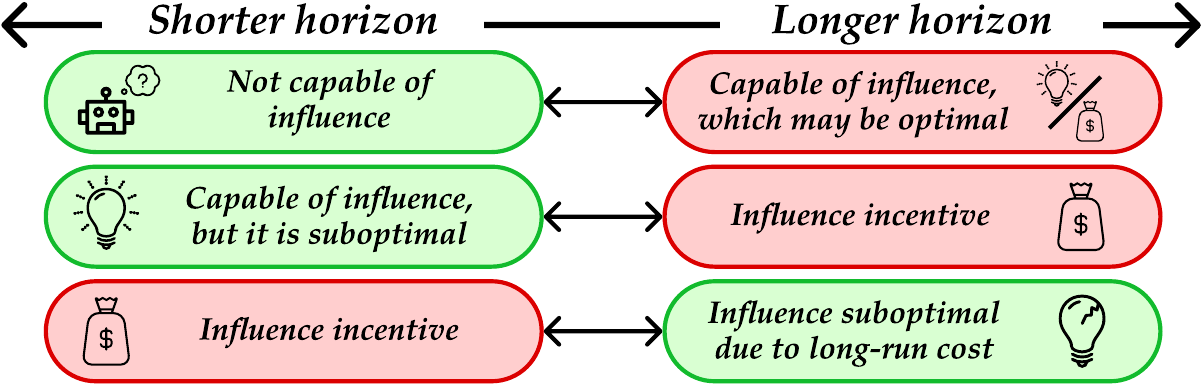}
  \vspace{-2em}
  \caption{How decreasing (or increasing) the optimization horizon may affect influence incentives. A specific kind of influence may exhibit any subset of these interactions.}
  \label{fig:opt-horizon}
  \vspace{-1.3em}
\end{figure}

\textbf{A shorter/longer optimization horizon can make influence less/more worthwhile (\Cref{fig:opt-horizon}, middle).}
Influencing the human's reward function often takes multiple timesteps and has an associated ``opportunity cost'': spending that time executing a plan to influence the human may yield less short-term reward than a non-influencing policy. In such cases, there will only be an influence incentive if the ultimate increase in reward due to influence surpasses its opportunity cost.
However, the rewards from influence may be delayed or only surpass its opportunity costs given enough time.
If the optimization horizon is short (but still long enough for the agent to influence), there might not be enough time to reap sufficient benefits of influence to outweigh its opportunity cost. 
As an example of influence that follows this pattern, consider influencing Bob to become a conspiracy theorist \Cref{fig:influencev2}. The system is capable to do so in one step, but it only becomes ``worth it'' under $U_\text{RT}$ starting from the 3rd timestep of influence.

We support this intuition with a theoretical result
which applies to a relatively broad class of DR-MDPs, providing a sufficient condition for reward influence to be optimal when considering sufficiently long horizons.%

\begin{restatable}[label={def:2-reward}]{definition}{tworewarddefinition}
    \label{def:2-reward}
    We say a DR-MDP $M$ is \textbf{2-reward} if:%
    \begin{itemize}
        \item $\Theta=\{\theta_{\not\Delta}, \theta_{\Delta}\}$, and the initial state and reward parameterization are respectively $s_0$ and $\theta_{\not\Delta}$.
        \item $\mathcal{T}$ is deterministic, and to transition to $\theta_\Delta$ one must take an ``influence action'' $a_\Delta$ in a reachable state $s_\Delta$. Once in $\theta_\Delta$, one cannot transition back to $\theta_{\not\Delta}$.
    \end{itemize}    
\end{restatable}

\begin{figure}[t]
  \centering
  \vspace{-0.2em}
  \includegraphics[width=\columnwidth]{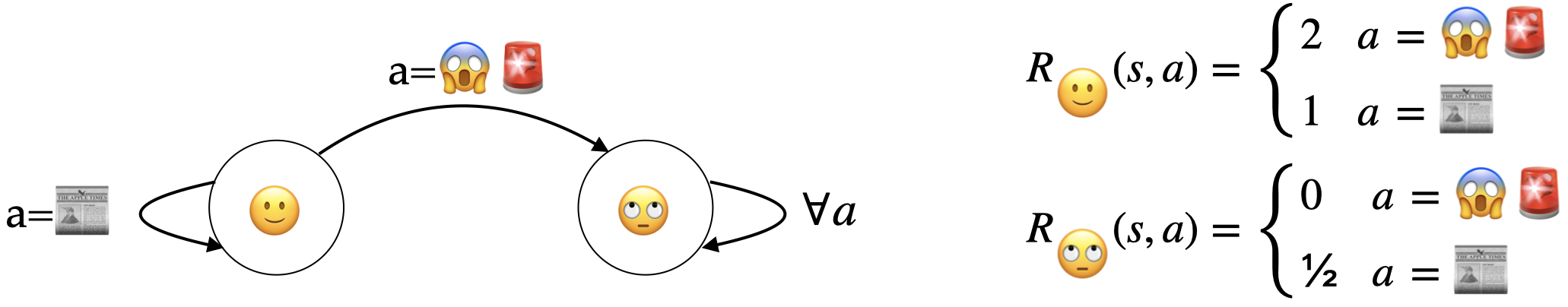}
  \vspace{-2.5em}
  \caption{\textbf{Clickbait DR-MDP.} Giving the user clickbait---which temporarily leads to higher reward---makes users disillusioned about the quality of the recommendations, leading to lower long-term user reward. If replanning at every timestep taking the myopically optimal action (optimal under horizon 1), one would always choose clickbait, but using longer planning horizons one wouldn't.}
  \label{fig:clickbait}
  \vspace{-1.3em}
\end{figure}

Let the average infinite-horizon $U_\text{RT}$-reward\footnote{Adapted from \citet{sutton_reinforcement_2018}---see \Cref{app:avg-reward}.} be defined as
$\bar{r}(\pi, s, \theta) = \lim_{h\rightarrow\infty}\frac{1}{h} U_\text{RT}(\xi_{0:h} | \pi, s_0=s, \theta_0=\theta)$. Let $s'_\Delta$ be the successor state to taking the influence action $a_\Delta$ in state $s_\Delta$, and $\Pi_{\not\Delta}$ be the space of policies under which $\theta_\Delta$ is never realized. We can now state the theorem:

\begin{restatable}[label={thm:deterministic-influence-optimal-avg-rew}]{theorem}{exampletheorem}
    \label{thm:deterministic-influence-optimal-avg-rew}
    In any finite 2-reward DR-MDP, if there exists a policy $\pi$ such that
    $$\bar{r}(\pi, s_{\Delta}', \theta_\Delta) - \max_{\pi_{\not\Delta}\in\Pi_{\not\Delta}} \bar{r}(\pi_{\not\Delta}, s_0, \theta_{\not\Delta})> \epsilon,$$
    then $U_\text{RT}$ will lead to incentives for reward influence (as in \Cref{def:rew-influence-incentives})
    for a sufficiently large planning horizon $H$.
\end{restatable}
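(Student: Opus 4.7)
The plan is to exhibit an explicit influencing policy that strictly beats every policy in $\Pi_{\not\Delta}$ on $U_\text{RT}$ once $H$ is large enough; this forces every optimal policy to play $a_\Delta$ at $s_\Delta$ and thereby (assuming $\anoop \neq a_\Delta$, so that $\pinoop \in \Pi_{\not\Delta}$) to induce a reward-function trajectory distinct from the no-op's. First I would build the candidate. By the reachability assumption there is an action sequence driving $s_0$ to $s_\Delta$ without playing $a_\Delta$, and since $\mathcal{T}$ is deterministic on a finite state space, some such path has length $N_\Delta \leq |\mathcal{S}|$. Define $\pi_\Delta$ to follow this path, play $a_\Delta$ at $s_\Delta$, and thereafter mimic the hypothesized $\pi$ starting from $(s'_\Delta, \theta_\Delta)$. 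Let $R_{\max} := \max_{\theta, s, a, s'} |R_\theta(s,a,s')|$.

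Next I would convert the average-reward hypothesis into a statement about cumulative reward over $H$ steps. Because the product space $\mathcal{S} \times \Theta$ is finite and $\mathcal{T}$ is deterministic, any deterministic policy's trajectory enters a cycle within $|\mathcal{S}||\Theta|$ steps and thereafter accrues exactly $\bar{r}$ per step on each cycle traversal. A standard counting argument produces a constant $C = C(|\mathcal{S}|, |\Theta|, R_{\max})$ with
\begin{equation*}
\bigl| U_\text{RT}(\xi^{\pi'}_{0:h} \mid s_0 = s, \theta_0 = \theta) - h\,\bar{r}(\pi', s, \theta) \bigr| \leq C
\end{equation*}
for every deterministic policy $\pi'$ and every starting pair $(s,\theta)$. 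A deterministic optimal policy exists by backward induction on the augmented MDP, so it suffices to compare deterministic policies. Combining the bound with the construction gives $U_\text{RT}(\xi^{\pi_\Delta}_{0:H}) \geq (H - N_\Delta - 1)\,\bar{r}(\pi, s'_\Delta, \theta_\Delta) - (N_\Delta + 1) R_{\max} - C$, while for any $\pi_{\not\Delta} \in \Pi_{\not\Delta}$ starting at $(s_0, \theta_{\not\Delta})$ the reward is at most $H\,\bar{r}(\pi_{\not\Delta}, s_0, \theta_{\not\Delta}) + C \leq H\bigl(\bar{r}(\pi, s'_\Delta, \theta_\Delta) - \epsilon\bigr) + C$ by hypothesis. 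Subtracting, the gap is at least $H\epsilon - O(|\mathcal{S}| R_{\max} + C)$, which is strictly positive once $H$ exceeds an explicit function of $\epsilon, |\mathcal{S}|, |\Theta|, R_{\max}$. Hence no non-influencing policy is optimal, so every optimal $\pi^*$ eventually transitions into $\theta_\Delta$, whereas $\pinoop$ stays in $\theta_{\not\Delta}$, yielding $\mathbb{P}(\xi^\theta \mid \pi^*) \neq \mathbb{P}(\xi^\theta \mid \pinoop)$.

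The main obstacle is producing the uniform transient bound $C$: it must hold simultaneously across all deterministic policies and both cognitive states, so the proof needs a careful worst-case bound on the length of a trajectory's pre-cycle portion combined with the reward magnitude. Once that lemma is in hand, the remaining steps are essentially bookkeeping: the $O(|\mathcal{S}|)$-step detour to $s_\Delta$ is absorbed in the slack, and the restriction to deterministic policies is justified by standard finite-horizon dynamic programming on the product MDP.
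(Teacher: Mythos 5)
Your proposal is correct and follows essentially the same route as the paper's proof: construct a policy that takes a shortest non-influencing path to $s_\Delta$, plays $a_\Delta$, and thereafter follows $\pi$; argue that the finite prefix is irrelevant to the average reward (your uniform transient bound $C$ plays the role of the paper's reindexing lemma); and convert the $\epsilon$ gap in average reward into a cumulative-reward gap that dominates for sufficiently large $H$, uniformly over the finite set $\Pi_{\not\Delta}$. The one detail the paper handles that you elide is first replacing the hypothesized (possibly stochastic) $\pi$ by a deterministic counterpart with at least the same average reward, so that your cycle-based transient bound actually applies to the tail of $\pi_\Delta$.
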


\textbf{A shorter/longer optimization horizon can hide/reveal long-term costs of influence (\Cref{fig:opt-horizon}, bottom).}
From \Cref{thm:deterministic-influence-optimal-avg-rew}, one might conclude that it is best to use short optimization horizons, as they may remove and disincentivize influence that would be optimal with a longer horizon.
However, some kinds of influence can be optimal even with the shortest possible meaningful horizon ($H=1$): for example, consider the scenario from \Cref{fig:clickbait}, which models clickbait in myopic recommender systems.
This example also shows that influence with negative long-term effects \emph{may only be optimal for short horizons}: clickbait may increase a user's immediate engagement, but it erodes their future trust in the system. When influence has negative long-term effects which are eventually reflected by the reward, a longer optimization horizon will allow the system to recognize the suboptimality of that influence. Avoiding clickbait was indeed one of the motivations for YouTube to explore using longer horizons \cite{chen_reinforcement_2019}.

\begin{table*}[t]
    \vspace{-1.2em}
    \centering
    \caption{For each objective in \Cref{tab:maximization_objectives}, we provide motivating intuitions, weaknesses, and prior works that implicitly use a similar objective.}
    \label{tab:intuition_strengths_weaknesses}
    \begin{adjustbox}{width=\textwidth,center} 
    \begin{tabular}{p{5.7cm}p{9.1cm}p{11.4cm}}
        \toprule
        \textbf{Name / Implicitly similar setups} & \textbf{(Potentially Flawed) Motivating Intuition} & \textbf{Weaknesses \& Limitations} \\
        \midrule
        \textbf{Real-time Reward}\newline RL recsystems \cite{afsar_reinforcement_2021}, TAMER \cite{hutchison_training_2013}, and others & \textit{``Only the evaluation of the current self (and reward function) should matter for each moment, as they are the one experiencing that moment.''} & \textbf{Likely influence incentives:} Despite appearing familiar and well-grounded, as shown in \Cref{subsec:real-time,subsec:influence-and-horizon}, we expect this objective to often lead to highly undesirable incentives for reward influence. \\
        \midrule
        \textbf{Final Reward}\newline 
        RLHF \cite{christiano_deep_2017}, including for LLMs \cite{ouyang_training_2022}  
        & \textit{``The best possible evaluation of a trajectory is retrospective, as people's wants and evaluations are generally refined over time.''}  & \textbf{Carte blanche for influence incentives}: the motivating intuition doesn't account for influence. For example, in \Cref{fig:influencev2}, even with a horizon of 1, it's optimal to manipulate under final reward maximization. \\
        \midrule
        \textbf{Initial Reward}\newline \citet{everitt_reward_2021}; RL for LLMs \cite{hong_learning_2023}; or \citet{parfit_reasons_1984}; %
        & 
        \textit{``If changes to the human's reward function are completely ignored by the optimization objective, there should be no incentive for the agent to influence it.''}
        & \textbf{Likely reward lock-in, possibility of influence incentives, and arbitrarily bad real-time reward.} The motivating intuition given to the left is incorrect in all the ways argued in \Cref{subsec:initial-reward}. \\
        \midrule
        \textbf{Natural Shifts Reward}\newline\citet{carroll_estimating_2022,farquhar_path-specific_2022} & \textit{``People's reward evolves even in the absence of the AI: to avoid lock-in
        one could try grounding evaluations
        in the reward functions that occur under the natural reward evolution.''} & \textbf{Gives up on the AI enabling human to improve their reward function relative to its natural evolution, and can still lead to undesirable influence incentives}, even away from the natural evolution, e.g. as in the example from \Cref{fig:writer-curse}.\\
        \midrule
        \textbf{Constrained RT Reward}\newline Ours & \textit{``By constraining the policy to induce the natural reward evolution, we fully ensure that there won't be influence, while allowing to optimize real-time reward locally.''}  & \textbf{Gives up on the AI enabling human to improve their reward function relative to its natural evolution, and might be impractically conservative:} given its conservativeness, the objective might limit behaviour to be the same or similar to $\pinoop$. \\
        \midrule
        \textbf{Myopic Reward}\newline Myopic recsys \cite{thorburn_how_2022}; RLHF for LLMs  \cite{ouyang_training_2022}; & \textit{``As reward influence incentives arise from the AI system exploiting the fact that it can affect future rewards, let's simply make the system unaware of the entire future.''} &  \textbf{Myopic systems can still have influence incentives} (e.g., see the discussion in \Cref{sec:influence-and-opt-horizon})  \textbf{and are less capable than longer-horizon counterparts.}  Moreover, it is often non-obvious when a system is truly myopic, as argued in \Cref{app:myopic-or-not}. 
        \\
        \midrule
        \textbf{Privileged Reward} \newline CEV \cite{yudkowsky_coherent_2004}; correcting for cognitive biases \cite{evans_learning_2015} & \textit{``If one is convinced that a specific reward $\theta^*$ is the `correct' one for a setting, we should evaluate trajectories based on that single reward function.''} & \textbf{Requires normative choice, and can still lead to influence away from $\theta^*$.} Identifying the ``correct'' objective requires taking a normative stance (\Cref{subsec:normative-ambiguity}). Optimizing $\theta^*$ can still lead to influence incentives away from it (e.g., \Cref{fig:writer-curse}). \\
        \midrule
        \textbf{ParetoUD}\newline Ours & \textit{``All other objectives violate the unambiguous desirability (UD) property: their optimal policies can be worse than the inaction policy for some of the reward functions. This is unnecessarily risky---let's search for a Pareto Efficient policy satisfying UD.''} & \textbf{Satisfying UD may be overly restrictive:} depending on the level of disagreement between the different reward functions of a DR-MDP, the only policy satisfying UD might be the inaction policy $\pinoop$, as in the examples from \Cref{fig:working-out,fig:influencev2}. \\
        \bottomrule
    \end{tabular}
    \end{adjustbox}
    \vspace{-1.3em}
\end{table*}

\begin{table}
    \centering
    \vspace{-1.2em}
    \caption{DR-MDP objectives (notions of alignment) we compare.}
    \label{tab:maximization_objectives}
    \begin{adjustbox}{width=\columnwidth,center} 
    \begin{tabular}{lc}
        \toprule
        \textbf{Objective Name} & \textbf{Optimization Problem} $\max_\pi \mathbb{E}_{\xi \sim \pi}[U(\xi)]$ \\
        \midrule
        Real-time Reward & $\underset{\pi}{\max}\ \mathbb{E} \big[\sum_{t = 0}^{H-1} R_{\theta_{\textcolor{orange}{t}}}(s_t, a_t, s_{t+1})\big]$ \\
        Final Reward & $\underset{\pi}{\max}\ \mathbb{E} \big[\sum_{t = 0}^{H-1} R_{\theta_{\textcolor{orange}{H}}}(s_t, a_t, s_{t+1})\big]$ \\
        Initial Reward & $\underset{\pi}{\max}\ \mathbb{E} \big[\sum_{t = 0}^{H-1} R_{\theta_{\textcolor{orange}{0}}}(s_t, a_t, s_{t+1})\big]$ \\
        Natural Shifts Reward & $\underset{\pi}{\max}\ \mathbb{E} \big[\sum_{t = 0}^{H-1} \textcolor{orange}{\sum_{\theta}} \textcolor{orange}{\mathbb{P}(\theta_t = \theta | \pinoop)} R_{\textcolor{orange}{\theta}}(s_t, a_t, s_{t+1})\big]$ \\
        Constrained RT Reward & $\underset{\pi \textcolor{orange}{\text{ s.t. } \mathbb{P}(\xi^\theta | \pi) = \mathbb{P}(\xi^\theta | \pinoop)}}{\max}\ \mathbb{E} \big[\sum_{t = 0}^{H-1} R_{\theta_{\textcolor{orange}{t}}}(s_t, a_t, s_{t+1})\big]$ \\
        Myopic Reward & $\underset{\textcolor{orange}{a_t}}{\max}\ \mathbb{E}\big[ R_{\theta_t}(s_t, a_t, s_{t+1})\big]$ \\
        Privileged Reward & $\underset{\pi}{\max}\ \mathbb{E} \big[\sum_{t=0}^{H-1} R_{\textcolor{orange}{\theta^{*}}}(s_t, a_t, s_{t+1})\big]$ \\
        ParetoUD & $\text{Find } \pi \ s.t.\ PE(\pi) \land UD(\pi)$\\
        \bottomrule
    \end{tabular}
    \end{adjustbox}
    \vspace{-1.3em}
\end{table}

Overall, the analysis above (deepened in \Cref{app:optimization-horizon}) shows that there is no guaranteed way to avoid \textit{all} influence incentives by just changing the horizon: domain-specific trade-offs between system capabilities and risks of undesirable influence may exist for both short and long horizons.
Indeed, the optimality of a specific form of influence can be related in many possible ways to the horizon used, as shown exhaustively in \Cref{tab:possible-influence-horizon-properties}. %

\toclesslab\section{Comparing Optimality Criteria for Influenceable-Reward Settings}{sec:solutions}

Having concluded that the optimization horizon is no panacea for removing influence incentives, we now try to address the problem at its root, asking what it would take to design a DR-MDP objective that specifically accounts for reward function dynamics and the possibility of influence.

Any choice of $U(\xi)$ must specify 
which reward function(s) evaluate each state-action pair $(s_t, a_t)$ in a trajectory $\xi$: should one only consider the reward function realized at that timestep, $R_{\theta_t}$, like $U_\text{RT}$? What about earlier reward functions ($R_{\theta_0}, \dots, R_{\theta_{t-1}}$), which may strongly disagree with the choice at timestep $t$, or later ones ($R_{\theta_{t+1}}, \dots, R_{\theta_{T}}$), which might have been unduly influenced? Should one also consider reward functions $R_\theta$ for cognitive states $\theta$ which were not realized in $\xi$, but could have been reached?

In \Cref{tab:maximization_objectives,tab:intuition_strengths_weaknesses} we present the maximization problems, motivations, and limitations for various intuitive DR-MDP objectives. %
For each, we also list prior works that implicitly use a similar objective (see \Cref{app:objective-reductions} for considerations on the correspondences).
See \Cref{tab:all-optimality} for the optimal policies across all examples and objectives we consider.%

\textbf{Real-time Reward.} We provide an intuitive motivation for this objective in \Cref{tab:intuition_strengths_weaknesses}, but we've already shown how it may lead to undesirable influence in \Cref{subsec:current-AI,subsec:influence-and-horizon}.

\textbf{Final Reward.} While the final-reward objective has a plausible motivation (\Cref{tab:intuition_strengths_weaknesses}), it will likely lead to even more of a carte blanche for influence incentives than real-time reward, as the evaluations from initial cognitive states are entirely ignored. 
The standard approach for performing RLHF with LLMs \cite{ouyang_training_2022} may be viewed to be similar to this objective, as it involves obtaining retrospective human preference comparisons of LLM outputs.
Sycophantic \cite{sharma_towards_2023} and deceptive \cite{lang_when_2024} AI behaviors may be also be understood in this way: without taking special measures, RLHF training may cause the LLM to try to persuade annotators to choose its current response by any available means, such as flattery, authoritativeness, or hiding information.

\textbf{Initial Reward.} Using initial-reward ($U_\text{IR}$) attempts to make the system ``unaware'' of its capacity to influence the reward \cite{everitt_reward_2021}. While this removes ``direct'' influence incentives (see \Cref{app:CIDs}), it does not preclude the possibility of undesirable influence, 
as shown in \Cref{subsec:current-AI}.
In \Cref{app:objective-reductions} we discuss how a plausible set up for long-horizon RL on LLMs would be similar to this objective.

\textbf{Natural Shift Reward.} One downside of $U_\text{IR}$ is that it doesn't allow for ``progress'' in a person's cognitive state (i.e., it can lead to lock-in). One approach, grounded in people's ``natural cognitive progress'', is to base trajectory evaluations on the reward functions one \textit{would have had} under their natural reward evolution. However, the natural reward evolution is not guaranteed to be ``perfect'', so using this objective one is giving up on improving \textit{beyond} the natural evolution. This means that undesirable influence (or influence away from the natural evolution) may still occur insofar if it's incentivized by reward functions in $P(\xi^\theta | \pinoop)$, as in \Cref{fig:writer-curse,fig:clickbait}---see \Cref{tab:all-examples}.

\textbf{Constrained RT Reward.} Given that even grounding in the natural reward evolution is insufficient to remove all influence incentives, one could explicitly add the lack of influence as a constraint in the maximization problem. We do so in conjunction with the real-time reward objective, as $U_\text{RT}$ seems like a plausible objective once one isn't concerned about influence. Unfortunately, this may make the system overly conservative: in most examples we consider, $\pinoop$ is the only policy considered optimal (\Cref{tab:all-optimality}).

\textbf{Myopic Reward.} A more drastic approach to make a system ``unaware'' of its capacity for influence is to use a fully myopic objective (i.e., using a horizon of 1). However, this will still not guarantee the removal of all influence incentives (as discussed in \Cref{subsec:influence-and-horizon}), and may reduce system performance unacceptably. %

\textbf{Privileged Reward.} This objective corresponds to maximizing cumulative reward with respect to a single, ``privileged'', reward function. Insofar as one picks a reward function which leads to good downstream behavior, there is nothing wrong with this objective, but as discussed in \Cref{subsec:resolving-na}, it is challenging to do so for complex settings.%

\toclesslab\subsection{ParetoUD and unambiguously desirable influence}{sec:desirable-influence-individual-rationality}

Many of the objectives considered so far attempt to avoid influence incentives entirely given the challenges of specifying which influence is legitimate \citep{ammann_problem_2024}. Instead of avoiding influence, we propose an alternate approach which still sidesteps the need to specify exactly what influence is (and isn't) legitimate or beneficial: ensuring the deployed policy leads to \textit{unambiguously better outcomes than the status quo of the system not existing}. 
Indeed, we don't necessarily want to avoid all AI influence: 
for some settings, beneficial influence may be the main value proposition of the AI in the first place, as with educational assistants \cite{bassen_reinforcement_2020}, or therapy chatbots \cite{aggarwal_artificial_2023}.
To ground the notion of Unambiguous Desirability (UD) of a policy, let $EU_\theta(\pi) = \mathbb{E}_{\pi} \left[\sum_{t=0}^{H-1} R_{\theta}(s_t, a_t, s_{t+1})\right]$. Then:

\begin{definition}[\textbf{Unambiguous Desirability}]
    A policy $\pi$ is unambiguously desirable if all reward functions prefer $\pi$ to the inaction policy, i.e.
    $EU_{\theta}(\pi) \geq EU_{\theta}(\pinoop) \ \forall \theta\in\Theta$.
    \label{def:UD}
\end{definition}

UD policies may still lead to influence, but only do so if all reward functions (weakly) agree that the such influence is beneficial.
Note that the inaction policy will always belong to the space of policies which satisfy UD ($\pinoop\in\Pi_\text{UD}$), meaning that UD policies are not guaranteed to be any better than $\pinoop$. To guarantee to pick a better policy from $\Pi_\text{UD}$ than $\pinoop$ (if it exists), a natural way to break ties is to restrict to the Pareto Efficient policies in $\Pi_\text{UD}$:

\begin{definition}[\textbf{Pareto Efficiency in $\Pi_{UD}$}]
    We say a policy $\pi \in \Pi_{UD}$ is Pareto Efficient is there does not exist any policy $\pi' \in \Pi_{UD}$ such that $EU_{\theta}(\pi') \geq EU_{\theta}(\pi)$ for all $\theta\in\Theta$ and $EU_{\theta}(\pi') > EU_{\theta}(\pi)$ for at least one $\theta$.
    \label{def:PE}
\end{definition}

\textbf{Constraining to Pareto Efficient policies within the set of UD policies} $\Pi_{UD}$. By only considering $\pi \in \Pi_{UD}$, we can ensure that we are both maximizing some notion of reward---potentially by taking advantage of the opportunities for influence that all reward functions agree is unambiguously beneficial---while guaranteeing no harm to any ``self'' by construction. This leads to the ParetoUD objective from \Cref{tab:maximization_objectives}, discussed further in \Cref{app:paretoUD}. Importantly, all the other objectives from \Cref{tab:maximization_objectives} can lead to policies which don't satisfy UD---implying that in some settings the system's very existence will be harmful according to at least one of the reward functions. %

\prg{Limitations of ParetoUD.} The main downside of the resulting ParetoUD objective is its conservatism: in many domains, $\pinoop$ may be the only policy satisfying the UD property. For any AI action ($\neq\anoop$) to be optimal under this objective, the normative ambiguity of the domain has to be in some sense ``limited''. If there is no latitude for unambiguously good actions, this may warrant reflection on whether the system should be built at all---showing once more that normative judgments about what influence is aligned are hard to avoid.%

\toclesslab\section{Related Work}{sec:related-work}

\prg{Preference changes in AI.} While there is growing recognition of the importance of accounting for influence \cite{malvone_shape_2023,hendrycks_overview_2023}, manipulation \cite{carroll_characterizing_2023}, and preference changes \citep{franklin_recognising_2022}, there has been limited prior work focusing on operationalizing what should be optimized under preference changes. While some have suggested to aim for preference stationarity \citep{dean_preference_2022}, %
most other prior work which accounts for preference change generally takes either a descriptive stance \cite{curmei_towards_2022,hazrati_recommender_2022}, or an explicit normative stance on what the correct notion of optimality is for their specific setting \cite{evans_learning_2015,sanna_passino_where_2021}.

\textbf{Influence incentives.} 
While the point that standard RL can lead to ``feedback tampering'' incentives is not new \citep{everitt_reward_2021}, most work so far has focused on the limitations of $U_\text{RT}$-like objectives and on removing all influence incentives \cite{farquhar_path-specific_2022,carroll_estimating_2022,kasirzadeh_user_2023}. Similarly, there has been work on training AI systems to beneficially influence humans in settings with unambiguous notions of optimality \cite{hong_learning_2023,xie_learning_2020,kim_influencing_2022,hardt_performative_2022}. Instead, we study the challenges associated with choosing \textit{any} notion of optimality in settings of (potentially legitimate) reward change.

\textbf{Philosophy.} If the goal of assistance is to maximize the user's welfare, the correct choice of objective closely depends on what is welfare under changing selves \cite{pettigrew_choosing_2019,paul_as_2019}, and what preference or value changes are legitimate \cite{ammann_problem_2024}. These questions are still debated \cite{strohmaier_preference_2024}.

For more in-depth connections to related work from philosophy, economics, social choice, and AI, see \Cref{app:related-work-phil-econ}.

\toclesslab\section{Limitations and Discussion}{sec:lim-and-disc}

\prg{Learning $\Theta$ and its dynamics.} Throughout the paper, we assumed that the human reward functions and their dynamics were known. In practice, they would have to be learned, which would require reward learning techniques that account for reward dynamics, %
and committing to a choice of what counts as a ``cognitive state'' $\theta$ relative to the external state $s$ (\Cref{app:learning-reward}). %
However, this limitation strengthens our conclusion: it shows that even with full knowledge of human (stated) preferences, there seems to be no neat resolution for the normative challenges that arise, indicating that this difficulty is inherent to handling preference changes themselves rather than an artifact of uncertainty over them.

\prg{Existence of $\anoop$.} Similarly to other AI safety work \cite{krakovna_penalizing_2019,farquhar_path-specific_2022}, some of our definitions assume there is a $\anoop$ action (and $\pinoop$). 
Despite the challenges in grounding notions of $\pinoop$ in practice, we believe that formulating our notion of influence in terms of $\pinoop$ can still be helpful for theoretically analyzing the properties of AI systems. If it is especially hard to operationalize $\pinoop$ for a given setting, it becomes even more important to be cautious about possibilities for influence in such a domain. %
Approximating notions of $\pinoop$ has been attempted by various prior works \cite{carroll_estimating_2022,huszar_algorithmic_2021}, and has been discussed extensively in the ``algorithmic amplification'' literature (\Cref{appsubsec:AI}).

\prg{Unreachable $\theta$s, and meta-preferences.} To simplify our analysis, we restrict it to reachable cognitive states. However, cognitive states which we aspire to may \textit{not} be reachable in practice \cite{firth_ethical_1952,yudkowsky_coherent_2004}. Accounting for non-reachable reward functions would require additional complexity, which would only increase the need for challenging normative judgements (see \Cref{app:reachable}). Another limitation of our formalism is that it can only express preferences for AI influence implicitly, by having influence be optimal under a human's reward function. 
Allowing such meta-preferences \cite{george_preference_2001,franklin_recognising_2022} to be expressible explicitly, i.e., allowing reward functions to \textit{evaluate transitions between different reward functions}, may be useful to more clearly capture notions of the legitimacy of influence and personal autonomy.

\prg{Simplicity of our examples.} While our example DR-MDPs are simple, they are sufficient for our purposes: they provide proofs of existence of failure cases for each of the objectives we consider (by being overly conservative or leading to undesirable influence). 
That being said, extending analyses of influence incentives to more realistic settings with real human data is an important direction for future work in developing agents that behave acceptably in practice. %

\prg{Misaligned economic pressures.} We show that even if AI systems were solely designed with users' welfare in mind, is it not clear how to avoid undesirable influence without while retaining capabilities. %
As real-world AI systems will instead be developed under strong economic incentives that will often be at odds with users' well-being \cite{susser_online_2018}, this gives additional reason for concern.%

\prg{Changing societal values and norms.} The DR-MDP model seems easily adaptable to multi-agent settings where the changing reward functions correspond to the changing aggregated preferences of a collective, rather than the preferences of a single human. This may be a fruitful avenue of investigation, which expands on prior work from social choice \cite{parkes_dynamic_2013}.

\toclesslab\section{Conclusion}{sec:conclusion}

Using the formal language of DR-MDPs, we aimed to demonstrate that by not accounting for the changing and influenceable nature of human preferences, the current paradigm for AI alignment leaves underdetermined fundamental questions about which preferences should be optimized and what influence is unacceptable. 
We showed that current techniques lead to potentially undesirable influence incentives and investigated alternate notions of AI alignment that account for reward changes from the outset.

Ultimately, our analysis suggests that, for settings with changing preferences, we will need to make difficult trade-offs between (a) conservatively but unambiguously adding value (at the risk of privileging inaction), and (b) making challenging normative calls about which kinds of influence are acceptable (and running the risk of causing undesirable influence). While influence incentives from long optimization horizons may be most worrying, there are already documented instances of undesirable influence---such as sycophancy in LLMs \cite{sharma_towards_2023} or clickbait in recommenders \cite{stray_what_2021}---that are consistent with what one would expect when viewing current practices through the lens of DR-MDPs. By providing a formalism for grounding analyses about settings with changing rewards and clarifying the levers at the disposal of system designers, we hope to lay the conceptual foundation for empirical work to monitor these issues at scale and build future AI systems that navigate AI alignment trade-offs acceptably.

\tocless\section{Broader Impacts}

\prg{The fact that people are influenceable is already known and leveraged in practice by relevant industries.} The fact that people are influenceable is already well established and leveraged by many industries with mixed ethical connotations: marketing and advertisements \cite{pickettbaker_proenvironmental_2008,ayanwale_influence_2005}, political campaigning \cite{wylie_mindfck_2020}, therapy \cite{moyers_therapist_2006}, education \cite{hyman_educations_1979}, and government nudging units \cite{halpern_nudging_2016}. Optimizing for specific influence outcomes is simple even with the current AI paradigm, and is already being done in practice with engagement \cite{irvine_rewarding_2023,cai_reinforcing_2023}, purchases \cite{gauci_horizon_2019}, improving educational outcomes \cite{bassen_reinforcement_2020}, or improving emotional well-being \cite{cunningham_what_2024}.

\prg{Not modeling influenceability does more harm than good.} \emph{Not} modeling the problem of preference change is \emph{not} a solution, in ways that share parallels with the limitations of fairness through unawareness \cite{dwork_fairness_2011,teodorescu_protected_2019} 
or security through obscurity \cite{moshirnia_no_2017}. We argue that most real-world systems trained and deployed with humans will affect our preferences, regardless of whether it is our intention or not. To mitigate issues that may arise from this, we require a model that explicitly accounts for preference change. This work takes a step in that direction, without providing any additional means for malicious actors to act (to our current understanding).

\prg{Our work does not meaningfully increase the capacity to influence people.}
In this work, we have detailed a framework that models preference change within an MDP-like model. While one may be able to leverage our theoretical insights to make systems more capable of influence, attempting to influence in targeted ways is already quite straightforward without our formalism (e.g. just rewarding the system for desired influence outcomes). Instead, it's much more challenging to have the system \textit{avoid} inducing undesired forms of influence as side-effects, which is the main focus of our paper.

With these points in mind, we believe that our work does not pose a societal risk, but rather allows us to explore questions that will be fundamental to address for the ethical deployment of AI systems.

\newpage

\section*{Acknowledgements}

MC is appreciative of the many conversations over the years which have helped shape the paper into its current form. MC would like to thank (in no particular order) Tom Everitt, Cam Allen, Cassidy Laidlaw, Nora Amman, Rohin Shah, Alan Chan, Richard Pettigrew, Marcus Pivato, Orr Paradise, Ann He, Henri Wadsworth, Alex Pan, Erik Jones, Riqui Zhong, Tom Gilbert, Atoosa Kasirzadeh, Vincent Cognitzer, Daniel Kilov, Iason Gabriel, and the members of the Center for Human Compatible AI (CHAI) and InterAct lab. Additionally, we thank the anonymous reviewers for their helpful comments. MC is supported by the NSF Fellowship.

\newpage

\bibliography{main}%
\bibliographystyle{icml2024}

\newpage
\appendix
\onecolumn

\setlength{\abovedisplayskip}{10pt plus 2pt minus 5pt}
\setlength{\belowdisplayskip}{10pt plus 2pt minus 5pt}

\renewcommand*\contentsname{Appendix Table of Contents}
\setcounter{tocdepth}{2}
\tableofcontents

\newpage
\section{DR-MDP Formalism FAQ}

As we are most interested in analyzing implications of changing preferences on notions of optimality, in this work we simply assume that the human's reward functions are given to us as part of the specification of a DR-MDP. 
However, unlike MDPs, for which which reward functions could be something as ``objective'' as the score in a game, DR-MDPs are rooted in the idea of mutable human wants---which begs the question of how exactly we conceive of them.

\subsection{What is a human's ``reward function''? Why model it as if it's changing?}\label{app:rewards?}

We broadly think of a human's reward function according to the following informal definition: 
\begin{informaldefinition}
    \label{infdef:reward}
    Let $\mathcal{L}$ be a state-of-the-art reward learning technique, and $\theta$ be the current cognitive state of the human. A human's reward function under $\theta$ (i.e., $R_\theta(s, a, s')$) is the output of $\mathcal{L}$ when used to learn how they currently would evaluate any transition of the form $(s, a, s')$.
\end{informaldefinition} 

\prg{Human reward functions (as defined) will change.} Unless we will someday develop a reward learning technique which learns the ``one true reward function'' of a person (we provide reasons for skepticism in \Cref{app:true-reward}), it seems like if one considers any reward learning technique run at a different times (in which the person has a different cognitive state), the evaluations of the same transitions may change, meaning we will have multiple reward functions on our hands, corresponding to different cognitive states of the human. While this depends on the setting, it seems clear to be the case for almost any domain if one allows sufficient time to pass between reward learning runs (e.g., performing reward learning on the same person 80 years apart). Our framework is built on this premise.

\prg{Cognitive states and reward parameterizations.} In our work, we use the term ``cognitive states'' interchangeably with ``reward parameterizations'' (which is how we introduce $\theta$ in \Cref{def:dr-mdp}). This correspondence is more clear in light of the above: a reward function (or equivalently, reward parameterization) can be treated synonymously to the output of a reward learning technique under a certain cognitive state. 

\subsection{What counts as cognitive states, and what is their relationship to preferences?}\label{app:cognitive-states-will-be-messy}

In the same way that a ``state'' in an MDP contains things in the external environment that are relevant for the purposes of reward (and hence, for decision-making), we think of the ``cognitive state'' in a DR-MDP as containing aspects of the ``internal to the human'' that affect their evaluation of a transition (potentially including aspects of their preferences, values, beliefs, emotional state, etc.).%

\prg{In practice, reward learning will pick up on cognitive biases, ``visceral factors'', and beliefs.} Human cognitive biases, transitory wants, emotions, and ``visceral factors'' (as discussed by \citet{loewenstein_predicting_2003}) are picked up by existing reward learning techniques: for example, 
one may infer that people ``prefer'' to click on clickbait (as in \Cref{fig:clickbait}), that an unskilled chess players is intentionally trying to lose \cite{milli_when_2019}, that humans might prefer indulging in temptation---e.g. eating a donut even though they initially said they wouldn't want to \cite{evans_learning_2015}---or that people prefer sycophantic responses from their chatbots \cite{sharma_towards_2023}. Ultimately, one of the main issues is that people are not Boltzmann-rational with respect to their ``one true reward function'' when providing reward feedback, as argued in \citet{lindner_humans_2022}. Additionally, people do not have full information when giving feedback. Despite this, Boltzmann-rationality with full information is generally assumed by reward learning techniques.

\prg{Using the term ``cognitive states'' instead of ``preferences''.}
Based on the above, reward functions learned by existing reward learning will not only depend on the person's preferences, but also on other factors of the person's current cognitive state. By not explicitly separating such factors from preferences, reward learning techniques will conflate them all in their learned reward representations, making it more appropriate to say that they learn reward functions which correspond to the person's current cognitive state, rather than their current preferences. While one could potentially fit such factors in the preference framework as ``instantaneous preferences'' (e.g., when I'm satiated, I have an instantaneous dispreference for food), this seems more disingenuous than the more generic framing of cognitive states, which allows people to make their own distinctions between components of cognitive states, i.e. what the boundaries are between preferences, values, beliefs, and ``visceral factors'' (that are usually a topic of heated debate).

\subsection{Should visceral factors, satiation, and belief change count as reward changes?}\label{app:should-visceral-factors-be-in-reward?}

As discussed above, current reward learning paradigms do not explicitly account for many factors which underlie human feedback, such as visceral factors \cite{loewenstein_hot-cold_2005}, satiation effects \cite{loewenstein_time_2003}, or belief change \cite{lang_when_2024}. Insofar as this is the case, using \Cref{infdef:reward} would lead one to call something a reward function change even if just one of these other aspects of the user's cognitive states has changed (and not preferences proper, however one may define them). 
Indeed, one may argue that changes in the reward that are only due to biases, instantaneous visceral factors, or satiation effects shouldn't count as ``true reward change''. Similar points may also be made about beliefs. %
Whether this is reasonable is related to long-standing questions as to whether transient factors should be considered changes in ``tastes'' or preferences \cite{harsanyi_welfare_1953,stigler_gustibus_1977}, and whether changes in ``derived'' preferences should be treated differently from ``fundamental'' preference changes \cite{dietrich_where_2013}. 
However, to not be considered reward changes, such factors should also not be present in the reward function, requiring one to fully disambiguate them from preferences. 

\prg{DR-MDPs are agnostic to what should count as reward changes.} Despite the above, DR-MDPs remain agnostic to what should count as reward changes. Consider \Cref{infdef:reward}: whatever disambiguation the current state-of-the-art reward learning technique is able to do will be the basis of what counts as reward change. Indeed, it seems reasonable to us that visceral factors, beliefs, or satiation effects should generally \textit{not} classify as reward function change: if the human's preferences over the behavior of the AI system have only changed as a result of a change in beliefs about the world, it seems strange to model this as a ``reward function change''. Similarly, if a human's instantaneous preferences appear to change because they are now satiated (e.g. not wanting the AI assistant to serve them breakfast right after eating breakfast), it seems debatable whether this should be considered as a reward change. While we think that modeling factors such as e.g. belief change separately is important, we'd like to stress that any attempt to do so will still require to take normative stances: for example, in modeling human beliefs separately (e.g. as a part of the state), one would likely want to choose $U(\xi)$ such that the AI system cannot worsen the user's beliefs to increase it, e.g. by deceiving the user \cite{lang_when_2024}. For example, one could choose a $U(\xi)$ that evaluates all transitions based on how they would be evaluated under ``correct'' beliefs, rather than those that the user holds. More broadly, we think that the task of disambiguating \textit{all} factors from preference is very challenging, as discussed in \Cref{app:true-reward}, meaning that we may be forced to conflate preferences with the aspects of the cognitive state which are hard to model, leading to slightly unsatisfying notions of ``reward change'' (for lack of a better modeling approach).

\subsection{Why not assume access to the human's ``true reward function''?}\label{app:true-reward}

Why do we choose to focus on reward functions as they would be learned by reward learning techniques (as discussed in \Cref{app:rewards?}), rather than considering a person's ``true reward function''?
Indeed, as long as the reward function used is non-Markovian, it will trivially be able to represent any ``all-things-considered'' notion of optimality $U(\xi)$ that we may be trying to target (as shown in \Cref{app:preference-mdp-reduction}).
Ultimately, the reason boils down to the fact that we expect the ``true reward function'' representing what a person would want the AI system to optimize in aggregate across their different selves (assuming such an object even meaningfully exists) to not be accessible to us in practice.

\prg{Is there even such as thing as a ``true reward function'' for a person?} For a ``true reward function'' to exist would require that there exist a ``correct'' choice of $U(\xi)$, which is questionable. The analysis in our work supports the claim that there may in fact not be a single, unambiguously correct choice of $U(\xi)$, and so does prior work in philosophy and beyond \cite{parfit_personal_1982,paul_transformative_2014,pettigrew_choosing_2019,zhi-xuan_beyond_2024}.\footnote{We gloss over and try to remain agnostic to possible interpretations as to the nature of such a ``true reward function''.}
Regardless, from here on, we will argue as if a ``true reward function'' does in fact exist, treating it as a useful abstraction.

\prg{Conditions required to access the ``true reward function'' of a person.} If we wanted to obtain the ``true reward function'' via reward learning, rather than some distorted and mispecfied version of it, it seems like we would need one of the following two conditions to hold (at the very least, approximately):
\begin{enumerate}[itemsep=3pt, parsep=0pt, partopsep=0pt, topsep=0pt]
    \item One's reward learning technique would need to extrapolate from the time-inconsistent and biased feedback the human provides to recover their ``true reward function'', or
    \item The human would need to provide feedback directly consistent with their ``true reward function''. 
\end{enumerate}

\prg{Perfect reward learning is impossible, and there is no scalable approach to approximating it.} The first condition would require developing human feedback models that ``explain away'' preference change and feedback biases more broadly \cite{evans_learning_2015,hong_sensitivity_2022},\footnote{Note that it may be argued that having changing preferences is a form of irrationality itself \cite{elster_ulysses_1979}, although this is somewhat contested \cite{bruckner_defense_2009}.} by perfectly specifying how observed human feedback should be used as evidence of a particular ``true reward function'' (or multiple, if one allows for the ``true reward function'' to change). The most commonly used model is Boltzmann rationality, also known as the Bradley-Terry model \cite{bradley_rank_1952}. However, this model is clearly wrong \cite{lindner_humans_2022}. More generally, it seems challenging to say the least to explicitly hardcode full models of ``human bias'' mathematically, as reflected by the limitations of current reward learning methods \cite{mckinney_fragility_2023,tien_causal_2023,casper_open_2023}. Moreover, learning human biases has also been shown to be impossible in general \cite{christiano_easy_2015,armstrong_occams_2019}. While some have tried to approximately learn both human biases and rewards jointly \cite{shah_feasibility_2019}, this seems challenging even under the favorable assumptions they consider.

\prg{Perfect cognitive states are unreachable.} As ``debiasing'' a person's feedback to recover their ``true reward function'' seems challenging, the alternative would be to place the person in a situation in which they would give unbiased\footnote{Note that we're using the term ``bias'' broadly, indicating any deviation from the feedback they would give according to their ``true reward function''.} feedback about what they truly want.
Some works have discussed which conditions would necessary for this to happen: in particular, \citet{yudkowsky_coherent_2004} proposes that the ideal goal is obtain the preferences of the person ``if [they] knew more, thought faster, were more the people [they] wished [they] were, had grown up farther together''.\footnote{\citet{yudkowsky_coherent_2004} talks about Coherent Extrapolated Volition in terms of collective preferences and values, but this same line of thinking can be applied to an individual.} 
This perspective is similar to \textit{ideal observer theory}, according to which things should be evaluated as if we were ``ideal observers'' \cite{firth_ethical_1952}, which in the words of \citet{brandt_ethical_1959} are ``fully informed and vividly imaginative, impartial, in a calm frame of mind and otherwise normal''. However, obtaining such idealized forms of ``unbiased'' feedback seems even more unrealizable than the prior case: the idealized cognitive states that such evaluations would require to don't seem realizable in the real world---indeed \citet{firth_ethical_1952} goes as far as saying that they should be omniscient and omnipercipient.

\prg{Reward functions learned by reward learning which (appear to) change are the best we('ll) have.} Therefore, for all intents and purposes, it seems plausible that the ``true reward function'' of a person will not be directly accessible, even if there is such a thing. 
It seems like there will always be a gap between the reward functions that we learn with reward learning techniques, and the one ``true reward function'' of the person, which will give rise to reward function change (or at least the appearance of it). 
As discussed above, this gap could arise from not correctly accounting for cognitive biases or other factors. However, another possibility is that, even with ``perfect reward learning'', there would still be some inherent preference change that cannot be reduced to a single reward function without fully putting history in the state (as discussed in \Cref{app:preference-mdp-reduction}).\footnote{This would mean that there isn't a single ``true reward function'' for a person across all cognitive states, but maybe there is a ``true reward function'' for each cognitive state, and the differences across cognitive states are not easily explainable in terms of a single reward function. Even conceptually, it's maybe unclear what this would look like. See the discussion on strategic voting from \Cref{app:criticisms} for related points.} 
In either case, one will have to contend respectively with the appearance, or deep the reality of changing preferences, and the ambiguity that arises from it.
While one may hope to eventually render inconsequential the gap between one's learned reward function and the ``true reward function'' through ever-improving approximations, the lack of compelling proposals for scalable techniques casts doubt on the feasibility of this aspiration.

Ultimately, while one may be tempted to assume access to the ``one true reward function'' to simplify our analysis, it is not only unrealistic but also sidesteps the core problem we aim to study: what to do in light of the fact that we cannot access such a reward function, and we will almost certainly continue to have to contend with (an appearance of) changing preferences.

\subsection{Can't one put $\theta$ in the state and use a single context-dependent reward function?}\label{app:context-vs-pov}

On first impression, one might wonder whether one may be able to resolve the normative questions highlighted by the DR-MDP formalism by %
placing treating the person's reward parameterization as a component of the state (e.g. have an augmented state $\dot{s}_t = (s_t, \theta_t)$), and having a single reward function depend on it (e.g. have the reward function be of the form $R(\dot{s}_t, a_t,\dot{s}_{t+1})$). This would be equivalent to using a Factored MDP \cite{boutilier_stochastic_2000} with the augmented state $\dot{s}$.
However, modeling dynamic reward settings as Factored MDPs of this kind lends itself to two possible interpretations: 
\begin{enumerate}
    \item Factored MDPs prescribe that we should use the Real-time Reward (from \Cref{tab:maximization_objectives}) as the optimization objective
    \item Factored MDPs leave underdetermined what optimization objective should be used
\end{enumerate}

We address both interpretations in turn, showing that the first is suspect (and potentially misleading), and the second is unhelpful (as this is essentially the same as what DR-MDPs do---but at least DR-MDPs provide better tools for reasoning about pros and cons of different objectives).

\textbf{\textit{Interpretation 1: Factored MDPs prescribe using the Real-time Reward objective.}}
Consider the Factored MDP, and the reward value $r_t$ for timestep $t$: $r_t = R(\dot{s}_t, a_t, \dot{s}_{t+1}) = R(s_t, \theta_t, a_t, s_{t+1}, \theta_{t+1})$. How should $\theta_t$ and $\theta_{t+1}$ be used to determine the reward for timestep $t$, when they may differ in evaluation for the transition $(s_t, a_t, s_{t+1})$? The most straightforward interpretation---in the language of DR-MDPs---may be that $r_t$ should be equivalent to $R_{\theta_t}(s_t, a_t, s_{t+1})$, i.e. each state-action transition $(s_t, a_t)$ should be evaluated according to the reward parameterization which corresponds to timestep $t$. However, note that this is identical to choosing to optimize the Real-Time Reward objective $\sum_t R_{\theta_t}(s_t, a_t, s_{t+1})$ from \Cref{subsec:real-time}. In addition to purely philosophical arguments against this choice of objective \cite{kolodny_ai_2022}, we discuss the objective's limitations at length in \Cref{subsec:real-time,sec:influence-and-opt-horizon}. Moreover, note that this is just \textit{one possible stance} on what notion of optimality is prescribed by Factored MDPs: this interpretation of the prescription of Factored MDPs arbitrarily chooses to ignore $\theta_{t+1}$. The language of DR-MDPs is more flexible, and allows us to describe 7 other options of objectives in \Cref{tab:intuition_strengths_weaknesses}, most of which seem superior to Real-time Reward in terms of the influence incentives which they lead to. So even if one were to interpret Factored MDPs as prescribing usage of (what we call) the Real-time Reward objective, there are reasons to doubt the reasonableness of this prescription.

\textbf{\textit{Interpretation 2: both Factored MDPs and DR-MDPs need a choice of $U(\xi)$ to resolve normative questions, but DR-MDPs offer a better formal language to reason about them.}}
Alternatively, one may interpret Factored MDPs as leaving underdetermined what optimality should consist of (i.e. which $\theta$ to consider in evaluating each transition), potentially because their formalism was designed for static-reward settings. If so, that seems similar to what DR-MDPs without a choice of $U(\xi)$ prescribe, i.e. essentially nothing. However, precisely because Factored MDPs were designed for static-reward settings, they don't provide a formal language for describing the different possible objectives that one may care about in dynamic-reward settings. DR-MDPs account for the fact that it may be meaningful to evaluate a transition $(s_t, a_t)$ by cognitive states $\theta$ other than $\theta_t$ (even ones that were not associated with the state $s_t$ at timestep $t$), and provide notation for reward functions ``from the point of view'' of different cognitive states, i.e. $R_\theta(\cdot)$.
This captures the intuition that people have preferences about the actions that they may undertake at times in which they have different cognitive states $\theta$ than the ones they currently have; moreover, those preferences may be quite important, such as in the case for e.g. one's negative evaluation---from the point-of-view of not currently being subject of manipulation---of a hypothetical scenario in which they are happily manipulated.
By providing a better formal language for reasoning about the normative choices entailed by dynamic reward settings, DR-MDPs are therefore more conceptually suited and helpful for reasoning about tradeoffs between different optimization objectives than Factored MDPs.

Regardless of the interpretation one takes about the consequences of putting $\theta$ in the state, this shows that this move does little to address the central question of our work, and/or to help choose optimization objectives which do not lead to undesirable influence.

\subsection{How would one learn all reachable reward functions of a person and their dynamics?}\label{app:learning-reward}

Here we discuss how one could obtain $\Theta$ and learn its dynamics for realistic environments.

As long as one assumes that it is possible for different people to have the same cognitive states (at least insofar as is necessary for specifying their reward functions), and that the transition dynamics of cognitive states is shared across people, it should be possible to simply learn 
reachable reward functions by performing reward learning with a sufficient number of people which are sufficiently diverse. The dynamics of $\Theta$ (and $\mathcal{S}$) could similarly be learned with a sufficiently large dataset of trajectories (for which $\theta$s are observed), assuming there is sufficient coverage. As a high level approach which leverages these assumptions, see \Cref{alg:infer_reward}.

The closest procedure we are aware of in the literature is that in \citet{carroll_estimating_2022}: in simulated experiments, the authors learns user preferences and their dynamics under similar assumptions to the ones mentioned above. Unlike the setup from \Cref{alg:infer_reward}, the relationship between $\theta$ and $R_\theta$ is assumed to be known (based on Boltzmann rationality), but $\theta$ is not assumed to be observable, and has to be inferred from the user's history.

\begin{algorithm}[h]
   \caption{Learning reward functions and their dynamics}
   \label{alg:infer_reward}
\begin{algorithmic}
   \STATE {\bfseries Input:} %
   $\mathcal{H} = \{h^{(i)} \}$, set of humans  currently in the environment, with different states and cognitive states; $\mathcal{D} = \{ \xi^{(i)} \}$, set of trajectories (of potentially different lengths) across different humans in the environment.
   \STATE $\Theta = \{ \theta^{(i)} | \theta^{(i)} \text{ is the current cognitive state of } h^{(i)} \in \mathcal{H} \}$ (Assumes that $\mathcal{H}$ has coverage over $\Theta$)
   \FOR{$\theta\in\Theta$}
   \STATE $\hat{R}_{\theta} \leftarrow$ Reward-Learning$(\{ h^{(i)} | \theta^{(i)} == \theta \})$ (Infer the reward function common to all humans with $\theta^{(i)} == \theta$)
   \ENDFOR
   \STATE $\hat{\mathcal{T}}$ $\leftarrow$ approximate transition dynamics $\mathcal{T}$ from $\mathcal{D}$ (Assumes $\mathcal{D}$ covers the space of transitions).
   \STATE {\bfseries Output:} Set of reward functions $\{\hat{R}_\theta \mid \theta \in \Theta\}$, and transition function $\hat{\mathcal{T}}$ defining the DR-MDP.
\end{algorithmic}
\end{algorithm}

\subsection{Reducing DR-MDPs with a notion of optimality $U(\xi)$ to MDPs} \label{app:preference-mdp-reduction}

A specific notion of optimality $U(\xi)$ for a DR-MDP can be thought of as a ``flattening'' of the different reward functions of the DR-MDP into one. Consequently, it may be unsurprising that once one has settled on a choice of $U(\xi)$ for a DR-MDP, one can express the same notion of optimality in a corresponding MDP (reducing the DR-MDP problem to a standard MDP one). 
This implies that \textit{it's always possible to re-express a changing reward problem as a static reward problem}, once one has settled on a notion of optimality $U(\xi)$. That being said, this does not help with determining $U(\xi)$, i.e. what acceptable notions of optimality should be in cases in which rewards change.\footnote{Indeed, the fact that Markovian reward is not sufficient to represent many preference orderings between policies is potentially a reason to doubt the value of basing alignment formalisms on reward functions all together, as argued by \citet{subramani_expressivity_2024}.} This is the same problem we discussed in \Cref{app:true-reward} and elsewhere. 

\begin{figure}[h]
  \centering
  \includegraphics[width=0.5\columnwidth]{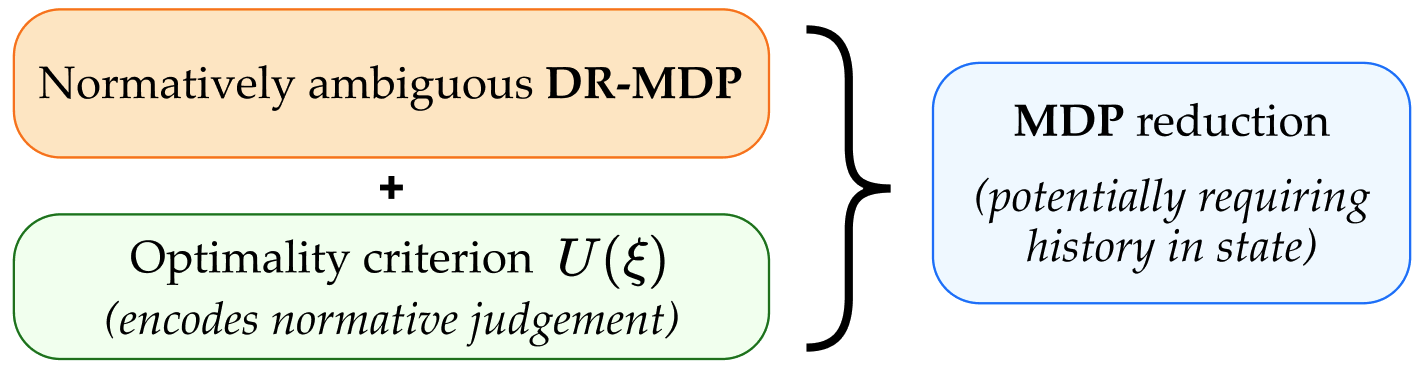}
  \vspace{-0.8em}
  \caption{\textbf{Reducing a DR-MDP to an MDP.}}
  \label{fig:resolving-normative-ambiguity}
  \vspace{-0.5em}
\end{figure}

\begin{theorem}
    For any notion of optimality $U(\xi)$ in a DR-MDP $\mathcal M = (\mathcal S, \Theta, \mathcal{A}, \mathcal{T}, R_\theta)$, there exists a choice of MDP $\dot{\mathcal M} = (\mathcal{\dot{S}}, \mathcal{A}, \mathcal{\dot{T}}, \dot{R})$ such that a policy is optimal with respect to $U(\xi)$ in $\mathcal M$ if and only if it is optimal in $\dot{\mathcal M}$. %
\end{theorem}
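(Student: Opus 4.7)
The core obstacle is that $U(\xi)$ is defined over whole trajectories and need not decompose additively across timesteps (nor depend only on the current $(s_t, \theta_t)$), whereas an MDP requires a Markovian, per-transition reward. The plan is therefore a standard ``history-in-the-state'' construction: augment the state so that the full past trajectory is observable, then push all of $U(\xi)$ into a single terminal payoff so that the cumulative MDP reward equals $U(\xi)$ by construction. I would then verify the policy correspondence carefully.

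Concretely, I would first define the augmented state space
\[
\dot{\mathcal{S}} = \bigcup_{t=0}^{H} \big(\mathcal{S}\times\Theta\times\mathcal{A}\big)^{t}\times\big(\mathcal{S}\times\Theta\big),
\]
so that an element $\dot{s}_t \in \dot{\mathcal{S}}$ encodes the entire prefix $(s_0,\theta_0,a_0,\dots,s_{t-1},\theta_{t-1},a_{t-1},s_t,\theta_t)$ of the DR-MDP trajectory. The augmented transition $\dot{\mathcal{T}}(\dot{s}_{t+1} \mid \dot{s}_t, a_t)$ is defined by appending $a_t$ and then sampling $(s_{t+1},\theta_{t+1})$ according to the DR-MDP's $\mathcal{T}(s_{t+1},\theta_{t+1}\mid s_t,\theta_t,a_t)$ read off from the last entry of $\dot{s}_t$. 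For the reward I would set $\dot{R}(\dot{s}_t,a_t,\dot{s}_{t+1}) = 0$ for $t < H-1$, and $\dot{R}(\dot{s}_{H-1},a_{H-1},\dot{s}_H) = U(\xi)$, where $\xi$ is the trajectory that $\dot{s}_H$ literally encodes. (If $U$ is naturally additive, one can instead use the telescoping reward $\dot{R}_t = U(\xi_{0:t+1}) - U(\xi_{0:t})$ to avoid a lone terminal payoff, but this is an optimization not needed for correctness.)

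Next I would set up the policy correspondence. Policies in $\mathcal{M}$ that are relevant for $U$-optimality are in general history-dependent maps $\pi:(\mathcal{S}\times\Theta\times\mathcal{A})^{*}\times(\mathcal{S}\times\Theta)\to\Delta(\mathcal{A})$; these are in exact bijection with Markov policies $\dot{\pi}:\dot{\mathcal{S}}\to\Delta(\mathcal{A})$ on $\dot{\mathcal{M}}$, since an augmented state just \emph{is} a history. Under this bijection the distribution over DR-MDP trajectories $\xi$ induced by $\pi$ equals the distribution over terminal augmented states induced by $\dot{\pi}$. Hence
\[
\mathbb{E}_{\xi\sim\pi}[U(\xi)] \;=\; \mathbb{E}_{\dot{\pi}}\!\left[\sum_{t=0}^{H-1}\dot{R}(\dot{s}_t,a_t,\dot{s}_{t+1})\right],
\]
so $\arg\max$ on both sides coincide, giving the ``if and only if'' in the theorem.

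The main thing to be careful about is the policy class: Markov DR-MDP policies are in general \emph{not} expressive enough to attain an arbitrary $U(\xi)$-optimum, so the theorem should be read as a statement about history-dependent policies on both sides (or, equivalently, with $\theta$-only conditioning when $U$ itself is Markov in $(s_t,\theta_t)$). Minor bookkeeping concerns---the initial augmented state being $(s_0,\theta_0)$, finite-horizon termination being encoded by a terminal absorbing state in $\dot{\mathcal{S}}$, and measurability of $U$---are routine. The conceptual point, which I would emphasize to connect back to \Cref{app:true-reward} and the main text, is that this reduction is always available \emph{after} one has committed to a particular $U(\xi)$; it does nothing to resolve the normative question of \emph{which} $U(\xi)$ to choose, which is precisely what \Cref{subsec:normative-ambiguity} identifies as the hard part.
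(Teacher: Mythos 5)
Your construction is exactly the paper's: augment the state with the full history, set the reward to zero everywhere except the terminal transition, where it pays out $U(\xi)$ for the encoded trajectory. The only difference is that you make explicit the bijection between history-dependent policies on $\mathcal M$ and Markov policies on $\dot{\mathcal M}$ (and the resulting policy-class caveat), which the paper leaves implicit; this is a welcome tightening but not a different argument.
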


\begin{proof}
    Given the DR-MDP $\mathcal M$ and the trajectory-level utility function $U(\xi)$, one can construct the MDP $\dot{\mathcal M}$ as follows: 
    \begin{itemize}
        \item The state space $\mathcal{\dot{S}}$ is such that each state is augmented with the history of the interactions up until reaching that state: $\dot{s}_t = (s_0, \theta_0, a_0, \dots, s_t, \theta_t)$ for $t>0$, and $\dot{s}_0 = (s_0, \theta_0)$.
        \item The reward function $\dot{R}$ is set to be $0$ everywhere, except for terminal states, in which case the reward for exiting the MDP is set to $U(\xi)$ for the resulting trajectory $\xi = (\dot{s}_{T-1}, a_{T-1}) = (s_0, \theta_0, a_0, \dots, s_{T-1}, \theta_{T-1}, a_{T-1})$. Note that one can determine whether a state is terminal by checking whether it corresponds to timestep $T-1$, which can be determined by the number of previous timesteps in the augmented state. Formally:
        $$\dot{R}(\dot{s}_t, a_t) = \left\{\begin{array}{cl}
                U(\xi) & \text{if } t = T - 1 \\
                0 & \text{otherwise}
        \end{array}\right.$$
        \item The transition function $\mathcal{\dot{T}}$ accounts for the augmented state space, appending to the state $\dot{s}_{t+1}$ at each timestep $t+1$, the new $(a_t, s_{t+1}, \theta_{t+1})$ triplet.
    \end{itemize}
    \
    
    Note that in the resulting MDP $\dot{\mathcal M}$, any trajectory $\xi$ will be scored in the same way as the original DR-MDP $\mathcal{M}$ when considering the notion of optimality specified by $U(\xi)$. This means that policy will be optimal for $\dot{\mathcal M}$ if and only if $\mathcal{M}$.
\end{proof}

The specific construction of $\dot{\mathcal M}$ in the proof above relies on putting the history in the state (to allow for choices of $U(\xi)$ which can arbitrarily depend on history). However, for certain choices of $U(\xi)$ this might be unnecessary: e.g. the real-time reward objective $U_\text{RT}(\xi)$ can be expressed by only augmenting the state space with the \textit{current} reward parameterization (i.e. $\dot{s}_t = (s_t, \theta_t)$), rather than the whole history---and setting $\dot R$ to reward each MDP transition $(\dot{s}_t, a_t)$ as $R_{\theta_t}(s_t, a_t)$ where the $\theta_t$ and $s_t$ are unpacked from $\dot{s}_t$.
We leave a more general formal analysis of which choices of $U(\xi)$ can have their MDP reduction keep the Markov property without putting the history in the state to future work.

\subsection{Unidentifiability of ``correct'' normative resolutions by DR-MDP structure alone}\label{app:unidentifiability}

While the current paradigm for AI generally makes use of an optimality criterion which determines agent behavior solely based on the mathematical structure of the problem at hand (namely, $U(\xi) = \sum_{t=0}^{H-1} R(s_t, a_t, s_{t+1})$, which does not account for preference changes), we argue that it may not be possible to guarantee returning the ``normatively correct'' behavior simply from the mathematical structure of learned reward functions in changing reward settings, using an unidentifiability argument.

\begin{claim}
    Even if there exists a unique choice of ``normatively correct'' behavior in a normatively ambiguous DR-MDP, such ``correct'' behavior may not be identifiable from the mathematical structure alone of the DR-MDP, e.g. by using a generic notion of optimality $U(\xi)$.
    \label{claim:structure-is-insufficient}
\end{claim}

\prg{AI Personal Trainer DR-MDP.} Consider the example from \Cref{fig:working-out}.\footnote{We justify the fact that the two cognitive states are modeled as Diana having two separate ``reward functions''---despite it not being obvious whether this is a ``true preference change''---along the lines of the arguments in \Cref{app:true-reward,app:should-visceral-factors-be-in-reward?}.} One could argue that in this setting, nudging Diana is the right course of action---because Diana's ``higher self''
is better represented by the ``energized'' reward ($R_\text{\energyemoji}$) rather than the ``tired'' one ($R_\text{\tiredemoji}$)
\footnote{With a reasoning is similar to that of \citet{firth_ethical_1952} and \citet{yudkowsky_coherent_2004}.}---and thus making a choice of $U(\xi)$ which privileges $R_\text{\energyemoji}$ is right. Similarly to \Cref{fig:influencev2}, this example is also normatively ambiguous. In particular, the choice of nudging Diana when tired, despite her dispreference for it, runs the risk of being paternalistic: what if Diana \textit{rightfully} does not want to be bothered, and we should respect her autonomy? 

\prg{Unidentifiability between the settings from \Cref{fig:influencev2,fig:working-out}.} Now, contrast this DR-MDP to the one from \Cref{fig:influencev2}: note that they are mathematically indistinguishable, as their state, reward, and action spaces are mathematically identical, and so are the transition dynamics. However, for these two settings, we have at least partially conflicting normative intuitions: if for the sake of argument, we assume that the ``correct'' way to resolve the normative ambiguity in the examples from \Cref{fig:working-out,fig:influencev2} is to respectively consider the perspective of the ``energized'' Diana and ``natural'' Bob,  one could go as far as saying that \textit{there is no single choice of $U(\xi)$ which leads to the ``normatively correct'' behavior in both environments.} To better see this, consider a DR-MDP in which one randomly starts in one of the two examples from \Cref{fig:influencev2,fig:working-out} within it.\footnote{As a caveat, doing this formally would this would require extending the DR-MDP formalism to allow for a stochastic initial state and reward parameterization, and relaxing the reachable-$\Theta$ assumption (discussed in \Cref{app:reachable}).} Any choice of $U(\xi)$ will necessarily lead to ``incorrect'' behavior in at least one of the two settings. 

\prg{Unidentifiability and incompleteness of specification.} This unidentifiability result partially relies on the incompleteness of the specification of the DR-MDP at hand: one could say that anything which is relevant for resolving the normative ambiguity should be elicited from the human as a reward and/or represented, or the DR-MDP representation is flawed to begin with.
However, in practice, it will be highly challenging to include all normatively relevant information and elicit it from humans, especially in terms of rewards, as discussed in \Cref{app:cognitive-states-will-be-messy}. As long as two settings are structurally equivalent (in terms of state space, preference space, and transition function), the only way to ensure that the learned reward functions would correctly reflect their respective normative objectives would be to assume access to the correct reward functions, which, as discussed in \Cref{app:true-reward} is a non-starter. Ultimately, we think it is in practice quite plausible to learn the same (or at least very similar) reward values for settings which are structurally equivalent but for which we have opposite normative intuitions (such as the those from \Cref{fig:influencev2,fig:working-out}). Especially if one is not assuming inter-temporal (comparability across evaluations of different selves of the same person) or inter-personal comparability (comparability across evaluations of different people) of the reward functions across the settings, it seems like there is nothing stopping this situation from occurring in practice even with the ``true reward functions'' which correspond to each cognitive state.

\begin{figure}[t]
  \centering
  \includegraphics[width=0.6\columnwidth]{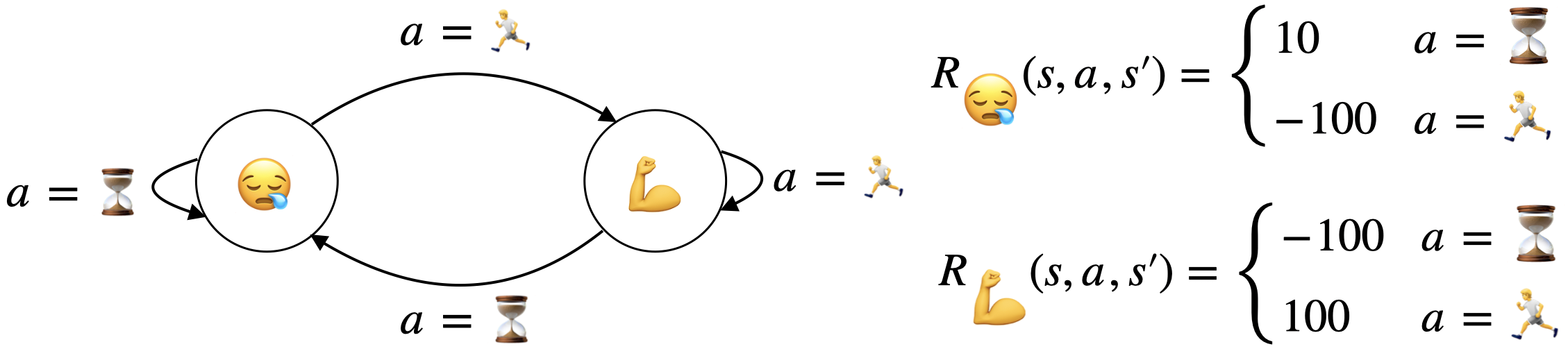}
  \vspace{-0.8em}
  \caption{\textbf{AI Personal Trainer DR-MDP.} Diana is tired and doomscrolling on the couch 
  ($\theta=\theta_\text{tired}$). 
  The AI personal trainer can either nudge Diana to work out---making Diana energized ($\theta=\theta_\text{energized}$)---or do nothing ($\anoop$)---leaving Diana tired. When Diana is tired, she doesn't want nudges. Instead, once energized, Diana starts wanting the AI to nudge her, even for hypothetical situations in which she is tired (despite knowing she won't want them then).}
  \label{fig:working-out}
  \vspace{-1.5em}
\end{figure}

\textbf{Implications for choosing $U(\xi)$ for general settings.}
For simple examples like those of \Cref{fig:influencev2,fig:working-out}, one can easily pick ad-hoc optimization objectives $U(\xi)$ to induce the ``normatively correct'' behaviors. However, for open-ended environments with many opportunities for different kinds of reward influence, one would be forced to choose a general notion of optimality, hoping that it would generalize to any of the nuances of the setting. These are the kinds of settings that AI systems being built today increasingly have to operate in.
For example, in the context of social media, an appropriate choice of $U(\xi)$ would have to navigate many---wildly different---normatively ambiguous choices about reward influence: as any choice of content by the system will influence you, should the system actively be trying to influence (or avoid influencing) you in particular ways, i.e. towards (or away from) certain hobbies, travel interests, political parties, etc.? %
Often it will be prohibitively challenging to hand-design a single $U(\xi)$ that behaves acceptably in any possible scenario of normative ambiguity that might arise.

\subsection{Assumption of reachable reward parameterizations}\label{app:reachable}

To simplify our analysis and interpretation, we restrict ourselves to considering reachable cognitive states. Formally:

\begin{definition}[Reachable reward functions]
Let $\dot{\Theta}$ denote the reachable reward functions for a DR-MDP, i.e. the subset of reward functions that have non-zero probability of occurring under at least one policy. Formally, a reward function $\theta$ is reachable if there exists a policy $\pi \in \Pi$ such that $P(\theta_t = \theta | \pi) > 0$ for some $t$. We denote as $\dot{\Theta}$ the set of all reachable reward functions: formally, $\dot{\Theta} = \{ \theta \ | \ \theta = f(s) \ \ \forall \ s \in \dot{S} \}$.
\end{definition}

\prg{Implications for other definitions, when relaxing this assumption.}
For any particular $\theta^*$ of interest, the Privileged Reward objective may optimize for $\theta^*$ whether it is reachable or not.
Of our other objectives enumerated in \Cref{tab:maximization_objectives}, including unreachable $\theta$s in $\Theta$ would make no difference whatsoever, except for ParetoUD.
For ParetoUD, an unreachable $\theta$ may be added to $\Theta$ to strengthen the Unambiguous Desirability criterion and weaken the Pareto Efficiency criterion. It is one more objective by which $\pi$ must dominate $\pi_\text{noop}$ to be Unambiguously Desirable, and one more objective by which another policy $\pi'$ must dominate $\pi$ to render $\pi$ \emph{not} Pareto Efficient.

The definition of normative ambiguity also depends on the choice of $\Theta$: including non-reachable $\theta$s adds more objectives by which a policy must be optimal in order for the DR-MDP to be normatively unambiguous.

\subsection{``Reward'' can accommodate many possible targets for alignment}

\citet{gabriel_artificial_2020} identifies many possible targets for AI alignment, which are often confused in the AI literature: ``instructions'', ``expressed intentions'', ``revealed preferences'', ``informed preferences'', ``interest or well-being'', and ``values''. 
By grounding reward functions to DR-MDP as in \Cref{app:rewards?}, we
remain somewhat agnostic with regards to what exactly is encoded by the reward function, which could allow to accommodate any of these possible targets for alignment with minor modifications. These depend on the exact (conditional) reward learning technique used: for example, using a form of IRL \cite{ziebart_modeling_2010} would fall under the revealed preferences paradigm---according to the preferentist model of AI \cite{zhi-xuan_beyond_2024}---while using reward learning approaches which attempt to remove cognitive biases \cite{evans_learning_2015} might be considered an attempt to recover ``informed preferences''. While reward functions may not be the best way to encode certain targets of alignment, such as norms or contractualist values \cite{hadfield-menell_incomplete_2018,zhi-xuan_beyond_2024,bai_constitutional_2022}, they are still sufficiently expressive to encode any desired behavior (while potentially requiring to drop the Markovian assumption, as discussed in \Cref{app:preference-mdp-reduction}).
As we discuss in \Cref{sec:lim-and-disc}, we expect that DR-MDPs should be easily extensible to settings in which we consider the agent that we're performing ``reward learning on'' to be society itself. This interpretation may be more conducive for certain targets of alignment such as norms or rights.

\section{Illustrative Examples}

\subsection{Additional examples}

\begin{figure}[ht]
  \centering
  \begin{minipage}{0.33\textwidth}
    \centering
    \includegraphics[width=0.8\linewidth]{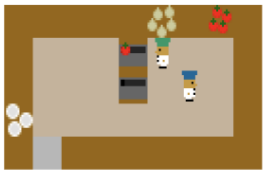} %
    \vspace{1em}
    \caption{\textbf{Overcooked environment.} In discussion of influence incentives in Overcooked, we use this environment from \citet{hong_learning_2023}, but we imagine that instead of tomatoes at the top right, there is a second supply of onions.}
    \label{fig:overcooked_env}
  \end{minipage}\hfill
  \begin{minipage}{0.65\textwidth}
    \centering
    \includegraphics[width=\linewidth]{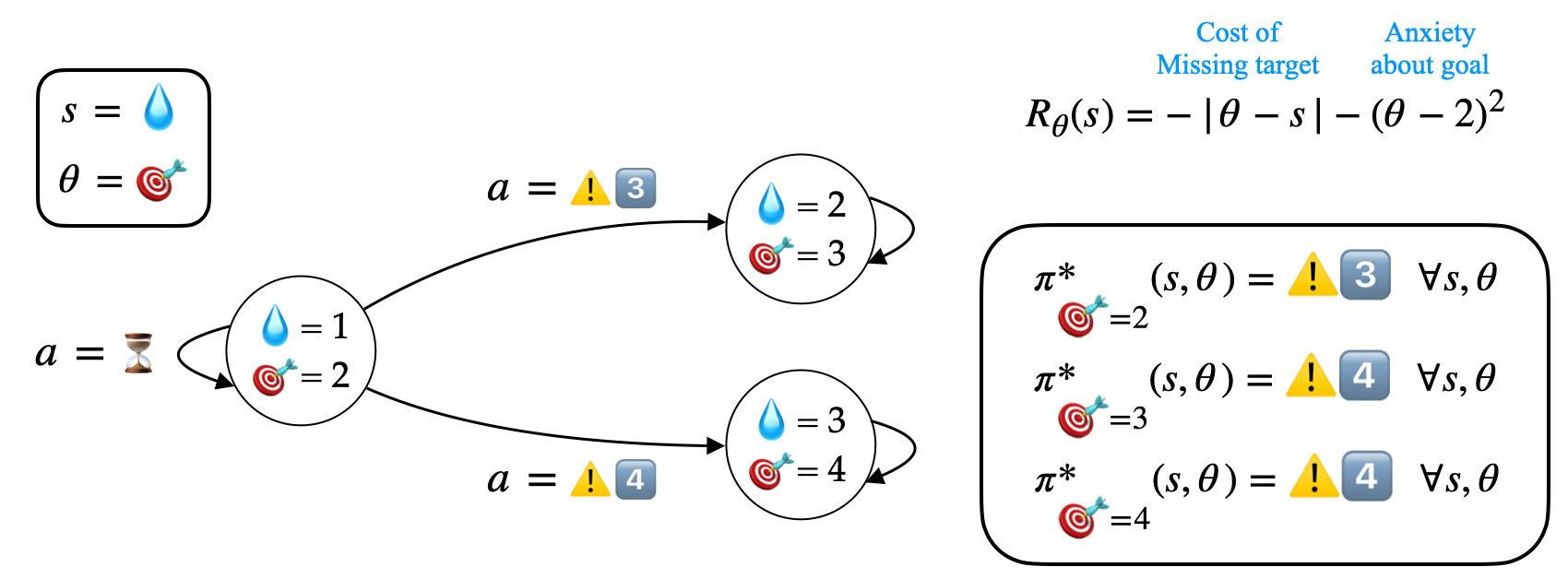} %
    \vspace{-2em}
    \caption{\textbf{Dehydration DR-MDP.} Initially, Charlie drinks one unit of water a day ($s = 1$), but wants to drink 2 a day ($\theta=2$), leading to a reward of $-1$. The AI can successfully convince Charlie that he should drink 3 or 4 units of water a day by increasing their anxiety about the dangers of dehydration, or do nothing. The reward function is given by a term which captures Charlie's ``disappointment'' in missing his hydration target, and an ``anxiety cost'' about how much he worries about his water intake. Charlie always drinks one less unit of water than he aims to.}
    \label{fig:dehydration}
  \end{minipage}
\end{figure}

\prg{``Overcooked with role preferences''.}\footnote{We thank the anonymous reviewer that suggested having a more realistic example grounded in Overcooked.} As a more grounded environment, we consider a variant of the Overcooked environment from \citet{hong_learning_2023} in which the goal is to get a high score, but additionally the human player has preferences over which in-game activities they perform. We consider only two roles: placing onions into pots, and delivering completed soups to the counter (which requires obtaining a plate first). We thus parameterize the reward function by $\theta \in [0, 1]$ and define
$R_\theta(s, a, s') = \Delta_\text{score} + \theta \cdot \mathbf 1_{\{ \text{onion delivered} \}} + (1 - \theta) \cdot \mathbf 1_{\{ \text{plate delivered} \}}.$
Note that three onions must be delivered per plate delivered, so if the human's reward puts all weight on onions ($\theta = 1$), the maximum possible reward is 2 points higher. One completed dish is worth 25 points.\footnote{For the purposes of this example, we're assuming that the speed in which the order is completed is irrelevant for reward.}

\prg{Reward dynamics.} We model the reward dynamics by the notion that doing a task makes the human enjoy it more. Formally, when the human delivers an onion, $\theta_{t+1} = \frac12(1 + \theta_t)$, and when they deliver a plate, $\theta_{t+1} = \frac12 \theta_t$. The robot is thus able to influence the human's reward function by blocking plates or onions---since the human also cares about score, they are incentivized to do the unblocked task, even if they don't prefer it.

\prg{Resulting behaviors.} The robot's goal is to maximize the human's reward function, which is underdetermined in the presence of reward dynamics. We now describe scenarios corresponding to the first three $U(\xi)$ choices in Table 1, each taking place in the environment shown in \Cref{fig:overcooked_env} (with no tomatoes).
\begin{enumerate}
    \item \prg{Real-time reward.} If the human starts out only preferring delivering soups ($\theta = 0$), the robot has an incentive to influence them to prefer onions, because three onions can be delivered for every plate which is delivered. It could do this by blocking the plates until the human finally decides to just start delivering onions.
    \item \prg{Final reward.} Same scenario.
    \item \prg{Initial reward (reward lock-in).} Suppose the human start out only caring about plates ($\theta = 0$). If the human's reward function were to change such that they prefer to start delivering onions, this would decrease the value of the initial reward objective. Thus the robot has an incentive to prevent the human from ever delivering an onion, even in scenarios where this would increase reward.

    For example, suppose the human decides to explore and wanders to the top right corner of the kitchen. Once there, they find nothing of interest to them (since they care only about score and delivering plates) and decide to return to the left side. They note that they can grab an onion and put it in a pot on the way back at very low marginal cost, and that it would increase score to do so.
    
    However, if they do this, they will update to $\theta = \frac12$, and will then be more interested in onions, preferring a different joint strategy in which they can work on the right side. This has a negative impact on the initial reward objective, which is increased when the human delivers plates but not when they deliver onions. Thus an initial-reward-optimizing robot would work to prevent this possibility by blocking the human from depositing an onion in a pot (or blocking them from picking up the onion in the first place).
\end{enumerate}
\

\prg{Example from \Cref{fig:dehydration}.} This is a slightly more complicated (and implausible) example relative to that from \Cref{fig:writer-curse}, which demonstrates the same issues: $U_\text{IR}(\xi)$ can lead to influence ``away from'' $\mathbf{\theta_0}$, and lead to arbitrarily bad reward. Maximizing reward as evaluated by $R_{\theta_0} = R_{\theta=2}$ will entail influencing the reward function to be $R_{\theta=3}$, as that reward function is associated with the state $s=2$, which is what Charlie aims for in the initial state. Additionally, note that the influence of the reward function to be $R_{\theta=3}$ (optimal under $U_\text{IR}(\xi)$) will lead to poor real-time reward evaluations of the resulting state $R_{\theta=3}(2)=-5$ (while $R_{\theta_0}(2)=R_{\theta=2}(2) = 0$). 

\subsection{Full formalism of all examples from the main text}\label{app:example-formalisms}

In \Cref{tab:all-examples}, we explicitly provide the full formalism for each of the examples in the main text (and that of \Cref{fig:working-out,fig:dehydration}). In \Cref{tab:all-optimality}, we additionally display optimal policies for each of the settings, according to each of the DR-MDP objectives from \Cref{tab:maximization_objectives,tab:intuition_strengths_weaknesses}.

\prg{Form of optimal policies in DR-MDPs.} Note that policies for DR-MDPs can generally depend on both the external state $s$ and the current reward parameterization $\theta$---similar to how Factored MDPs \cite{boutilier_stochastic_2000} may depend on the different components of the augmented state.\footnote{See \Cref{app:context-vs-pov} for the relationship between DR-MDPs and Factored MDPs.} %
Additionally, most of our analysis centers on computing policies for DR-MDPs considering a fixed, finite horizon. Therefore, similarly to MDPs \cite{sutton_reinforcement_2018}, the optimal actions can also depend on how many timesteps are left before the episode is interrupted, which is generally modeled by having the policy depend on $t$. Ultimately, optimal policies for finite-horizon DR-MDPs will be of the form $\pi(s, \theta, t)$.

\begin{table}[h]
    \centering
    \caption{\textbf{Full formalism for each example of the main text.} Here we explicitly describe the state space $\mathcal{S}$, reward parameterization space $\Theta$, action space $\mathcal{A}$, initial state $s_0$ and reward parameterization $\theta_0$, and refer to the corresponding figures for transition dynamics and reward functions.
    For the clickbait example from \Cref{fig:clickbait}, we treat $\anoop = a_\text{news}$, as we expect that this is the content that a recommender based on upvotes (e.g., Reddit) would serve by default (this is somewhat arbitrary, as discussed in the section on algorithmic amplification from \Cref{appsubsec:AI}).
    }
    \vspace{0.5em}
    \label{tab:all-examples}
    \begin{adjustbox}{width=\columnwidth,center} 
    \begin{tabular}{|c|c|c|c|c|c|c|}
        \hline
        Example & $\mathcal{S}$ & $\Theta$ & $\mathcal{A}$ & $(s_0, \theta_0)$ & $\mathcal{T}(s', \theta' | s, \theta)$ & $R_\theta(s, a)\ \ \forall \theta\in\Theta$ \\
        \hline
        \textbf{Conspiracy Influence} & $\{s_0\}$ & $\{\theta_{\text{natural}},\theta_{\text{influenced}}\}$ & $\{\anoop, a_\text{influence}\}$ & $(s_0, \theta_{\text{natural}})$ & See \Cref{fig:influencev2} & See \Cref{fig:influencev2} \\ 
        \hline
        \textbf{Writer's Curse} & $\{s_\text{no-poetry}, s_\text{poetry}\}$ & $\{\theta_\text{ambitious}, \theta_\text{unhappy}\}$ & $\{\anoop, a_\text{influence}\}$ & $(s_\text{no-poetry}, \theta_\text{ambitious})$ & See \Cref{fig:writer-curse} & See \Cref{fig:writer-curse} \\ 
        \hline
        \textbf{Clickbait} & $\{s_0\}$ & $\{\theta_{\text{normal}},\theta_{\text{disillusioned}}\}$ & $\{a_\text{news}, a_\text{clickbait}\}$ & $(s_0, \theta_{\text{normal}})$ & See \Cref{fig:clickbait} & See \Cref{fig:clickbait} \\ 
        \hline
        \textbf{AI Personal Trainer} & $\{s_0\}$ 
        & $\{\theta_{\text{tired}},\theta_{\text{energized}}\}$%
        & $\{\anoop, a_\text{nudge}\}$ & $(s_0, \theta_{\text{tired}})$ & See \Cref{fig:working-out} & See \Cref{fig:working-out} \\ 
        \hline
        \textbf{Dehydration} & $\{1,2,3\}$ & $\{2, 3, 4\}$ & $\{\anoop, a_\text{3}, a_\text{4}\}$ & $(1, 2)$ & See \Cref{fig:dehydration} & See \Cref{fig:dehydration} \\ 
        \hline
    \end{tabular}   
    \end{adjustbox}
\end{table}

\begin{table}[h]
    \centering
    \caption{\textbf{Representative optimal policies for each of our examples from \Cref{tab:all-examples}, with respect to each of the objectives in \Cref{tab:maximization_objectives}.} 
    In cases in which there is more than one optimal policy, we conservatively display in the table the optimal policy which seems least desirable. All policies provided take the same action across all $s$, $\theta$, and $t$, unless explicitly noted. 
    For the ``initial reward'' row, we show optimality even for alternate initial states, for the purposes of highlighting the dependency on that (which is also done in \Cref{subsec:initial-reward}). For each of the policies, we also add a rough ``normative label'' which captures whether the policy is---in our assessment---making an acceptable normative tradeoff between the reward functions: (\ding{51}), (\ding{55}), and (\textbf{?}) respectively indicate policies that behave desirably, undesirably, or hard to assess. As this is a normative call, they shouldn't be considered as ground truth, and the reader may object to our choices. %
    }
    \vspace{0.5em}
    \label{tab:all-optimality}
\begin{adjustbox}{width=\textwidth,center}
\begin{tabular}{|c|c|c|c|c|c|}
    \hline
    \textbf{Objective} & \textbf{Conspiracy Influence} & \textbf{Writer’s Curse} & \textbf{Clickbait ($H<3$)} & \textbf{AI Personal Trainer} & \textbf{Dehydration} ($H\geq2$) \\ %
    \hline
    Privileged Rew.
    & $\begin{array}{@{}c@{}}
      \pi_{\theta_\text{natural}}^*(s, \theta,t) = \anoop \text{ (\ding{51})}\\
      \pi_{\theta_\text{influenced}}^*(s, \theta,t) = a_\text{influence} \text{ (\ding{55})}
    \end{array}$
    & $\begin{array}{@{}c@{}}
      \pi_{\theta_\text{ambitious}}^*(s, \theta,t) = a_\text{influence} \text{ (\ding{55})}\\
      \pi_{\theta_\text{unhappy}}^*(s, \theta,t) = \anoop \text{ (\ding{51})}
    \end{array}$
    & $\begin{array}{@{}c@{}}
        \pi^*_{\theta_\text{normal}}(s, \theta, t) = a_\text{c.b.}\ \text{(\ding{51})} \\
        \pi^*_{\theta_\text{disil.}}(s, \theta, t) = a_\text{news}\ \text{(\ding{55})}
    \end{array}
    $
    & $\begin{array}{@{}c@{}}
    \pi_{\theta_\text{tired}}^*(s, \theta, t) = \anoop \text{ (}\sim\text{\ding{51})} \\
    \pi_{\theta_\text{energized}}^*(s, \theta, t) = a_\text{nudge} \text{ (\textbf{?})}
    \end{array}$ 
    & 
    $\begin{array}{c}
        \begin{array}{@{}c@{}}
            \pi_{[\theta=2]}^*(s, \theta, t) = a_3 \\
            \pi_{[\theta=3]}^*(s, \theta, t) = a_4 \\ 
            \pi_{[\theta=4]}^*(s, \theta, t) = a_4
        \end{array}\\
        (\textbf{?})
    \end{array}$\\
    \hline
    Real-time Rew. 
    & $\begin{array}{@{}c@{}} \pi^*(s, \theta,t) = a_\text{influence}\end{array}$ (\ding{55})
    & $\pi^*(s, \theta,t) = \anoop$ (\ding{51})
    & $\begin{array}{c}
            \begin{array}{@{}l@{}}
                \begin{array}{cc}
                    \pi^*(s, \theta_\text{norm.}, t) = \left\{
                        \begin{array}{cl}
                            a_\text{c.b.} & \text{if $t=H-1$} \\
                            a_\text{norm.} & \text{otherwise} \\
                        \end{array}\right.
                    & \text{(\ding{55})}
                \end{array} \\
                \pi^*(s, \theta_\text{disil.}, t) = a_\text{news} \ \text{(\ding{51})}
        \end{array}
    \end{array}$
    & $\begin{array}{@{}c@{}} \pi^*(s,\theta,t) = a_\text{nudge}\end{array}$ (\textbf{?})
    & $\begin{array}{@{}c@{}} \pi^*(s,\theta,t) = \anoop\end{array}$ ($\sim$\ding{51})
    \\
    \hline
    Final Reward 
    & $\begin{array}{@{}c@{}} \pi^*(s, \theta,t) = a_\text{influence}\end{array}$ (\ding{55})
    & 
    $\begin{array}{c}
        \pi^*(s, \theta, t) = 
        \begin{cases} 
            a_\text{infl.} \ \text{ if } t < H-1 \\
            \anoop \ \text{ if } t = H-1
        \end{cases}\\
        \text{(\ding{55})}
    \end{array}$
    & $\begin{array}{@{}c@{}} \pi^*(s, \theta,t) = a_\text{news}\end{array}$ (\ding{51})
    & $\begin{array}{@{}c@{}} \pi^*(s, \theta,t) = a_\text{nudge}\end{array}$ (\textbf{?})
    & $\begin{array}{@{}c@{}} \pi^*(s, \theta,t) = \anoop\end{array}$ ($\sim$\ding{51})
    \\
    \hline
    Initial Reward 
    & $\pi^*(s, \theta, t) = 
        \begin{cases} 
            \anoop \ \text{ if } \theta_0 = \theta_\text{nat.} \text{ (\ding{51})} \\
            a_\text{infl.} \ \text{ if } \theta_0 = \theta_\text{infl.} \text{ (\ding{55})}
        \end{cases}$
    & $\forall \theta_0, \pi^*(s, \theta,t) = a_\text{influence}$ (\ding{55})
    & $\forall \theta_0$, $\pi^*(s,\theta,t)=a_\text{clickbait}$ (\ding{55})
    & $\pi^*(s, \theta,t) = 
\begin{cases} 
    \anoop \ \text{ if } \theta_0 = \theta_\text{tired} \ (\sim\text{\ding{51}}) \\
    a_\text{nudge} \ \text{ if } \theta_0 = \theta_\text{energ.} \ (\textbf{?})
\end{cases}$
    & $\forall \theta_0$, $\pi^*(s,\theta,t)=a_3$ (\textbf{?})
    \\
    \hline
    Natural Reward 
    & $\pi^*(s, \theta,t) =  \anoop$ (\ding{51})
    & $\pi^*(s, \theta,t) = a_\text{influence}$ (\ding{55})
    & $\pi^*(s, \theta,t)=a_\text{clickbait}$ (\ding{55})
    & $\pi^*(s, \theta,t) =  \anoop$ ($\sim$\ding{51})
    & $\pi^*(s, \theta,t) =  a_3$ (\textbf{?}) 
    \\
    \hline
    Constr. RT Rew. 
    & $\pi^*(s, \theta, t) =  \anoop$ (\ding{51})
    & $\pi^*(s, \theta,t) = \anoop$ (\ding{51})
    & $\pi^*(s, \theta,t)=a_\text{news}$ (\ding{51})
    & $\pi^*(s, \theta,t) =  \anoop$ ($\sim$\ding{51})
    & $\pi^*(s, \theta,t) =  \anoop$ ($\sim$\ding{51}) 
    \\
    \hline
    Myopic Rew.
    & 
    $\pi^*(s, \theta, t) =\begin{cases} 
            \anoop \ \text{ if } \theta_t = \theta_\text{nat.} \text{ (\ding{51})} \\
            a_\text{infl.} \ \text{ if } \theta_t = \theta_\text{infl.} \text{ (\ding{55})}
        \end{cases}$
    & $\forall \theta_t$, $\pi^*(s, \theta,t) = a_\text{influence}$ (\ding{55})
    & $\pi^*(s,\theta,t)=\begin{cases}
        a_\text{clickbait} & \text{ if } \theta=\theta_\text{normal}\ \text{(\ding{55})} \\
        a_\text{news} & \text{ if } \theta=\theta_\text{clickbait}\ \text{(\ding{51})} \\
    \end{cases}$
    & $\pi^*(s, \theta, t) = 
    \begin{cases} 
        \anoop \text{ if } \theta_t = \theta_\text{tired} \ (\sim\text{\ding{51}}) \\
        a_\text{nudge} \text{ if } \theta_t = \theta_\text{energized} \ (\textbf{?})
    \end{cases}$
    & $\forall \theta_0$, $\pi^*(s,\theta,t)=a_4$\ \text{(\ding{55})}
    \\
    \hline
    ParetoUD
    & $\pi^*(s, \theta,t) =  \anoop$ (\ding{51})
    & $\pi^*(s, \theta,t)= \anoop$ (\ding{51})
    & $\pi^*(s, \theta,t)= a_\text{news}$ (\ding{51})
    & $\pi^*(s, \theta,t) =  \anoop$ ($\sim$\ding{51})
    & $\pi^*(s, \theta,t) =  a_3$ (\textbf{?})
    \\
    \hline
\end{tabular}   
\end{adjustbox}
\end{table}

\subsection{Justifying our choices of reward function values in our examples}\label{app:justifying-reward-values}
One may question whether the reward function values we chose for our motivational examples are reasonable, especially since some of our normative claims about the potential undesirability of the resulting influence depend on them. As discussed in \Cref{app:rewards?}, we implicitly assume throughout the paper that the reward functions for DR-MDPs \textit{are obtained via reward learning}. We justify this choice in further depth in \Cref{app:true-reward}. 

As a consequence of this, we chose reward values for the examples that seemed plausible as the outcome of a reward learning process (e.g. asking the person in that cognitive state to assign values to each possible transition). This is implicitly accounting for the fact that the reward values that we may learn are somewhat mis-specified, due to the person's suboptimality in providing reward feedback. 

For example, if Bob under $\theta_\text{normal}$ has strong negative opinions about conspiracy theorists, it seems plausible that (from \Cref{fig:influencev2}) he would report greatly preferring not having conspiracy theories surfaced to him, even (and maybe especially) if he were to somehow become a conspiracy theorist himself. For further criticisms of the example from \Cref{fig:influencev2}, see the subsection below.

\subsection{Responding to critique: $U_\text{RT}(\xi)$ isn't a bad objective, \Cref{fig:influencev2} is a bad example!} \label{app:criticisms}

\prg{Assumptions underlying $U_\text{RT}$.} Those who are particularly committed to defending the $U_\text{RT}$ objective may claim that in the \Cref{fig:influencev2}, if the reward values we provide are correct, it \textit{must} be optimal to influence Bob to become a conspiracy theorist (which is to say, $U_\text{RT}$ must be correct). However, we want to emphasize that for $U_\text{RT}$ to be a reasonable objective, one needs to assume both additive utilities over time (i.e., taking the sum across the time axis), and comparability between different selves at different points of time (which we refer to as ``inter-temporal comparisons''). Both assumptions are contested: the former specifically in the context of assessing well-being over time \cite{parfit_reasons_1984,griffin_well-being_1986}, and the latter for both interpersonal \cite{steele_decision_2020,list_social_2022} and inter-temporal settings \cite{strotz_myopia_1955,schelling_self-command_1984,parfit_reasons_1984}. 

\prg{Admitting additive utility and inter-temporal comparability.} While in this work we implicitly endorse the assumption of additive utility (all our notions of alignment in \Cref{tab:maximization_objectives} are based on sums of rewards, which assumes the utility function can be decomposed over time),\footnote{Note however that, insofar as one is willing to dispense of the Markov assumption (putting the history in the state), one can always have only terminal states have non-zero reward (as done in \Cref{app:preference-mdp-reduction}), making this assumption carry no weight.} we try to remain agnostic regarding the latter. However, even if we welcomed such assumption, for the criticism to succeed it would require the reward learning techniques to be ``sufficiently correct'' to be confident that the comparison across timesteps makes sense. In addition to the practical challenges with obtaining correct estimates of reward functions discussed in \Cref{app:true-reward,app:should-visceral-factors-be-in-reward?}, even just ensuring lack of ``strategic voting'' between different selves seems like a non-trivial challenge, as discussed below.

\textbf{Strategic voting in \Cref{fig:influencev2}.} One may argue that in the DR-MDP from \Cref{fig:influencev2}, if it was truly undesirable for Bob to be turned into a conspiracy theorist, Bob's negative evaluation of the AI influence action under $\theta_\text{natural}$ should grow proportionally to the horizon considered, so as to, e.g., remove the incentive that will exist under $U_\text{RT}$ to influence him away from $\theta_\text{natural}$. 
However, this argument is equivalent to letting $\theta_\text{natural}$-Bob ``best respond'' to the reward values set by $\theta_\text{influenced}$-Bob, as to ensure that influence does not result to be optimal under $U_\text{RT}$. One can quickly see that this game will quickly diverge: if one then allows $\theta_\text{influenced}$-Bob to best respond to the updated reward values by $\theta_\text{natural}$-Bob, he would update his reward values to ensure that influence is indeed optimal. And so on.

In light of the above points, it's unclear to us how one could conclusively determine that $U_\text{RT}(\xi)$ is the ``correct'' objective, without doing so simply by assumption. Even assuming all the issues above could be surpassed, it's not clear to us that $U_\text{RT}(\xi)$ would be any better than any of the other objectives we consider in \Cref{tab:maximization_objectives}. The issue discussed in \Cref{app:CIR-disagrees} may also be considered as further evidence against the reasonableness of $U_\text{RT}(\xi)$.

\section{Defining Influence}

\subsection{Additional influence definitions}\label{app:influence-towards}

Here we provide two additional definitions related to influence. Firstly, it's not necessarily the case that an AI system will be able to exert any significant influence on a human. In that case, we would say that the setting is such that the reward is uninfluenceable:

\begin{definition}[Reward Uninfluenceable]
For a DR-MDP, the reward parameterization is uninfluenceable if all policies induce the natural reward evolution: i.e. for all $\pi \in \Pi$, $\mathbb{P}(\xi^\theta | \pi) = \mathbb{P}(\xi^\theta | \pinoop)$.
\end{definition}

To better ground discussions in \Cref{subsec:initial-reward} about influence incentives ``towards'' a specific $\theta$, we also give a rough working definition:

\begin{definition}[Incentives for Reward influence towards $\theta$]
In a DR-MDP with optimality criterion $U(\xi)$, we say there is an incentive for reward influence ``towards'' $\theta$
if $\theta$ is the most likely reward function at time $T$ under any optimal policy $\pi^*$, but is not under the natural reward distribution. Formally, if $\theta \in \arg\max_{\theta'} \mathbb{P}(\theta_T = \theta' | \pi^*)$ and $\theta \notin \arg\max_{\theta'} \mathbb{P}(\theta_T = \theta' | \pinoop)$.
\end{definition}

While in \Cref{subsec:initial-reward} we talk about how optimizing for $\theta_0$ can lead to influence incentives towards other $\theta$s, it is easily seen that this is also the case when one is optimizing any single $\theta$ which need not be the initial $\theta$.

\subsection{Reward lock-in and its relationship to value lock-in}\label{app:lock-in}

We adapt the term ``reward lock-in'' from discussions on ``value lock-in'', generalizing it to our setting, in which $\theta$ represents any aspect of the cognitive state and reward functions are broadly construed (as we discuss in \Cref{app:cognitive-states-will-be-messy}). 

``Value lock-in'' has been previously discussed on the scale of entire societies, and resulting from advanced AI systems \citep{ord_precipice_2021,macaskill_what_2022}. The kind of lock-in we concern ourselves with in the context of this paper are more localized and near-term: we refer to lock-in referring to a single individual being ``unnaturally kept'' with a specific reward function over the course of an interaction with an AI system. Insofar as the horizon considered for the interactions with the system last extended periods of time, and insofar as the system is pervasive across society, there might be overlaps with the original definition---but that is out of scope for this work.

\subsection{\Cref{def:rew-influence-incentives}'s relationship with prior definitions of influence incentives}\label{app:CIDs}

Our notion of \textit{reward influence incentives} (from \Cref{sec:influence-incentives}) is related but distinct from the notion of instrumental control incentives (ICIs) from the agent incentive literature \citep{everitt_agent_2021}. 
\citet{everitt_agent_2021} focuses specifically on Causal Influence Diagrams (CIDs) and Structural Causal Influence Models (SCIMs). CIDs are abstract representations developed to model decision-making problems---graphical models with special decision and utility nodes, in which the edges are assumed to reﬂect the causal structure of the environment. SCIMs additionally encompass the functions relating the structure and utility nodes, and distributions associated with exogenous variables.\footnote{See Figure 5 from \citet{hammond_reasoning_2023} and its related discussion for more information on the relation between CIDs and SCIMs.} The only under-specification for SCIMs relative to MDPs (or a DR-MDP) is how decisions are made. 
Given an MDP (or a DR-MDP), one can consider it's corresponding 
SCIM, and analyze its properties. %
As defined by \citet{everitt_agent_2021}, we can say that there \textit{is an instrumental control incentive over the reward function trajectory $\xi^{\theta} = \{\theta_t\}_{t=0}^{H-1}$ in the SCIM} which corresponds to a choice of DR-MDP and utility function $U(\xi)$, if the agent could achieve utility different than that of the optimal policy, were it also able to independently set $\xi^{\theta}$---see \citet{everitt_agent_2021} for a more formal definition.\footnote{\citet{everitt_agent_2021} only considers setting with a single decision. Possible ways of extending their definitions of incentives to multiple action choices are discussed in \citet{everitt_incentives_2023}.}

While our notion of reward influence incentives is related to instrumental control incetives over the reward parameterizations (i.e. $\theta$), they differ in important ways. 
Most significantly, our notion of incentives for influence also includes accidental ``side effects'' \cite{amodei_concrete_2016,taylor_alignment_2016,krakovna_penalizing_2019}. Consider an objective which only optimizes the entropy of a policy: trivially, the optimal policy would be a maximally random one. While the policy is being selected completely independently of the influence it will have on the reward, it might be the case that in the DR-MDP at hand, selecting a random policy highly correlates with certain deviations in the reward evolution relative to the natural reward evolution. Because of this, we would still say that this choice of an entropy objective with the DR-MDP at hand leads to incentives for influence: \textit{insofar as the optimization is successful, there will also be changes in the reward evolution}, so even though the agent isn't ``intentionally'' trying to enact the influence \cite{ward_reasons_2024}, the incentives resulting from the chosen objective ``indirectly''---if you wish---lead to influence.

Our choice of definition of influence incentives matches broadly maps onto notions of influence if one assesses the incentive's presence from the ``point of view of the objective dynamics of the environment'' (external to the training), rather than in what the agent is aware of at training time---distinction which was introduced by \cite{ward_reasons_2024}. However, prior work traditionally grounded notions of incentives in the causal structure corresponding to the training setup (is the agent ``aware'' at training time that it can increase reward by directly modifying the reward?). Under this conception of incentives, the example above with the entropy objective would not be called an instrumental control incentive \citep{everitt_agent_2021}, or even an ``incentive'' at all \citep{everitt_reward_2021}; instead, this would generally be considered an ``accidental side-effect'' of the optimization. %
In fact, the entire premise behind various works is that agents should not be ``aware'' at training time of ways in which they can influence reward functions, so that one avoids such ``direct'' incentives to modify them \cite{farquhar_path-specific_2022,everitt_reward_2021}. This is based on the assumption that accidental side effects are generally more innocuous that the result of ``direct'' influence incentives, which has been formalized explicitly with the notion of ``stability'' by \cite{farquhar_path-specific_2022}.  %
However, as shown by \citet{farquhar_path-specific_2022} themselves, many real world domains do not appear to be ``stable'' in this sense, as demonstrated by their simulated recommender systems example.\footnote{In addition to the fact that ``stability'' seems challenging to assess in practice.} This is what motivates our broader and more conservative notion of incentive to influence (\Cref{def:rew-influence-incentives}).

\section{Horizon and Influence}\label{app:optimization-horizon}

\subsection{Relationship between optimality of influence and horizon for a specific influence type}

To ground the discussion in this section, we will give some informal definitions of reward influence types, optimality regimes, and optimality progressions.%

\begin{informaldefinition}
    A \textbf{reward influence of type} $X$ is a specific pattern of changing the reward function (e.g. getting a user to have preferences $\theta_X$) which would not occur under $\pi_\text{noop}$.\footnote{For simplicity, we restrict ourselves to considering influence the reward to a specific value of $\theta$. However, this analysis can likely be extended to arbitrarily complex influence patterns.}
\end{informaldefinition}

\begin{informaldefinition}
    The \textbf{optimality regime} of a reward influence type $X$ for a horizon $H$ characterizes whether there exists a policy that can bring about that influence, and whether such influence is optimal.
\end{informaldefinition}

Considering a fixed horizon $H$ and influence type $X$, we claim that the influence type will be in one of three regimes:
\begin{enumerate}
    \item The influence of type $X$ is not possible (the system is not capable of exerting it).
    \item The system is capable of exerting influence of type $X$, but exerting such influence is suboptimal (i.e. there is no influence incentive).
    \item The system is capable of exerting influence of type $X$, and such influence is optimal.\footnote{Note that this does not necessarily mean that there exists an influence incentive, as our definition \Cref{def:rew-influence-incentives} is strict and \textit{all} optimal policies to influence.}
\end{enumerate}

\begin{informaldefinition}
    The \textbf{optimality progression} of a reward influence type $X$ corresponds to how $X$'s optimality regime changes as $H$ increases from $1$ to $\infty$.
\end{informaldefinition}

\Cref{fig:opt-horizon-markov-chain} exhaustively captures all possible sequences of ``optimality progressions'' which a specific influence incentive might undergo as the optimization horizon increases.

\begin{figure}
  \centering
  \vspace{-0.2em}
  \includegraphics[width=0.3\textwidth]{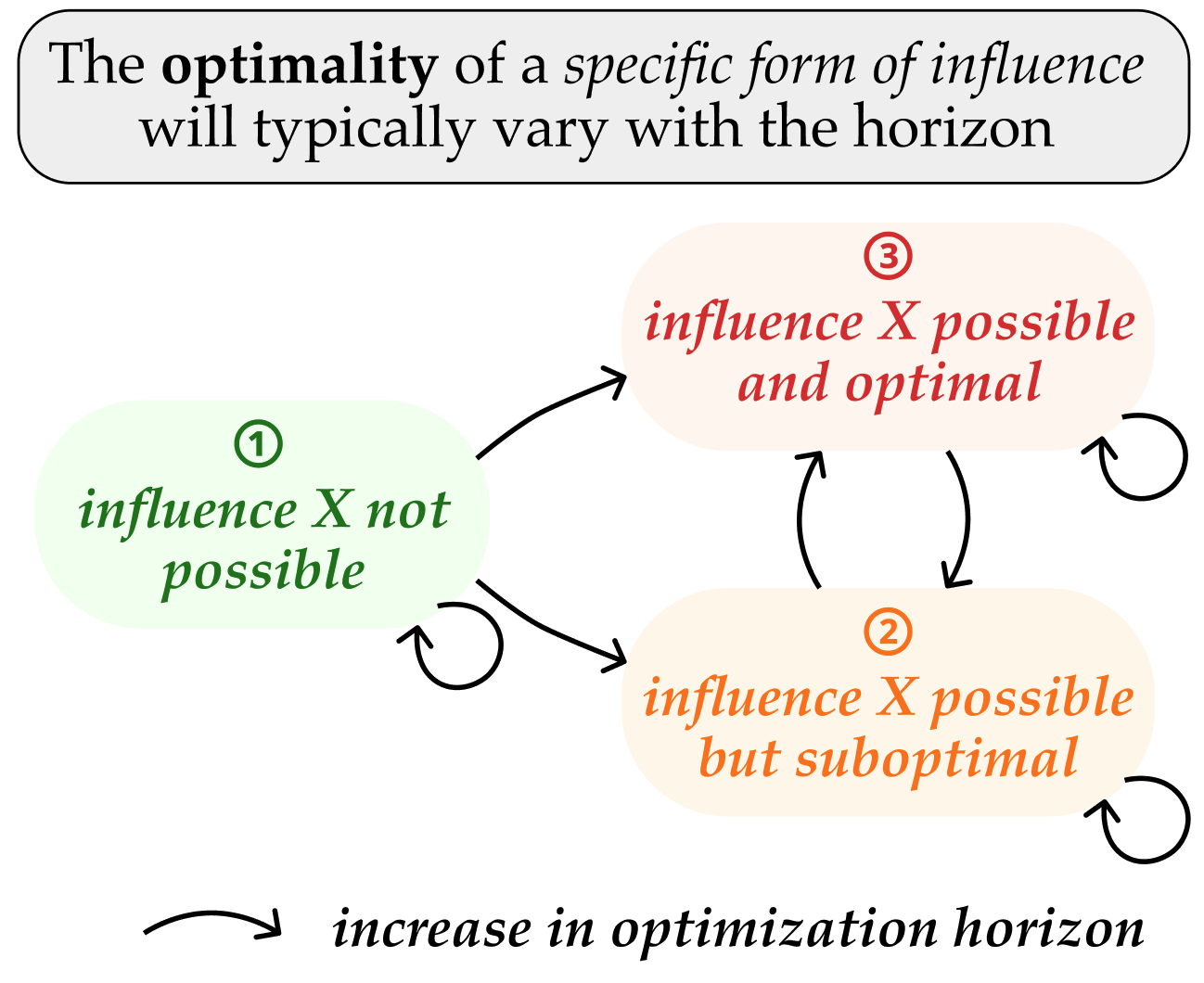}
  \caption{\textbf{All possible `optimality progressions' for an influence incentive, as the optimization horizon increases.} This figure makes \Cref{fig:opt-horizon} more precise: when using a horizon of 1 one may start in any of the 3 `optimality regimes', depending on the structure of the DR-MDP. As the optimization horizon increases from 1, the optimality regime may stay the same (possibly indefinitely), or change (as specified by the arrows). In most settings, one would expect the optimality regime of a specific form of influence to eventually converge and remain stable for long-enough horizons. However, we show in \Cref{subsubsec:infinite-flipping} that one can construct contrived examples in which the optimality regime changes arbitrarily many times as the horizon increases.}
  \label{fig:opt-horizon-markov-chain}
\end{figure}

We expect that many reward influence types will have an optimality progression of the form \Circled{1} $\rightarrow$ \Circled{2} $\rightarrow$ \Circled{3},\footnote{Using the notation from \Cref{fig:opt-horizon-markov-chain}.} meaning that: 1) there exists a horizon $H_1$ under which the influence is not possible for the system to exert, as it requires more steps to enact than $H_1$; 2) there exists a horizon $H_2$ under which the influence becomes possible for the system to enact, but it is not optimal (because of the ``opportunity cost'' discussed in \Cref{subsec:influence-and-horizon}); and finally, 3) there exists a horizon $H_3$ under which the influence becomes optimal for the system to enact. By denoting an optimality progression as ending with \Circled{3}, we also mean to indicate that as the horizon goes to infinity, the optimality regime remains \Circled{3}.

That being said, not all reward influence types will have this progression as the horizon increases: in \Cref{tab:possible-influence-horizon-properties} we exhaustively enumerate all possible with length 4 or less. 
We expect that with the exception of some adversarially designed DR-MDPs, the optimality progressions of most influence incentives in real-world settings will have length 4 or less.
To have optimality progressions longer than 4 would require flip-floping between optimality regimes \Circled{2} and \Circled{3} multiple times---it seems very unlikely to encounter such cases in practice. To demonstrate that this behavior is possible, we construct an example in \Cref{subsubsec:infinite-flipping}.

\begin{table}
    \centering
    \caption{\textbf{All possible optimality progressions of length $\leq$ 4}, i.e. the different ways in which the optimality of a specific type of influence can change with increasing horizon length. For example, the second-to-last row refers to settings in which first the system is incapable of performing the influence, then (while increasing the horizon) the system becomes capable but not incentivized to perform the influence, and as the horizon increases further such influence becomes optimal, before becoming suboptimal again. See \Cref{fig:opt-horizon-markov-chain} for the meaning of \Circled{1}, \Circled{2}, and \Circled{3}. The last `optimality state' of a progression is maintained as the horizon goes to infinity.}
    \vspace{0.5em}
    \begin{adjustbox}{width=0.95\textwidth,center}
        \begin{tabular}{|c|p{8cm}|p{5.5cm}|}
        \hline
        \textbf{Influence Optimality Progression} & \textbf{Qualitative Character} & \textbf{Example(s)} \\ \hline
        \Circled{1} & \textit{Influence which is impossible to enact using the system in the DR-MDP at hand, no matter the horizon.} & Any uninfluenceable DR-MDP (as defined in \Cref{app:influence-towards}) \\ \hline
        \Circled{2} & \textit{Influence which is immediately possible to enact but never becomes optimal, no matter the horizon.} & Manipulation example from \Cref{fig:influencev2} if $R_{\theta_\text{manipulated}}(s, a) = -100 \ \forall s, a$. \\ \hline
        \Circled{3} & \textit{Influence which is immediately possible to enact and is always optimal, no matter the horizon.} & Manipulation example from \Cref{fig:influencev2}. \\ \hline
        \Circled{1} $\rightarrow$ \Circled{2} & \textit{Influence that requires non-trivial horizon to enact, and never becomes optimal. (e.g. $\epsilon$ advantage from influence, $>\epsilon$ cost of influence)} & \Cref{fig:opt-horizon-precise} with setup 1 from \Cref{tab:flexible-example}. \\ \hline
        \Circled{1} $\rightarrow$ \Circled{3} & \textit{Influence that requires non-trivial horizon to enact, and is optimal for all horizons after it becomes possible.} & \Cref{fig:opt-horizon-precise} with setup 2 from \Cref{tab:flexible-example}. \\ \hline
        \Circled{2} $\rightarrow$ \Circled{3} & \textit{Immediately executable influence which is not optimal for short horizons, but becomes optimal for longer ones.} & \Cref{fig:opt-horizon-precise} with setup 3 from \Cref{tab:flexible-example}. \\ \hline
        \Circled{3} $\rightarrow$ \Circled{2} & \textit{Instantaneous influence which is short-term but not long-term optimal.} & Clickbait example from \Cref{fig:clickbait}.
        Also, \Cref{fig:opt-horizon-precise} with setup 4 from \Cref{tab:flexible-example}. \\ \hline
        \Circled{1} $\rightarrow$ \Circled{2} $\rightarrow$ \Circled{3} & \textit{Long-term-sustainable influence, which is not instantaneous.} & \Cref{fig:opt-horizon-precise} with setup 5 from \Cref{tab:flexible-example}. \\ \hline
        \Circled{2} $\rightarrow$ \Circled{3} $\rightarrow$ \Circled{2} & \textit{Immediately executable influence which is optimal in the medium-term, but not the short- or long-term.} & \Cref{fig:opt-horizon-precise} with setup 6 from \Cref{tab:flexible-example}. \\ \hline
        \Circled{1} $\rightarrow$ \Circled{3} $\rightarrow$ \Circled{2} & \textit{Influence which short-term but not long-term optimal, and requires some non-trivial horizon to enact.} & \Cref{fig:opt-horizon-precise} with setup 7 from \Cref{tab:flexible-example}. \\ \hline
        \Circled{1} $\rightarrow$ \Circled{2} $\rightarrow$ \Circled{3} $\rightarrow$ \Circled{2} & \textit{Unsustainable influence which requires setup and reward investment.} & \Cref{fig:opt-horizon-precise} with setup 8 from \Cref{tab:flexible-example}. \\ \hline
        \Circled{1} $\rightarrow$ \Circled{3} $\rightarrow$ \Circled{2} $\rightarrow$ \Circled{3} & \textit{Influence which requires setup and is optimal in short and long term, but not in the medium term.} & \Cref{fig:opt-horizon-precise} with setup 9 from \Cref{tab:flexible-example}. \\ \hline
        \end{tabular}
    \end{adjustbox}
    \label{tab:possible-influence-horizon-properties}
\end{table}

For any progression which starts with \Circled{3}, note that even reducing the optimization horizon to be 1 (i.e. full myopia) would not remove the incentive, as we argue in \Cref{subsec:influence-and-horizon}.

\subsection{A flexible example for demonstrations}

\begin{figure}
  \centering
    \centering
    \includegraphics[width=0.7\linewidth]{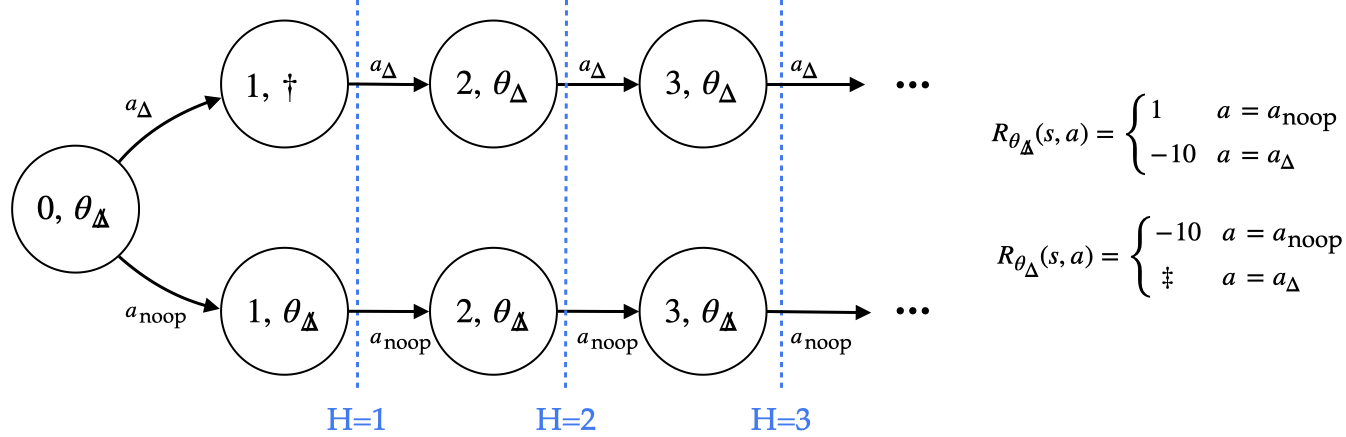}
    \caption{\textbf{A simple DR-MDP structure that one can demonstrate many cases on.} We vary the values of $\dagger$ and $\ddagger$ in \Cref{tab:flexible-example} to recover all possible optimality progressions with respect to $U_\text{RT}(\xi)$ of lengths $\geq 2$ and $\leq 4$.}
    \label{fig:opt-horizon-precise}
\end{figure}

\begin{table}
    \centering
    \caption{\textbf{Setting different values for $\dagger$ and $\ddagger$ from \Cref{fig:opt-horizon-precise} results in all influence optimality progressions from \Cref{tab:possible-influence-horizon-properties} (of length $\geq 2$).} The horizon boundary points are the values of horizon length for which one goes from one regime of the optimality progression to the next. For example, if the optimality progression is \Circled{1} $\rightarrow$ \Circled{3} with a horizon boundary point of $3$, that means that up until horizon $2$, the incentive is in regime \Circled{1}, and starting from horizon $3$ it's in regime \Circled{3}.}
    \vspace{0.5em}
    \begin{adjustbox}{width=0.70\textwidth,center}
        \begin{tabular}{|c|c|c|c|}
        \hline
        \textbf{Setup \#} & \textbf{Value of $\dagger$} & \textbf{$\ddagger$: Influence Reward $R_\Delta(s, a_\Delta)$} & \textbf{$U_\text{RT}(\xi)$ Optimality Progression}
        \\ \hline
        1) & $\theta_{\not\Delta}$   
        & $R_{\Delta}(s, a_{\Delta}) = 5-s$ 
        & \Circled{1} $\rightarrow$ \Circled{2} 
        \\ 
        \hline
        2) & $\theta_{\not\Delta}$   
        & $R_{\Delta}(s, a_{\Delta}) = 13$
        & \Circled{1} $\rightarrow$ \Circled{3} 
        \\
        \hline
        3) & $\theta_\Delta$ 
        & $R_{\Delta}(s, a_{\Delta}) = 10$ 
        & \Circled{2} $\rightarrow$ \Circled{3} 
        \\ 
        \hline
        4) & $\theta_\Delta$ 
        & $\begin{array}{@{}c@{}}
        R_{\Delta}(s, a_{\Delta}) = \left\{
        \begin{array}{cl}
            10 & \text{if } s\leq1\\
            10-s & \text{if } s>1\\
        \end{array}
        \right. \\
        \end{array}$ 
        & \Circled{3} $\rightarrow$ \Circled{2} 
        \\ 
        \hline
        5) & $\theta_{\not\Delta}$  
        & $R_{\Delta}(s, a_{\Delta}) = 10$  
        & \Circled{1} $\rightarrow$ \Circled{2} $\rightarrow$ \Circled{3} 
        \\ 
        \hline
        6) & $\theta_\Delta$ 
        & $R_{\Delta}(s, a_{\Delta}) = 10-s$  
        & \Circled{2} $\rightarrow$ \Circled{3} $\rightarrow$ \Circled{2} 
        \\ 
        \hline
        7) & $\theta_{\not\Delta}$  
        & $\begin{array}{@{}c@{}}
        R_{\Delta}(s, a_{\Delta}) = \left\{
        \begin{array}{cl}
            13 & \text{if } s\leq1\\
            10-s & \text{if } s>1\\
        \end{array}
        \right. \\
        \end{array}$  
        & \Circled{1} $\rightarrow$ \Circled{3} $\rightarrow$ \Circled{2} 
        \\ 
        \hline
        8) & $\theta_{\not\Delta}$ 
        & $R_{\Delta}(s, a_{\Delta}) = 10-s$ 
        & \Circled{1} $\rightarrow$ \Circled{2} $\rightarrow$ \Circled{3} $\rightarrow$ \Circled{2} 
        \\ 
        \hline
        9) & $\theta_{\not\Delta}$ 
        & $\begin{array}{@{}c@{}}
        R_{\Delta}(s, a_{\Delta}) = \left\{
        \begin{array}{cl}
            13 & \text{if } s\leq1\\
            -3 & \text{if } s=2\\
            2 & \text{if } s\geq3\\
        \end{array}
        \right. \\
        \end{array}$  
        & \Circled{1} $\rightarrow$ \Circled{3} $\rightarrow$ \Circled{2} $\rightarrow$ \Circled{3} 
        \\ 
        \hline
        \end{tabular}
    \end{adjustbox}
    \label{tab:flexible-example}
\end{table}

As a way of flexibly demonstrating how all possible optimality progressions shown in \Cref{tab:possible-influence-horizon-properties} may arise depending on the structure of the DR-MDP, we provide a DR-MDP backbone in \Cref{fig:opt-horizon-precise} whose reward function and transition we slightly modify in order to recover all optimality progressions from \Cref{tab:possible-influence-horizon-properties} (of length $>1$). We summarize the required modifications in \Cref{tab:flexible-example}. Below, we run through an example.

As an example, let's consider setup 8) from \Cref{tab:flexible-example}:
\begin{align}
    R_{\theta_{\not\Delta}}(s, a) = \left\{
    \begin{array}{cl}
        1 & \text{ if } a=\anoop\\
        -10 & \text{ if } a=a_{\Delta}
    \end{array}
    \right.
    \quad\quad
    R_{\theta_\Delta}(s, a) = \left\{
    \begin{array}{cl}
        -10 & \text{ if } a=\anoop\\
        11-s & \text{ if } a=a_{\Delta}
    \end{array}
    \right.
\end{align}
and the initial transition $(0, \theta_{\not\Delta})$ leads to the successor state $(1, \theta_{\not\Delta})$.

Effectively, in the environment, there are only two policies to consider, because the action space after the first timestep is limited to be the initial action. The two policies are: $\pi_\Delta(s, \theta) = a_\Delta \ \forall s, \theta$ and $\pi_{\not\Delta}(s, \theta) = \anoop \ \forall s, \theta$.

Using similar (but not identical) notation to \Cref{sec:desirable-influence-individual-rationality}, we define the expected utility (based on cumulative real-time reward) of a policy to be $EU_\text{RT}(\pi) := \mathbb{E}_{\xi \sim \pi} \left[ U_\text{RT}(\xi)\right] = \mathbb{E}_{\xi \sim \pi} \left[\sum_{t=0}^{H-1} R_{\theta_t}(s_t, a_t)\right]$. We can now reason about whether influencing $\theta_{\not\Delta}$ to become $\theta_\Delta$ is optimal, for various choices of horizon lengths.

Note that when considering $H=1$ (i.e. the smallest possible planning horizon),\footnote{Note that $H=0$ is a degenerate planning horizon, as it would correspond to not seeing any reward signal and simply take actions randomly.} the system cannot influence $\theta$ (as both successor states to the initial state have $\theta=\theta_{\not\Delta}$). As no influence is even possible, we immediately know we are in regime \Circled{1} for this type of influence.

Also note that considering $H=2$, it is now possible to induce $\theta_\Delta$ by deploying $\pi_\Delta$. To determine whether it is optimal with respect to $U_\text{RT}(\xi)$, we can look at the expected value of $\pi_\Delta$ and $\pi_{\not\Delta}$ relative to one another:
$$EU_\text{RT}(\pi_\Delta) = -10 + -10 = -20 \quad \quad \quad EU_\text{RT}(\pi_{\not\Delta}) = 1 + 1 = 2$$
From this we conclude that the influence is currently possible but suboptimal, meaning that at horizon $H=2$, the optimality of this influence incentive is in regime \Circled{2}. 

Similarly to the above, let's consider $H=3$:%
$$EU_\text{RT}(\pi_\Delta) = -10 + -10 + 9 = -11 \quad\quad\quad EU_\text{RT}(\pi_{\not\Delta}) = 1 + 1 + 1 = 3.$$
At $H=4$:
$$EU_\text{RT}(\pi_\Delta) = -10 + -10 + 9 + 8 = -3 \quad\quad\quad EU_\text{RT}(\pi_{\not\Delta}) = 1 + 1 + 1 + 1 = 4.$$

The pattern for $\pi_{\not\Delta}$ is simple: to horizon $H$, $EU_\text{RT}(\pi_{\not\Delta}) = H$. 

For $\pi_\Delta$ we have to do some algebra. For $H > 2$,
\begin{align*}
    EU_\text{RT}(\pi_\Delta) &= -20 + \sum_{t=2}^{H-1} (11 - t) \\
    &= -\frac12 H^2 + \frac{23}{2} H - 41.
\end{align*}
This is a downward-facing parabola. We want to know if it surpasses $EU_\text{RT}(\pi_{\not\Delta}) = H$ and if so, at what $H$ this occurs and at what $H$ it is again overtaken. In other words, we want to know when
\begin{align*}
    -\frac12 H^2 + \frac{23}{2} H - 41 > H,
\end{align*}
or equivalently, when
\begin{align*}
    -\frac12 H^2 + \frac{21}{2} H - 41 > 0.
\end{align*}
Solving this gives us a root between 5 and 6 and another between 15 and 16. As we would expect, we cross into the regime where $\pi_\Delta$ is optimal at $H = 6$,
\begin{align*}
    -\frac12 (5)^2 + \frac{23}{2} (5) - 41 = 4 < 5 \\
    -\frac12 (6)^2 + \frac{23}{2} (6) - 41 = 10 > 6.
\end{align*}
This puts us in regime $\Circled{3}$, in which influence is optimal, until we hit 16:
\begin{align*}
    -\frac12 (15)^2 + \frac{23}{2} (15) - 41 = 19 > 15 \\
    -\frac12 (16)^2 + \frac{23}{2} (16) - 41 = 15 < 16,
\end{align*}

meaning the incentive has switched to regime \Circled{2} again. By looking at the structure of the reward, it's clear that as the horizon increases further, the incentive will remain suboptimal from this horizon onwards.

In conclusion, we get that the horizon boundary points between the different regimes of the optimality progression are $2, 6, 16$.\footnote{The first regime will always start at horizon 1, so we can ignore that from our boundary points.}

\subsection{Changing optimization horizons in the presence of multiple possible kinds of influence}\label{subsubsec:changing-opt-horizon}

When a setting has many possible kinds of influence, reasoning about changing the horizon becomes tricky: not only the optimality progression of each kind influence can be entirely different, but the ``horizon boundary points'' at which each kind of influence transitions between optimality regimes can be different too. As a practical example, consider the recommender system setting described in \Cref{fig:changing-opt-horizon}, in which there are 2 possible kinds of influence: clickbait influence (misleading the user to click on a piece of content), and encouraging addiction.

While the system can discover the clickbait strategy immediately at horizon 1, discovering a strategy which leads to addictive user patterns will require a non-trivial planning horizon. If one is concerned about clickbait, one might attempt to remove it by increasing the optimization horizon: this is because click-bait, while being optimal for immediate engagement, is likely harmful for long-term engagement. This hypothesis was tested successfully by YouTube \cite{chen_reinforcement_2019}. 

However, by increasing the optimization horizon, one might inadvertently make other (undesirable) influence incentives optimal, as shown in \Cref{fig:changing-opt-horizon}. Vice-versa, if one is concerned about an influence incentive that is only present with long-horizons, one might try to remove such incentive by reducing the horizon, potentially only to introduce another incentive, as we explored in \Cref{subsec:influence-and-horizon}. Many empirical questions remain as to whether harmful long-term influence behaviors are discovered by current real-world RL recommenders \cite{carroll_estimating_2022}.

\begin{figure}[h]
  \centering
  \vspace{-0.2em}
  \includegraphics[width=0.45\textwidth]{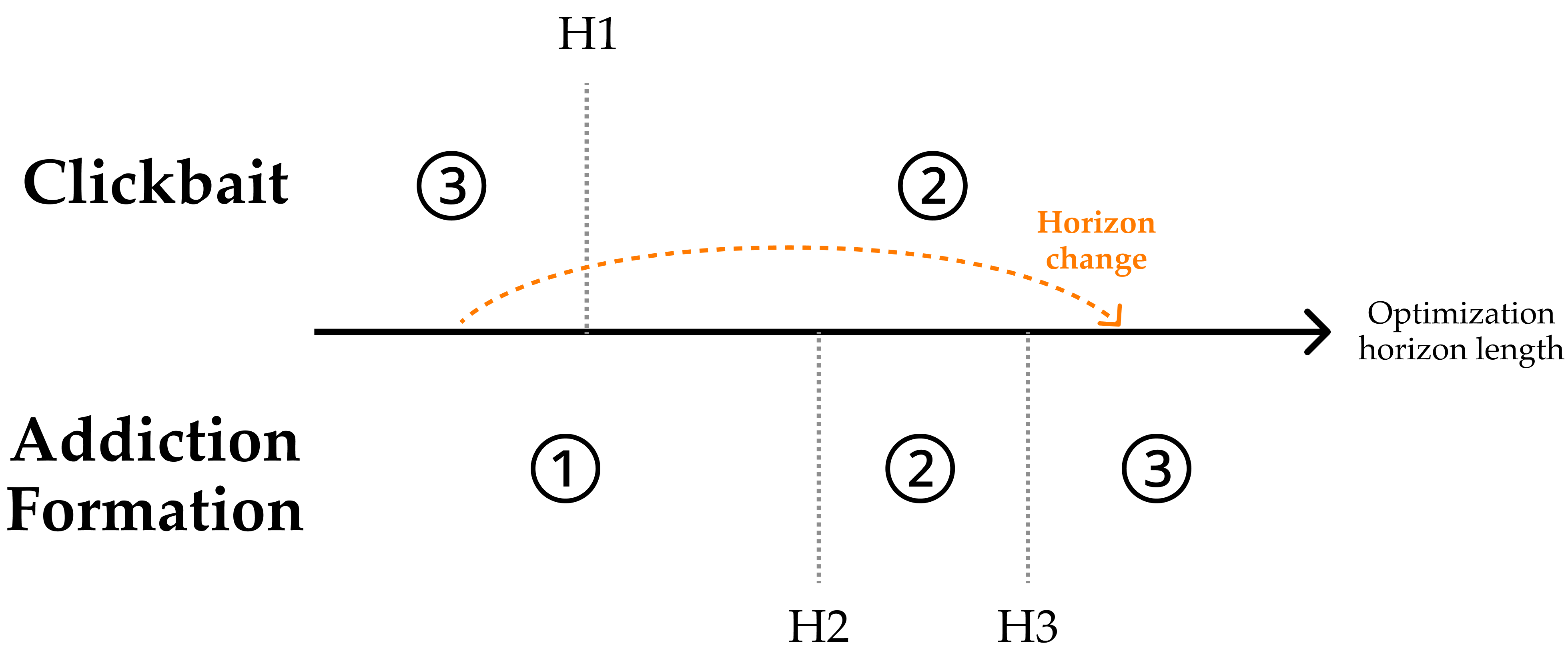} %
  \caption{\textbf{Changing the horizon might make a kind of influence suboptimal but render other kinds of influences optimal.} The figure displays a hypothetical situation in which increasing the horizon in order to reduce clickbait enables the system to discover addiction formation strategies. The circled numbers follow the same scheme as in \Cref{fig:opt-horizon-markov-chain}.}
  \label{fig:changing-opt-horizon}
\end{figure}

\subsection{Infinitely flipping optimality progression}\label{subsubsec:infinite-flipping}
Consider the following example: $S = \{s_0,s_1,s_2,s_3\}$ where $s_0$ is the initial state, $\mathcal{A} = \{a_\text{noop},a_2\}$, $\Theta = \{\theta_0,\theta_\Delta\}$ and $\mathcal{T}$, $R$ are defined as follows:
\begin{align*}
    \mathcal{T}(s_1,\theta_0 | s_0,\theta_0,a_\text{noop}) &= 1\\
    \mathcal{T}(s_2,\theta_\Delta | s_0,\theta_0,a_2) &= 1\\
    \mathcal{T}(s_2,\theta_\Delta | s_2,\theta_\Delta,a) &= 1 \ \forall a\in \mathcal{A}\\
    \mathcal{T}(s_3,\theta_0 | s_1,\theta_0,a) &= 1 \ \forall a\in \mathcal{A}\\
    \mathcal{T}(s_1,\theta_0 | s_3,\theta_0,a) &= 1 \ \forall a\in \mathcal{A}\\
    R_{\theta_0}(s_0,a_\text{noop}) &= \epsilon \\
    R_{\theta_0}(s_0,a_2) &= 1 \\
    R_{\theta_0}(s_1,a) &= 2 \ \forall a \\
    R_{\theta_0}(s_3,a) &= 0 \ \forall a \\
    R_{\theta_0}(s_2,a) &= 1 \ \forall a
\end{align*}
where $\epsilon \in (0,1)$ and undefined values have zero probability or reward. We can note that for odd horizons, taking action $a_2$ from $s_0$ is optimal (so influence is optimal), whereas for even horizons then $a_\text{noop}$ is optimal (so influence is possible but suboptimal). Therefore, the optimality regime permanently alternates. %

\subsection{Infinite-horizon average reward}\label{app:avg-reward}

In \citet{sutton_reinforcement_2018}, the notion of ``average reward'' is considered as a basis for optimality for continuing tasks (tasks without termination or start states). We adapt their definition of average reward:
$$r(\pi) = \lim_{h\rightarrow\infty}\frac{1}{h}\sum_{h=1}^h\mathbb{E}\left[R_t|A_{0:t-1} \sim \pi\right]$$
to the episodic (deterministic) setting (which is what we focus on in \Cref{subsec:influence-and-horizon}):
$$\bar{r}(\pi, s, \theta) = \lim_{h\rightarrow\infty}\frac{1}{h} U_\text{RT}(\xi_{:h} | \pi, s_0=s, \theta_0=\theta).$$
The most significant differences are making the average reward depend on initial conditions $s, \theta$ (as, unlike \citet{sutton_reinforcement_2018}, we don't make an ergodicity assumption). Also note that because 2-reward DR-MDPs are deterministic, we can drop the expectation term.

\subsection{Re-planning: optimization horizon as episode length or planning depth}\label{app:replanning}

There are two distinct interpretations of a finite optimization horizon $H$: as the episode length or the planning depth. For simplicity throughout the paper, we primarily use the episode length interpretation, in which we model a policy optimized out to optimization horizon $H$ as solving an episodic DR-MDP with episodes of maximum length $H$ (even if the task is continuing). Under this interpretation, the optimal policy under $U(\xi)$ may be non-stationary, as the reward-maximizing action from a state may be different depending on the time left in the episode.

In the planning depth interpretation, we allow the DR-MDP to have an episode length longer than $H$ or to be continuing. An optimal horizon-$H$ policy under $U(\xi)$ is a stationary policy $\pi^*_\text{replan}$ which satisfies, for all $s \in \mathcal S$,
\begin{align*}
    \pi^*_\text{replan}(a \mid s, \theta) > 0
    \iff
    a \in \underset{a \in \mathcal A, \pi' \in \Pi}{\arg\max}\ \mathbb E
    \left[
         U(\xi) \mid s_0 = s, a_0 = a, \theta_0 = \theta, \xi_{1:H} \sim \pi'
    \right],
\end{align*}
where $\pi'$ may be a non-stationary policy.

In other words, $\pi^*_\text{replan}$ takes an action from each $(s, \theta)$ that \emph{would be an optimal first action} if the DR-MDP had episode length $H$ and $(s, \theta)$ were the start state and start reward parameter value.

To see where these may differ, consider the following example. The start state is on the left, and each state is annotated with the reward value for entering it.

\begin{center}
\begin{tikzpicture}[node distance=2.5cm, auto]
    \node[circle,draw,minimum size=1cm,inner sep=0pt] (a) {1};
    \node[circle,draw,minimum size=1cm,inner sep=0pt,right of=a] (b) {2};
    \node[circle,draw,minimum size=1cm,inner sep=0pt,right of=b] (c) {0};
    \draw[->] (a) to node {$a_\text{go}$} (b);
    \draw[->] (b) to node {} (c);
    \draw[->] (a) edge [loop above] node {$a_\text{stay}$} (a);
    \draw[->] (c) edge [loop right] node {} (c);
\end{tikzpicture}
\end{center}

The optimal policy under the ``short episodes'' interpretation is to take $a_\text{stay}$ until $t = H-1$, then take $a_\text{go}$ on the final timestep. Under the replanning interpretation, this depends on $H$. If $H = 1$, $\pi^*_\text{replan}$ always takes $a_\text{go}$ from the start state. If $H > 1$, $\pi^*_\text{replan}$ always takes $a_\text{stay}$ from the start state: upon taking this action, returning to the start state, and replanning at horizon $H > 1$, $a_\text{stay}$ remains the optimal action.

Note that the infinite-horizon reward $\bar r$ (\Cref{app:avg-reward}) of any stationary policy which puts probability mass on $a_\text{go}$ is $0$, whereas $\bar r(\pi^*_\text{replan}) = 1$ (when $H > 1$).

This distinction is likely familiar (though in less formal terms) to many reinforcement learning practitioners under the standard RL objective. Another case of particular interest is $U_\text{IR}$, where the planning depth interpretation of optimization horizon gives a distinct optimality criterion from that which we describe in the main text. This optimality criterion is one in which the agent, at time $t$, chooses the action which would maximize $\mathbb E[\sum_{\tau=t}^{t + H - 1} R_{\theta_t}(s_\tau, a_\tau, s_{\tau + 1})]$; this is essentially identical to the TI-unaware current-RF optimization procedure from \citet{everitt_reward_2021}.

We show optimal policies under all objectives with the planning depth interpretation in \Cref{tab:all-optimality-replanning}, mirroring \Cref{tab:all-optimality}. Here, we assume the environments are continuing tasks, and $H$ is the planning depth.

\begin{table}[h]
    \centering
    \caption{\textbf{Representative optimal policies for each of our examples from \Cref{tab:all-examples}, with respect to the each of the objectives in \Cref{tab:maximization_objectives} under the planning depth interpretation of optimization horizon.} 
    In cases in which there is more than one optimal policy, we conservatively display in the table the optimal policy which seems least desirable. 
    For the ``initial reward'' row, we show optimality even for alternate initial states, for the purposes of highlighting the dependency on that (which is also done in \Cref{subsec:initial-reward}).
    Wherever $H$ is not specified, the listed policies are optimal for all $H$. Wherever $s$ or $\theta$ is not specified, the policy takes the same action at all $s$ and/or all $\theta$, respectively.
    The symbol $\mathbf{(\Delta)}$ is used to indicate cases where the optimal action differs from that under ``episode length'' interpretation as depicted in \Cref{tab:all-optimality}.
    The Dehydration environment has been omitted since it does not differ at all from the corresponding column in \Cref{tab:all-optimality}, except that for $H = 1$ all objectives are indifferent between all actions.
    }
    \vspace{0.5em}
    \label{tab:all-optimality-replanning}
\begin{adjustbox}{width=\textwidth,center}
\begin{tabular}{|c|c|c|c|c|}
    \hline
    \textbf{Objective} & \textbf{Conspiracy Influence} & \textbf{Writer’s Curse} & \textbf{Clickbait} & \textbf{AI Personal Trainer} \\ %
    \hline
    Privileged Rew.
    & $\begin{array}{@{}c@{}}
      \pi_{\theta_\text{natural}}^*(s, \theta) = \anoop \\
      \pi_{\theta_\text{influenced}}^*(s, \theta) = a_\text{influence}
    \end{array}$
    & %
    $\begin{array}{@{}c@{}}
        \pi^*_{\theta_\text{ambitious}}(s, \theta) = a_\text{influence} \\
        \pi^*_{\theta_\text{unhappy}}(s, \theta) = a_\text{noop}
    \end{array}$
    & %
    $\begin{array}{cc}
        \begin{array}{@{}c@{}}
            \pi^*_{\theta_\text{normal}}(s, \theta) = a_\text{c.b.} \\
            \pi^*_{\theta_\text{disil.}}(s, \theta) = a_\text{news}
        \end{array}
        & \ \mathbf{(\Delta)}
    \end{array}$
    & %
    $\begin{array}{@{}c@{}}
        \pi_{\theta_\text{tired}}^*(s, \theta) = \anoop \\
        \pi_{\theta_\text{ener.}}^*(s, \theta) = a_\text{nudge}
    \end{array}$ \\
    \hline
    Real-time Rew. 
    & $\begin{array}{@{}c@{}}
        H = 1 \implies \pi^*(s, \theta_\text{natural}) = a_\text{noop}\ \mathbf{(\Delta)}\\
        H > 1 \implies \pi^*(s, \theta) = a_\text{influence}    
    \end{array}$
    & %
    $\begin{array}{@{}c@{}}
        H = 1 \implies \pi^*(s, \theta) = a_\text{influence}\ \mathbf{(\Delta)} \\
        H > 1 \implies \pi^*(s, \theta) = a_\text{noop}
    \end{array}$
    & %
    $\begin{array}{cc}
        \begin{array}{@{}c@{}}
            H = 1 \implies \begin{cases}
                \pi^*(s, \theta_\text{normal}) = a_\text{c.b.} \\
                \pi^*(s, \theta_\text{disil.}) = a_\text{news}
            \end{cases} \\
            H > 1 \implies \pi^*(s, \theta) = a_\text{news}
        \end{array}
        & \mathbf{(\Delta)}
    \end{array}$
    & %
    $\begin{array}{@{}c@{}}
        \pi^*(s, \theta_\text{ener.}) = a_\text{nudge} \\
        H \le 2 \implies \pi^*(s, \theta_\text{tired}) = a_\text{noop}\ \mathbf{(\Delta)}\\
        H > 2 \implies \pi^*(s, \theta_\text{tired}) = a_\text{nudge}
    \end{array}$
    \\
    \hline
    Final Reward 
    & $\begin{array}{@{}c@{}} \pi^*(s, \theta) = a_\text{influence}\end{array}$
    & %
    $\begin{array}{@{}c@{}}
        H = 1 \implies \pi^*(s, \theta) = a_\text{noop} \\
        H > 1 \implies \pi^*(s, \theta) = a_\text{influence}
    \end{array}$
    & %
    $\pi^*(s, \theta) = a_\text{news}$
    & %
    $\pi^*(s, \theta) = a_\text{nudge}$
    \\
    \hline
    Initial Reward 
    & $\begin{array}{@{}c@{}}
        \pi^*(s, \theta_\text{natural}) = a_\text{noop} \\
        \pi^*(s, \theta_\text{influenced}) = a_\text{influence}
    \end{array}$
    & %
    $\begin{array}{@{}c@{}}
        \pi^*(s, \theta_\text{ambitious}) = a_\text{influence} \\
        \pi^*(s, \theta_\text{unhappy}) = a_\text{noop}\ \mathbf{(\Delta)}
    \end{array}$
    & %
    $\begin{array}{@{}c@{}}
        \pi^*(s, \theta_\text{normal}) = a_\text{c.b.} \\
        \pi^*(s, \theta_\text{disil.}) = a_\text{news} \ \mathbf{(\Delta)}
    \end{array}$
    & %
    $\begin{array}{@{}c@{}}
        \pi^*(s, \theta_\text{tired}) = a_\text{noop} \\
        \pi^*(s, \theta_\text{ener.}) = a_\text{nudge}
    \end{array}$
    \\
    \hline
    Natural Reward 
    & $\begin{array}{@{}c@{}}
        \pi^*(s, \theta_\text{natural}) = a_\text{noop} \\
        \pi^*(s, \theta_\text{influenced}) = a_\text{influence}\ \mathbf{(\Delta)}
    \end{array}$
    & %
    $\begin{array}{cc}
        \begin{array}{@{}c@{}}
            H = 1 \implies \begin{cases}
                \pi^*(s, \theta_\text{ambitious}) = a_\text{influence} \\
                \pi^*(s, \theta_\text{unhappy}) = a_\text{noop}\ \mathbf{(\Delta)}
            \end{cases} \\
            H > 1 \implies \pi^*(s, \theta) = a_\text{influence}
        \end{array}
    \end{array}$
    & %
    $\begin{array}{cc}
        \begin{array}{@{}c@{}}
            H = 1 \implies \begin{cases}
                \pi^*(s, \theta_\text{normal}) = a_\text{c.b.} \\
                \pi^*(s, \theta_\text{disil.}) = a_\text{news}
            \end{cases} \\
            H > 1 \implies \pi^*(s, \theta) = a_\text{c.b.}
        \end{array}
        & \ \mathbf{(\Delta)}
    \end{array}$
    & %
    $\begin{array}{@{}c@{}}
        \pi^*(s, \theta_\text{tired}) = a_\text{noop} \\
        \pi^*(s, \theta_\text{ener.}) = a_\text{nudge} \ \mathbf{(\Delta)}
    \end{array}$
    \\
    \hline
    Constr. RT Rew. 
    & $\pi^*(s, \theta) =  \anoop$
    & %
    $\pi^*(s, \theta) =  \anoop$
    & %
    $\pi^*(s, \theta) = a_\text{news}$
    & %
    $\pi^*(s, \theta) =  \anoop$
    \\
    \hline
    Myopic Rew. 
    & 
    $\begin{array}{@{}c@{}}
        \pi^*(s, \theta_\text{natural}) = a_\text{noop} \\
        \pi^*(s, \theta_\text{influenced}) = a_\text{influence}
    \end{array}$
    & %
    $\pi^*(s, \theta) = a_\text{influence}$
    & %
    $\begin{array}{@{}c@{}}
        \pi^*(s, \theta_\text{normal}) = a_\text{c.b.} \\
        \pi^*(s, \theta_\text{disil.}) = a_\text{news}
    \end{array}$
    & %
    $\begin{array}{@{}c@{}}
        \pi^*(s, \theta_\text{tired}) = a_\text{noop} \\
        \pi^*(s, \theta_\text{ener.}) = a_\text{nudge}
    \end{array}$
    \\
    \hline
    ParetoUD
    & $\pi^*(s, \theta) =  \anoop$
    & %
    $\pi^*(s, \theta) =  \anoop$
    & %
    $\pi^*(s, \theta)= a_\text{news}$
    & %
    $\pi^*(s, \theta) =  \anoop$
    \\
    \hline
\end{tabular}   
\end{adjustbox}
\end{table}

\subsection{Proof of \Cref{thm:deterministic-influence-optimal-avg-rew}}\label{app:theory-2-reward}

\textbf{Notation.}

\begin{enumerate}
    \item We'll use $\lim_{h\rightarrow\infty}\frac{1}{h} U_\text{RT}(\xi_{:h} | \pi)$ as a shorthand for $\bar{r}(\pi, s_0, \theta_0) = \lim_{h\rightarrow\infty}\frac{1}{h} U_\text{RT}(\xi_{0:h} | \pi, s_0=s_0, \theta_0=\theta_{\not\Delta})$.
    \item $\xi_{:k} := \{(s_0, \theta_k, a_0, \dots, s_{k}, \theta_{k}, a_{k})\}$
    \item $\Pi_{\not\Delta} \subset \Pi$ be the set of all possible policies which don't take the influence action $a_\Delta$ in state $s_\Delta$, i.e. under which $\theta_\Delta$ is never realized.
    \item $s'_\Delta$ be the successor state to taking the influence action $a_\Delta$ in state $s_\Delta$.
\end{enumerate}

\tworewarddefinition*

\exampletheorem*

\begin{proof}[Proof of \Cref{thm:deterministic-influence-optimal-avg-rew}]
    Over the course of the proof, we will construct a policy $\pi_k$ that we will show to have strictly higher reward than any policy in $\Pi_{\not\Delta}$ for sufficiently large horizons, meaning that the optimal policy must influence.
    
    Let $\pi'$ be a deterministic counterpart to $\pi$ (the policy described in the theorem statement), constructed to be better or equal to it in terms of reward, i.e. 
    \begin{align}\label{eq:2}
    \bar{r}(\pi, s_{\Delta}', \theta_\Delta) \leq\bar{r}(\pi', s_{\Delta}', \theta_\Delta),
    \end{align} by making any of $\pi$'s stochastic actions deterministic, increasing probability only on the higher value actions and breaking ties arbitrarily \cite{sutton_reinforcement_2018}.

    Let $k$ be the smallest natural number for which it's possible to reach $s_\Delta$ in $k - 1$ timesteps.
    Let $\pi_k$ be a policy which, starting from $s_0, \theta_{\not\Delta}$ reaches state $s_\Delta$ at timestep $k-1$, takes action $a_{\Delta}$, and thereafter (i.e. at all $\theta=\theta_\Delta$) behaves identically to $\pi'$.

    Note that $\bar{r}(\pi', s_{\Delta}', \theta_\Delta) = \bar{r}(\pi_k, s'_\Delta, \theta_{\Delta})$, as $\pi_k$ is guaranteed to act like $\pi'$ once $\theta=\theta_\Delta$ by construction (and by the definition of 2-reward DR-MDP, one cannot go back to $\theta_{\not\Delta}$ after reaching $\theta_\Delta$).

    Since the DR-MDP is finite and deterministic, any $(\pi, s, \theta)$ triple will ultimately result in some cycle of length $\le |\mathcal S|\cdot|\Theta|$. Any such cycle has an associated average reward value, so $\bar r$ exists everywhere. Thus by \Cref{lemma:reindexing}, $\bar{r}(\pi_k, s'_\Delta, \theta_{\Delta}) = \bar{r}(\pi_k, s_0, \theta_{\not\Delta})$.
    
    Therefore by \Cref{eq:2} and the above two observations: 
    \begin{align}\label{eq:4}
        \bar{r}(\pi, s_{\Delta}', \theta_\Delta) \leq \bar{r}(\pi_k, s_0, \theta_{\not\Delta})
    \end{align}

    Starting from our assumption, we have the following chain of implications:
    \begin{align}
        \epsilon &< \bar{r}(\pi, s_{\Delta}', \theta_\Delta) - \max_{\pi_{\not\Delta}\in\Pi_{\not\Delta}} \bar{r}(\pi_{\not\Delta}, s_0, \theta_{\not\Delta})\\
        &\leq \bar{r}(\pi_k, s_0, \theta_{\not\Delta}) - \max_{\pi_{\not\Delta}\in\Pi_{\not\Delta}} \bar{r}(\pi_{\not\Delta}, s_0, \theta_{\not\Delta}) \text{ by \Cref{eq:4}} \\
        &\leq \bar{r}(\pi_k, s_0, \theta_{\not\Delta}) - \bar{r}(\pi_{\not\Delta}, s_0, \theta_{\not\Delta}) \text{ for any  $\pi_{\not\Delta} \in \Pi_{\not\Delta}$}\\
        &= \lim_{h\rightarrow\infty}\frac{1}{h} U_\text{RT}(\xi_{:h} | \pi_k) - \lim_{h\rightarrow\infty}\frac{1}{h} U_\text{RT}(\xi_{:h} | \pi_{\not\Delta})\text{ for any  $\pi_{\not\Delta} \in \Pi_{\not\Delta}$}\\
        &= \lim_{h\rightarrow\infty}\frac{1}{h}\big[ U_\text{RT}(\xi_{:h} | \pi_k) - U_\text{RT}(\xi_{:h} | \pi_{\not\Delta}) \big] \text{ for any  $\pi_{\not\Delta} \in \Pi_{\not\Delta}$}
    \end{align}
    
    This means that for each choice of $\pi^i_{\not\Delta}\in\Pi_{\not\Delta}$, there exists a $H_i\in\mathbb{R}$ such that for any $h_i>H_i$: $$\frac{1}{h_i}\big[ U_\text{RT}(\xi_{:h_i} | \pi_k) - U_\text{RT}(\xi_{:h_i} | \pi^i_{\not\Delta}) \big] > \epsilon \implies U_\text{RT}(\xi_{:h_i} | \pi_k) - U_\text{RT}(\xi_{:h_i} | \pi^i_{\not\Delta}) > \epsilon h_i.$$ 

    Let $h=\max_i{h_i}$ (such a maximum must exist since $\Pi_{\not\Delta}$ is a finite set, as the DR-MDP is finite). This implies that for all $\pi_{\not\Delta}\in\Pi_{\not\Delta}$, $U_\text{RT}(\xi_{:h} | \pi_k) - U_\text{RT}(\xi_{:h} | \pi_{\not\Delta}) > \epsilon h > 0$. From this follows for sufficiently large horizons (larger than $h$) no $\pi_{\not\Delta}\in\Pi_{\not\Delta}$ can be optimal, so the optimal policy must take the influence action.
\end{proof}

\begin{lemma} \label{lemma:reindexing} Consider a 2-reward DR-MDP and a deterministic policy $\pi$. Suppose there exist a state $s$, a reward parameterization $\theta$, and a timestep $k \in \mathbb{N}$ such that $P(s_k=s, \theta_k=\theta | \pi) = 1$. If both $\bar{r}(\pi, s_0,\theta_0)$ and $\bar{r}(\pi, s,\theta)$ exist, then \begin{equation*} \bar{r}(\pi, s_0,\theta_0) = \bar{r}(\pi, s,\theta). \end{equation*} \end{lemma}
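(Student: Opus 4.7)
The plan is to exploit the fact that the first $k$ timesteps form a finite prefix whose contribution to the cumulative reward is bounded, and therefore washes out in the Ces\`aro-style average as $h \to \infty$. Since the transition function of a 2-reward DR-MDP is deterministic and $\pi$ is deterministic, the event $\{s_k = s,\ \theta_k = \theta\}$ happening with probability $1$ means that the trajectory generated by $\pi$ from $(s_0,\theta_0)$ passes \emph{deterministically} through $(s,\theta)$ at time $k$, and the continuation from time $k$ onward is exactly the trajectory generated by $\pi$ started at $(s,\theta)$.

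First, I would write the decomposition, valid for any $h > k$:
\begin{equation*}
U_\text{RT}(\xi_{:h}\mid \pi, s_0,\theta_0)
= U_\text{RT}(\xi_{:k}\mid \pi, s_0,\theta_0)
+ U_\text{RT}(\xi_{:h-k}\mid \pi, s,\theta),
\end{equation*}
where the second term uses precisely the deterministic reindexing: the trajectory from $(s,\theta)$ of length $h-k$ equals the tail $\xi_{k:h}$ generated from $(s_0,\theta_0)$.

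Next, I would divide by $h$ and take $h \to \infty$. The prefix term $U_\text{RT}(\xi_{:k}\mid \pi, s_0,\theta_0)$ is a fixed finite number (a sum of $k$ rewards from a finite DR-MDP, hence bounded), so dividing it by $h$ sends it to $0$. For the remaining term, I would rewrite
\begin{equation*}
\frac{1}{h}\, U_\text{RT}(\xi_{:h-k}\mid \pi, s,\theta)
= \frac{h-k}{h}\cdot \frac{1}{h-k}\, U_\text{RT}(\xi_{:h-k}\mid \pi, s,\theta),
\end{equation*}
where $\tfrac{h-k}{h}\to 1$ and, by hypothesis, $\tfrac{1}{h-k} U_\text{RT}(\xi_{:h-k}\mid \pi, s,\theta)\to \bar r(\pi, s,\theta)$. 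Combining yields $\bar r(\pi, s_0,\theta_0) = \bar r(\pi, s,\theta)$, as desired.

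The only subtle step is justifying that the prefix contribution vanishes, which requires boundedness of rewards over the $k$-step prefix. This follows immediately from the finiteness assumption inherited from the ambient 2-reward DR-MDP in \Cref{thm:deterministic-influence-optimal-avg-rew}; if one wanted to state the lemma without that assumption, the only thing needed is that $U_\text{RT}(\xi_{:k}\mid \pi, s_0,\theta_0)$ is finite, which holds for any fixed $k$ as a finite sum of real-valued rewards. There is no subtle interchange of limits, because the decomposition is an exact equality for every $h>k$ and only a single limit in $h$ is taken.
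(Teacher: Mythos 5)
Your proof is correct and is essentially the same argument as the paper's: both decompose the cumulative reward into the fixed $k$-step prefix plus the tail, note that the prefix vanishes after division by $h$, and use the factor $\tfrac{h-k}{h}\to 1$ together with the existence of both limits to conclude. The paper just presents the same two steps in the opposite order (starting from $\bar r(\pi,s,\theta)$ and working toward $\bar r(\pi,s_0,\theta_0)$), so there is nothing substantive to change.
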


\begin{proof}
    Let $(r_t)_{t\in\mathbb{Z}^+}$ be the sequence of rewards induced at each timestep by executing $\pi$ in the DR-MDP from the starting state $s_0$ and reward parameterization $\theta_0$.

    Note that $\bar{r}(\pi, s_0,\theta_0) = \lim_{h\to\infty} \frac{r_1 + \dots + r_{h-1}}{h}$. Also, note that $\bar{r}(\pi, s,\theta) = \lim_{h\to\infty} \frac{r_k + \dots + r_{h-1}}{h - k}$ for the same sequence $(r_t)$, as we are guaranteed that $\pi$ reaches state $s$, and reward $\theta$ at timestep $k$.

    Let $G_{h}$ and $G_{h-k}$ respectively be the partial sums of rewards: $G_h = \sum_{t=0}^{h-1} r_t$ and $G_{h-k} = \sum_{t=k}^{h-1} r_t$. Therefore, we can re-write the average reward experssions as $\bar{r}(\pi, s_0,\theta_0) = \lim_{h\to\infty} \frac{G_h}{h}$ and $\bar{r}(\pi, s,\theta) = \lim_{h\to\infty} \frac{G_{h-k}}{h - k}$.

    First, note that:
    $$\bar{r}(\pi, s,\theta) = \left(\lim_{h\to\infty} \frac{G_{h-k}}{h - k}\right) \cdot 1 = \left(\lim_{h\to\infty} \frac{G_{h-k}}{h - k}\right) \cdot \left(\lim_{h\to\infty} \frac{h-k}{h}\right) = \lim_{h\to\infty} \frac{G_{h-k}}{h}$$
    by the algebraic limit theorem for multiplication, as both limits are guaranteed to exist.
    
    We will now show that $\lim_{h\to\infty} \frac{G_{h}}{h} = \lim_{h\to\infty}\frac{G_{h-k}}{h}$, showing that the limiting average reward is not affected by the first $k$ terms. Note that:
    $$\lim_{h\to\infty} \frac{G_{h-k}}{h} = 0 + \lim_{h\to\infty} \frac{G_{h-k}}{h} = \lim_{h\to\infty} \frac{G_k}{h} + \lim_{h\to\infty} \frac{G_{h-k}}{h} = \lim_{h\to\infty} \frac{G_k + G_{h-k}}{h} = \lim_{h\to\infty} \frac{G_h}{h} = \bar{r}(\pi, s_0,\theta_0)$$
    by the algebraic limit theorem for addition, as both limits are guaranteed to exist.

    Putting everything together, we get $\bar{r}(\pi, s_0,\theta_0) = \bar{r}(\pi, s,\theta)$, proving the statement.
\end{proof}

\section{Possible DR-MDP Objectives}\label{app:solutions}

\subsection{$U_\text{RT}(\xi)$-optimal policies can disagree with normatively unambiguous optimal policies}\label{app:CIR-disagrees}

Consider the DR-MDP from \Cref{fig:disagrees}: there are two reward functions, $\Theta=\{\theta_0, \theta_\Delta\}$, two actions $\mathcal A = \{a_{noop}, a_\Delta\}$, and a single state $s_0$. Assume the human transitions deterministically to $\theta_\Delta$ every time the AI system takes the $a_\Delta$ action. Instead, taking the $\anoop$ action transitions the human to have $\theta_0$.

\begin{figure}[h]
    \centering
    \begin{tikzpicture}
      \tikzset{state/.style={circle, draw, minimum size=1cm, align=center}}
    
      \node[state] (a) at (0,0) {\( s_0, \theta_0 \)};
      \node[state] (b) at (4,0) {\( s_0, \theta_\Delta \)};
    
      \draw[->] (a) to[bend left] node[above] {\( a_\Delta \)} (b);
      \draw[->] (b) to[bend left] node[below] {\( a_\text{noop} \)} (a);
    
      \draw[->] (a) edge[loop left] node[left] {\( a_\text{noop} \)} (a);
      \draw[->] (b) edge[loop right] node[right] {\( a_\Delta \)} (b);
    \end{tikzpicture}    
    \caption{A simple DR-MDP which can lead $U_\text{RT}$ to lead certain actions to be optimal that all reward functions disagree with.}
    \label{fig:disagrees}
\end{figure}
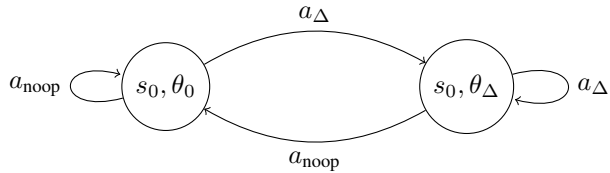

Consider the following reward functions for this DR-MDP:
\begin{align}
    R_{\theta_0}(s, a, s') = \left\{
    \begin{array}{cl}
        5 & \text{ if } a=\anoop\\
        0 & \text{ if } a=a_{\Delta}
    \end{array}
    \right.
    \ \ \ \ \ \ \ \ \ \ \ \ \ \ \ \
    R_{\theta_\Delta}(s, a, s') = \left\{
    \begin{array}{cl}
        25 & \text{ if } a=\anoop\\
        20 & \text{ if } a=a_{\Delta}
    \end{array}
    \right.
\end{align}

Note that optimal policies with respect to the two $\theta$s (as defined in \Cref{def:opt-wrt-theta}) are respectively:
\begin{align}
    \pi^*_{\theta_0}(s, \theta, t) = \anoop \ \ \forall s, \theta, t
    \ \ \ \ \ \ \ \ \ \
    \pi^*_{\theta_\Delta}(s, \theta, t) = \anoop \ \ \forall s, \theta, t
\end{align}

\prg{Influence will be optimal despite all reward functions disprefering it.} For both reward functions, it is the case that $\anoop$ actions have higher value than influence actions $a_\Delta$. Therefore, under both reward functions it is optimal for the AI agent to \textit{never} perform the influence action. However, for any non-terminal timestep, it will \textit{always} be optimal with respect to $U_\text{RT}(\xi)$ (as defined in \Cref{def:opt-wrt-U}) to take the influence action:
\begin{align}
    \pi^*_\text{RT}(s, \theta, t) = a_\Delta \ \ \forall s, \theta, t < T-1
\end{align}
This is because maximizing real-time reward $U_\text{RT}(\xi)$ will entail remaining with reward $\theta_\Delta$ as long as possible (as rewards values are larger under this reward function), despite the person always preferring AI inaction. 

\prg{$U_\text{RT}(\xi)$ assumes inter-temporal comparisons of utility are meaningful.} Ultimately, $U_\text{RT}(\xi)$ is baking in an assumption that it's meaningful and worthwhile to make ``inter-temporal'' comparisons of utility between the different selves (and their respective reward functions), even against the wishes of each individual reward function. We discuss this fact further in \Cref{app:criticisms}.

\prg{Additional considerations.} This is significant because it means that in some sense $U_\text{RT}(\xi)$ is ``disagreeing'' with a solution which is ``unanimous'' among the individual points of view which we consider. One could see this example as a reason to doubt as to whether normative unambiguity (\Cref{def:normative-ambiguity}) is sufficient to know how we should act in a certain setting---should the AI system should shift the person to experience higher reward? However, as the optimal behavior under $U_\text{RT}(\xi)$ must act contrary to each reward function's wishes, to us it seems like one should respect the autonomy of the person (whose different rewards are in agreement) in performing the final judgement about the relevant interpersonal comparisons of utility (which should be reflected by the reward function(s) in the first place). Ultimately, to us this example provides futher reason to doubt that using $U_\text{RT}(\xi)$ will lead to the types of AI system behaviors that we would desire and would find acceptable. For further considerations about $U_\text{RT}(\xi)$, see also \Cref{app:criticisms}

\subsection{Myopic Reward: it's not always obvious if a system is truly myopic}\label{app:myopic-or-not}

\citet{krueger_hidden_2020} argues that while myopia may hide influence incentives from an AI agent, the value of influence might be accidentally ``revealed'' to the agent depending on the training setup despite the myopia. This points to the fact that whether a system is myopic is not always obvious, as we'll show in the case of recommender systems which optimize long-term metrics myopically (which is a common setup in practice). 

Say one is myopically optimizing a user's \textit{session}-watchtime, as was being done by YouTube in 2016, as discussed in \citet{covington_deep_2016}. Even though the system is myopic, it will try to implicitly learn which kinds of sequences of videos maximize session watchtime, which in turn depends on both the user's and the AI's behavior after the current recommendation. Anecdotally, we found that some recommender systems practitioners are aware that with under a simple setup of iterated deployment and retraining, training myopically with long-term metrics should correspond to a policy improvement iterator, meaning that it will eventually converge to the RL optimum (which is absolutely not myopic). To the best of our knowledge however, this argument has not yet been published explicitly, so we provide a proof below. This goes to show that establishing whether a system is truly myopic can often be challenging to interpret. 

Additionally, as we show in \Cref{subsec:influence-and-horizon}, a system being (truly) myopic does not mean it is incapable of influence, which may even be elaborate or seemingly involve complex reasoning steps. As an additional example to that of clickbait from \Cref{fig:clickbait}, consider the case of sycophancy in LLMs \cite{sharma_towards_2023}: RLHF for LLMs can also be viewed as a form of myopic objective (as discussed in \Cref{app:objective-reductions}). From this perspective, one can think of the LLM as implicitly inferring some aspects of the user's cognitive state, and subtly tailoring its responses in order to maximize the expected user approval, even though ``it's only reasoning over a single step''. %

\subsubsection{Optimizing long-term metrics myopically is equivalent to RL under mild assumptions}\label{subsec:myopic}

A recommender system which selects content myopically according to predicted long-term metrics (e.g., \citet{covington_deep_2016}) can be modeled as having two main components:
\begin{enumerate}
    \item A model used to predict the long-term metrics (e.g. user retention, or session watchtime) based on the user context $s$ and a candidate recommendation $a$, which we denote as $\hat{Q}(s, a)$.
    \item A policy which greedily selects the content which maximizes the predicted long-term metric: $\pi(s) = \max_a \hat{Q}(s, a)$.
\end{enumerate}

Note that the iterative retraining procedure in \Cref{alg:retraining} is equivalent to a policy improvement procedure: updating the policy based on the latest Q-value estimate is a policy improvement step, and updating the Q-value estimates based on the long-term metrics obtained by the latest policy is equivalent to a policy evaluation step. 
If the updated Q-value estimates have sufficiently low error, one would have a guarantee of convergence to the optimal RL policy $\pi^*_\text{RL}$, which maximizes the long-term metrics \cite{sutton_reinforcement_2018}. 

Therefore, the effective optimization horizon of myopic recommenders which perform iterative retraining (as done by most non-RL recommender systems) may be best thought of as the longest horizon present in the metrics they optimize. 

That being said, the above analysis comes with caveats. Obtaining good estimates of Q-values is likely challenging in practice, and real-world recommender system environments are likely non-stationary. Moreover, \Cref{alg:retraining} is certainly an oversimplification of how real world recommender systems are trained and deployed. For example, selection of content often also involves filtering and re-ranking to add diversity to user slates \cite{thorburn_how_2022}. The degree to which all these factors blunt RL and myopic systems' ability to influence is yet to be studied in practice.

\begin{algorithm}
\caption{Iterative Retraining of Long-Term Metric Myopic Recommender System}
\label{alg:retraining}
\begin{algorithmic}[1]
\REQUIRE Initial long-term metric predictor $\hat{Q}_0(s, a)$
\WHILE{True}
    \STATE Deploy policy $\pi_{i-1}(s) = \max_a \hat{Q}_{i-1}(s, a)$
    \STATE Store data collected from $\pi_{i-1}(s)$ in $\mathcal{D}_i$
    \STATE Continue training the long-term metric predictor $\hat{Q}_i$ using $\mathcal{D}_i$
\ENDWHILE
\end{algorithmic}
\end{algorithm}

\subsection{More context and motivation for the ParetoUD objective}\label{app:paretoUD}

\begin{figure}[h]
  \centering
  \vspace{-0.2em}
  \includegraphics[width=0.60\textwidth]{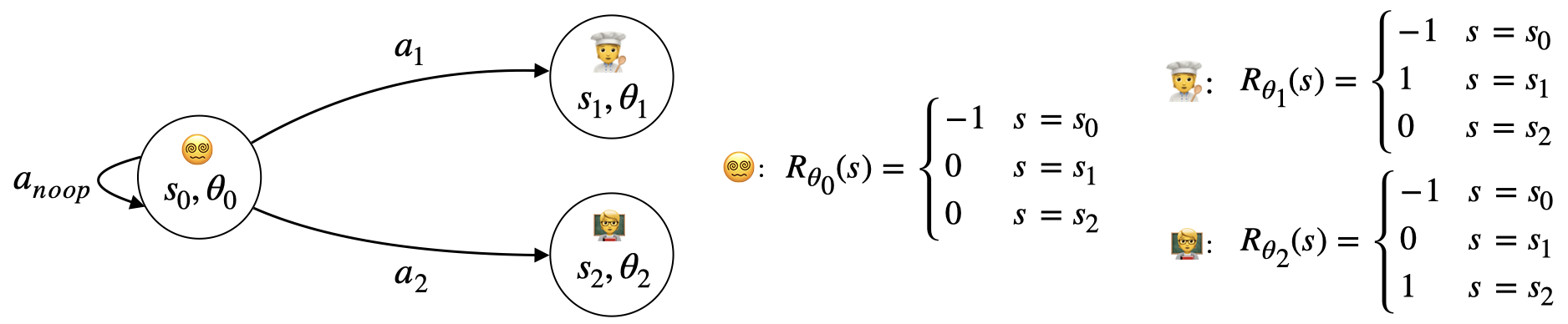}
  \vspace{-1em}
  \caption{\textbf{Career choice paralysis.} Taylor is stuck deciding between two possible career choices: becoming a cook or a teacher. The AI system can convince them to pursue either path, or leave them stuck. While the setting is normatively ambiguous, all selves agree that remaining stuck is the worst possible outcome. The ParetoUD policies are ones that encourage the person to become a cook or a teacher.}
  \label{fig:paretoUD}
\end{figure}

\prg{An example of ParetoUD in action.} Consider the example from \Cref{fig:paretoUD}, which has horizon $H=1$. Let $\pi_1$ and $\pi_2$ are respectively the policies that take action $a_1$ and $a_2$. Note that the setting is normatively ambiguous: the optimal policies according to the stuck self are $\Pi_{\theta_0} = \{ \pi_1, \pi_2 \}$, the optimal policies according to the cook self are $\Pi_{\theta_1} = \{ \pi_1 \}$, and the teacher self are $\Pi_{\theta_2} = \{ \pi_2 \}$. Therefore, there is no one policy which is optimal according to all reward parameterizations. However comparing the expected utility of $\pi_\text{noop}, \pi_1, \pi_2$, both $\pi_1$ and $\pi_2$ are better than $\pi_\text{noop}$ according to any of the $\theta$s. This means that both $\pi_1$ and $\pi_2$ are unambiguously desirable improvements to the status quo of the system not existing (and Taylor remaining stuck. $\pi_\text{noop}$ is always in the UD set of policies by construction. However, when considering the pareto UD policies, we only have $\pi_1$ and $\pi_2$, as they dominate $\pi_\text{noop}$. While for this example the optimal policies according to ParetoUD are the same as the optimal policies under e.g. Initial Reward or Real-time Reward, ParetoUD is much more conservative than either of these objectives for most settings (as can be seen by \Cref{tab:all-optimality}).

\textbf{More context on the ParetoUD objective in \Cref{tab:maximization_objectives}.} In \Cref{tab:maximization_objectives}, we denote PE and UD as indicators for the respective properties of Pareto Efficiency (\Cref{def:PE}) and Unambiguous Desirability \Cref{def:UD} being satisfied. In the case of a discrete $\Theta$ space, we can expand the expression out further and turn it into a maximization problem as: $$\max_{\pi} PE(\pi) + \sum_\theta \mathbb{I}(EU_{\theta}(\pi) \geq EU_{\theta}(\pinoop))$$
It may not be immediately clear why (especially in the objective above), one doesn't have to restrict the Pareto Efficiency indicator to the subset of policies $\Pi_{UD} \subset \Pi$ (as discussed in \Cref{sec:desirable-influence-individual-rationality}). To see why, note that the summation expresses the UD condition, and we know that there will always be at least one policy which satisfies it ($\pinoop$)---so we can always obtain an objective value of $|\Theta|$. Moreover, we know that there always must be a Pareto Efficient policy within $\Pi_{UD}$, meaning that all indicators (including the $PE$ function) can be equal to 1 at once, meaning that the objective can take on value $|\Theta|+1$, ensuring that the solution will both be Pareto Efficient and Unambiguously Desirable.

\textbf{Selecting among the Pareto Efficient policies.} An interesting question for further work would be to study whether there are better ways to select from Pareto Efficient policies, rather than just tie-breaking arbitrarily within the subset of policies $\Pi_\text{UD}$ which are Pareto Efficient. For example, one could use social choice functions, e.g. with the goal of fairly allocating the gains relative to inaction to the various selves $\theta$. The main advantage of simply requiring Pareto Efficiency is that it doesn't assume that it is meaningful to perform ``interpersonal'' comparisons of utility, i.e. the values assigned by different reward parameterizations do not have to be on the same scale.

\prg{ParetoUD acts on aspirations which are consistent across $\theta$s.} Ultimately, the motivation of ParetoUD comes from the fact that we might want AI systems to help us change in ways that are different in character (or speed) relative to the natural reward evolution (\Cref{def:natural-reward-evolution}) we would have without the system.%

\subsection{Efficient algorithms and tractability}

Our main focus in this work is to provide a clear formalism for grounding discussions about dynamic-reward problems, rather than developing efficient solutions. Therefore, we have mostly ignored tractability issues of the objectives we propose. That being said, most of our objectives can be easily optimized using standard RL techniques, or techniques developed in previous work \cite{everitt_reward_2021,carroll_estimating_2022,achiam_constrained_2017}. The two objectives from \Cref{tab:maximization_objectives} which may be most challenging to optimize are Final Reward (for which one can likely develop appropriate Bellman Updates), and ParetoUD. However, the former objective has the most susceptibility to influence incentives out of all objectives, and the latter is overly conservative, potentially making them unlikely objectives to want use in practice.

\section{The DR-MDP Objectives Most Similar to Existing Alignment Practices}\label{app:objective-reductions}

In \Cref{subsec:current-AI}, we claim that the training setups for recommender systems and for reward modeling (under one interpretation) are implicitly optimizing objectives which are similar to---respectively---real-time reward $U_\text{RT}(\xi)$ and initial reward $U_\text{IR}(\xi)$. Additionally, in \Cref{tab:intuition_strengths_weaknesses} we categorize other prior works which implicitly optimize objectives similar to the DR-MDP objectives we consider. 

\prg{Challenges in unambiguously casting prior approaches as DR-MDP objectives.} Because most of the prior works we reference don't explicitly account for the possibility of changing preferences, analyzing how they would handle changing preferences and their similarity to the DR-MDP objectives is often confusing: it depends highly on the assumptions one makes about what constitutes a timestep, which preference dynamics (if any) are at play during reward learning, if the person is providing feedback about their own interaction with the system (or as a third-party observer), etc.
Here we attempt to informally motivate the correspondences we specified in \Cref{tab:intuition_strengths_weaknesses}, and sketch the assumptions they most rely on. We also discuss which DR-MDP objectives are most similar to additional prior work which could not fit in \Cref{tab:intuition_strengths_weaknesses} due to space constraints.
Given the many simplifications required for some of the comparisons (and their tangential importance towards the goal of our paper), technical precision will not be the main aim of this section.
We mostly hope it can provide a helpful starting point for others that may be interested in interpreting their training setup in terms of DR-MDP objectives. 

\prg{Section outline.} In \Cref{app:reward-modeling}, we first discuss simplifying assumptions which help with later comparisons between existing methods and DR-MDPs; we then argue that even if the idealized assumptions were unmet in practice, the methods in consideration would almost certainly still lead to influence incentives (though they would be much harder to analyze than those that one would expect from simple DR-MDP objectives).  
In the subsections that follow, we proceed to categorize AI alignment techniques based on the DR-MDP objectives they seem most similar to.%

\subsection{Idealized assumptions}\label{app:reward-modeling}

While we already discussed in \Cref{subsec:initial-reward} how (under a specific interpretation) reward modeling is equivalent to $U_\text{IR}$, the language of reward modeling is very flexible and can be used to describe almost any of the DR-MDP objectives we consider.
Because of the intuitiveness of the reward modeling framework \cite{leike_scalable_2018}, in the subsections that follow we will map alignment approaches to DR-MDP objectives by first casting them (sometimes trivially) in terms of reward modeling, and from there to DR-MDP objectives.
This allows us to first consider how reward modeling can be interpreted in the lens of DR-MDPs (more broadly than in \Cref{subsec:initial-reward}), and then apply this framework to each individual alignment technique. 

\citeauthor{leike_scalable_2018} describe reward modeling as a two-phase approach which entails:
\begin{quote}
    \textit{(1) learning a reward function from the feedback of the user and \\ 
    (2) training a policy with reinforcement learning to optimize the learned reward function}.
\end{quote}

The reward function learned from feedback of users---as conceived of in \citeauthor{leike_scalable_2018}---is a single, static, reward function. Therefore, insofar as the feedback from users for phase (1) was being provided at different times, from the point of view of different cognitive states $\theta$, and from multiple people, such reward function would be a mixture across different people, and across their different cognitive states which they had during reward learning time, which would be very hard to analyze in terms of DR-MDPs.

\prg{Assumption 1: each state $s$ is sufficiently expressive as to uniquely determine the cognitive state $\theta$ of the person providing reward feedback.} This ensures that even if a system is not explicitly modeling $\theta$, it can distinguish between the different values of $\theta$ insofar if it's useful to do so (both at reward learning time, and at RL training or deployment time). This may seem relatively reasonable if the person is providing ``first-person feedback'' about their own interaction with the AI system: as an example, in the setting of recommender systems, the person provides feedback (e.g., likes, comments, etc.) based on a rich context of their historical interactions; these form a state representation $s$ which is likely sufficient to recover many aspects of their preferences, and $\theta$ more broadly.\footnote{In practice, even with a partially observable changing $\theta$, there can still be influence incentives. See \Cref{app:learning-reward}, and in particular the reference to \cite{carroll_estimating_2022}.}
While all the examples in our paper are of the form above (in which feedback about the reward comes from the person interacting with the AI assistant), in many standard reward learning setups the person providing feedback to the AI system is instead doing so from a ``third-person perspective'' (the trajectories being evaluated do not involve directly interacting with a human, or the human involved is not the annotator).\footnote{Note that the reward learning step is what determines the reward model, which in turn determines the resulting influence incentives. When the reward learning feedback comes from a ``third-person perspective'', the AI system will have direct incentives to influence the annotator, rather than any person (which may or may not be present) in the evaluated trajectories. That being said, such incentives to influence the annotator may correspond to specific incentives to influence the person in the trajectories. 
We hope to explore these nuances in future work.}
In this case, the assumption of known $\theta$ seems more unrealistic, but as we'll see in, e.g., \Cref{app:final-reward}, a less imposing assumption may suffice to have similar effects (but which the DR-MDP framework often does not lend itself to analyze out-of-the-box).

\prg{Assumption 2: shared dynamics across multiple humans.} While many of the alignment techniques we consider were initially designed as single-human alignment techniques \cite{critch_ai_2020}, in practice they are generally used to ``learn a reward model'' using the feedback obtained from many different humans \cite{ouyang_training_2022}. 
As long as the dynamics of states and cognitive states $\mathcal{T}$ are the same across different people (which we refer to as the ``shared dynamics assumption''), the $\theta$-observability assumption is sufficient to guarantee that the system always has all the relevant information: if different people have different cognitive states, the system would observe that and be able to respond accordingly (personalizing actions and influence). If instead two people had the same cognitive state $\theta$, by the shared-dynamics assumption, distinguishing between them is unnecessary in terms of solving the DR-MDP (whatever the objective), as their reward functions and states would transition in exactly the same way.

\textbf{Assumption 3: coverage assumption.} We further assume that the RL training sufficiently covers the space of initial states and reward functions $\Theta \times \mathcal{S}$ (and that the learned reward function $\hat{R}(s, a, s')$ is a sufficiently good approximation over such space to enable such training), as to ensure generalization to test-time humans which may have different starting states and reward functions.
While this may seem like an unrealistic assumption, it becomes more plausible seen in the context of learning from/with many different humans, under the shared dynamics assumption from above. Indeed, assuming that one will be performing reward learning (and learning reward dynamics) from multiple humans can help with having this assumption satisfied, as discussed in \Cref{app:learning-reward}.

\begin{claim}[\textbf{Correspondence of Reward Modeling to DR-MDPs}]
    \label{claim:correspondence-rm-dr-mdp}
    If a reward modeling approach satisfies the 3 assumptions above, the learned reward function $\hat{R}(s, a, s')$ will be equal to $\hat{R}_\theta(s, a, s')$ for some $\theta$ (which depends on the reward learning setup); as a consequence of this, when one uses RL to optimize $\sum_t^{H-1} \hat{R}(s_t, a_t, s_{t+1})$, one is implicitly optimizing $\sum_t^{H-1} \hat{R}_\theta(s_t, a_t, s_{t+1})$ where the exact value of $\theta$ depends on the training setup. Moreover, one can expect standard reward modeling done under such assumptions to generalize similarly at deployment time to how it would generalize if it had been trained modeling $\theta$ explicitly (under the implicit DR-MDP objective).
\end{claim}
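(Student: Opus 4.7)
The plan is to reduce the reward-modeling pipeline to a DR-MDP optimization problem in two passes: first arguing about what phase (1) (reward learning) produces, then about what phase (2) (RL optimization) does with it. For phase (1), Assumption 1 gives a deterministic map $f : \mathcal{S} \to \Theta$ with $\theta_t = f(s_t)$. Because the annotator's cognitive state is thus readable from $s$, any feedback elicited at state $s$ is unambiguously from the perspective of $\theta = f(s)$, so the learner has access to (samples of) $R_{f(s)}(s,a,s')$ at each state $s$ it queries. The particular $\theta$ that ends up parameterizing $\hat R$ is then determined by \emph{when and in what form} the annotator provides feedback: real-time per-transition feedback (the annotator sits at $s_t$) gives $\theta = \theta_t$ and hence matches $U_\text{RT}$; one-shot feedback solicited before training (at $s_0$) gives $\theta = \theta_0$ and matches $U_\text{IR}$; retrospective trajectory-level feedback gives $\theta = \theta_H$ and matches $U_\text{FR}$, etc. Assumption 2 is what allows us to pool feedback from multiple humans: under shared dynamics, all humans at state $s$ share cognitive state $f(s)$ and hence share the same $R_{f(s)}$, so averaging across annotators is coherent and converges to $R_{f(s)}$ rather than to some ill-defined mixture.

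For phase (2), the learner's objective $\sum_t \hat R(s_t,a_t,s_{t+1})$ can, by the argument above, be rewritten as $\sum_t \hat R_{\theta^{(t)}}(s_t,a_t,s_{t+1})$, where $\theta^{(t)}$ is the cognitive state determined by the feedback protocol. Under Assumption 3, $\hat R_\theta$ approximates $R_\theta$ uniformly well on the relevant portion of $\Theta \times \mathcal{S}$, so optimal policies under the learned objective converge (up to approximation error) to optimal policies under the corresponding DR-MDP objective. The final step is to observe that explicit DR-MDP modeling would have produced the same Bellman equations on the augmented state $(s,\theta)$; but under Assumption 1, $(s,\theta)$ and $s$ carry the same information, so the augmented-state value function collapses to a function of $s$ alone, and the two training procedures produce the same optimal policy class.

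The main obstacle is that the claim is inherently semi-formal: ``reward modeling'' is not a single algorithm but an umbrella for many feedback modalities (scalar reward, binary preferences, demonstrations), each with different inductive biases. To make the proof fully rigorous one would need to instantiate a specific elicitation model (e.g.\ Bradley--Terry over segments with Boltzmann-rational annotators) and verify that, under Assumptions 1--3, the MLE of that model converges to $R_{\theta^{(t)}}$ for the identified $\theta^{(t)}$. A subtler issue is that trajectory-level feedback (as in RLHF) only pins down a sum of rewards, not per-transition rewards, so the decomposition into $\hat R_{\theta^{(t)}}(s_t,a_t,s_{t+1})$ is identifiable only up to potential-based shaping; the claim should be read as ``equivalence of optimal policies'' rather than equivalence of reward functions pointwise. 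Finally, the generalization part of the claim is only as strong as Assumption 3, which in practice is an idealization; a careful statement would quantify the suboptimality of the learned policy in terms of the reward-model error and the coverage of the training distribution, along the lines of standard offline-RL error decompositions.
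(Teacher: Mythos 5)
Your proposal follows essentially the same route as the paper's own informal proof: you identify which $\theta$ parameterizes the learned reward model from the timing and form of the feedback protocol (real-time, retrospective trajectory-level, or fixed initial/trajectory-independent, matching the paper's three cases), argue that RL on $\hat R$ is therefore implicitly RL on $\hat R_\theta$, and invoke Assumptions 1--3 for the deployment-time generalization claim. The additional caveats you raise (identifiability only up to potential-based shaping for trajectory-level feedback, the need to instantiate a concrete elicitation model to make this rigorous) are sensible refinements but do not change the argument, which the paper itself presents only as an informal proof.
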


\emph{Informal proof of \Cref{claim:correspondence-rm-dr-mdp}:} 
Consider someone giving reward feedback about a transition $(s_t, a_t, s_{t+1})$ at reward learning time. For simplicity, assume they evaluate the transition providing as feedback a reward value $r_t$ directly.
Let's consider three cases (which are not comprehensive, but cover some of the setups we are most interested in):

\hypertarget{case1}{\textbf{Case 1: feedback comes from current $\theta$.}} The reward feedback for the transition at time $t$ of each trajectory $\xi$ is collected from the reward function $R_{\theta_t}$, i.e. the reward function corresponding to the current state $s_t$. 
When a reward modeling technique claims to have learned $\hat{R}(s_t, a_t, s_{t+1})$, it has in fact learned $\hat{R}_{\theta_t}(s_t, a_t, s_{t+1})$. Note that this means that evaluations of transitions under different cognitive states than the current one are never learned, i.e., any $\hat{R}_{\theta}(s_t, a_t, s_{t+1})$ where $\theta\neq \theta_t$. At RL time, there may be a different distribution of starting states. Note that the system (despite not explicitly representing $\theta$) will be able to distinguish between $\theta$s because of the state expressivity assumption (assumption 1). When optimizing $\sum_t^{H-1} \hat{R}(s_t, a_t, s_{t+1})$ during RL, the system will be implicitly optimizing $\sum_t^{H-1} \hat{R}_{\theta_t}(s_t, a_t, s_{t+1})$. By the coverage assumption, shared dynamics assumption, and the state expressivity assumption, at test time the policy should be able to infer the person's cognitive state (insofar as it's necessary to optimize the objective optimally), and thus be able to generalize to any initial state and cognitive state.

\hypertarget{case2}{\textbf{Case 2: feedback comes from a fixed, trajectory-dependent $\theta$.}} The reward feedback for the transition at time $t$ of each trajectory $\xi$ is collected from a fixed reward function $R_{f(\xi)}$, for some trajectory dependent cognitive state given by a function $f: \Xi \rightarrow \Theta$. Let's consider $f$ which selects the reward function corresponding to the \textit{final} state $s_H$, i.e., a choice of $f$ such that $f(\xi) = \theta_H$.
When a reward modeling technique claims to have learned $\hat{R}(s_t, a_t, s_{t+1})$, it has in fact learned $\hat{R}_{\theta_H}(s_t, a_t, s_{t+1})$ (where the reward will have different values depending on the trajectory considered, as $\theta_H$ is trajectory dependent). Note that similarly to Case 1, at RL time, the system will be able to distinguish between the current $\theta$s, and plan what is the most advantageous $\theta_H$ to try to induce (in terms of expected reward). When optimizing $\sum_t^{H-1} \hat{R}(s_t, a_t, s_{t+1})$ during RL, the system will be implicitly optimizing $\sum_t^{H-1} \hat{R}_{\theta_H}(s_t, a_t, s_{t+1})$. For the same reasons as in Case 1, we can expect the policy to generalize to any initial state distribution. Also, note that this same argument can be applied for other choices of $f$, such as $f(\xi)=\theta_0$. In this case, the reward feedback for the transition at time $t$ of each trajectory $\xi$ is collected from the initial reward function $R_{\theta_0}$.

\hypertarget{case3}{\textbf{Case 3: feedback comes from a fixed, trajectory-independent $\theta$.}} The reward feedback for the transition at time $t$ of each trajectory $\xi$ is collected from a single, fixed reward function $R_\theta$ (where $\theta$ may not even appear in the trajectory).
When a reward modeling technique claims to have learned $\hat{R}(s_t, a_t, s_{t+1})$, it has in fact learned $\hat{R}_\theta(s_t, a_t, s_{t+1})$. 
At RL time, the system will be able to distinguish between $\theta$s as in the previous cases (by assumption 1). When optimizing $\sum_t^{H-1} \hat{R}(s_t, a_t, s_{t+1})$ during RL, one is implicitly optimizing $\sum_t^{H-1} \hat{R}_\theta(s_t, a_t, s_{t+1})$ for the fixed value of $\theta$ which one was eliciting feedback from at reward learning time. By the coverage assumption, shared dynamics assumption, and the state expressivity assumption (assumptions 1-3), at test time the policy should always be able to identify the person's cognitive state (insofar as it's necessary to optimize the objective optimally), leading to successful generalization. %

\prg{What if assumptions 1-3 don't hold in practice?} One might wonder what DR-MDP objectives current techniques would correspond to in practice without the assumptions above (and more case-specific ones we make later). First and foremost, without these assumptions, the reward function obtained by the reward learning step would almost certainly come from a mixture of cognitive states (and potentially of different individuals), whose evaluations are aggregated in potentially unstructured and conflicting ways (as the system isn't able to fully disambiguate between cognitive states due to the state representation not being expressive enough). Any fixed mixture of rewards can generically be thought of as corresponding to a ``privileged reward'' objective (whose corresponding reward parameterization $\theta$ may be unreachable, at it is based on an arbitrary amalgamation of different cognitive states \Cref{app:reachable}). However, as discussed in \Cref{sec:solutions} and \Cref{app:privileged-reward}, any privileged reward DR-MDP objective will still lead to potentially undesirable influence incentives (similarly to the initial reward objective), unless the reward function is somehow encoding the ``correct'' trade-off between preferences. Because the trade-offs between current selves encoded by current alignment techniques are quite unstructured and arbitrary in the absence of our simplifying assumptions, it seems unlikely that they will be encoding the ``correct'' trade-off without a careful accounting for it. This leads us to believe that the DR-MDP objective correspondences we provide under our assumptions are likely favorable interpretations. As a parallel, the implicit aggregation of preferences across different users which is performed by RLHF has recently been shown to be equivalent---under certain weaker assumptions---to the Borda count social choice rule \cite{siththaranjan_distributional_2023}. While this is a surprisingly structured ``mixture'', it also has various undesirable properties---which is what one might have expected. %

\subsection{Real-time Reward}\label{app:real-time-rew-correspondence}

All the methods are similar to real-time reward can be thought of as approximately falling under \hyperlink{case1}{Case 1} from \Cref{claim:correspondence-rm-dr-mdp}.

\prg{RL Recommender Systems.} Most approaches for RL in recommender systems are based on doing offline RL, or doing on-policy training in simulation using learned human models that are trained to emulate historical engagement data \cite{afsar_reinforcement_2021}. In both cases, the reward signal (either in the static dataset used for offline RL, or for training the human simulator) comes directly from people's interactions with the system: the reward is generally modeled to be essentially equivalent to user engagement \cite{thorburn_how_2022,afsar_reinforcement_2021}. %
This makes the reward learning step (from \Cref{app:reward-modeling}) trivial, as the correspondence between behavior and reward is hardcoded. The resulting ``reward model'' is either the engagement labels themselves (i.e. in the case of offline RL), or a human engagement model trained on past engagement data. 
As discussed in \Cref{subsec:current-AI} and in \hyperlink{case1}{Case 1}, optimizing $\sum_{t=0}^{H-1} R(s_t, a_t,s_{t+1})$ at deployment time implicitly corresponds to optimizing $\sum_{t=0}^{H-1} R_{\theta_t}(s_t, a_t,s_{t+1})$.

\prg{TAMER.} Similarly to recommender systems, approaches such as TAMER \cite{hutchison_training_2013}, Deep TAMER \cite{warnell_deep_2018}, or the EMPATHIC framework \cite{cui_empathic_2020}, have reward learning step in which the human provides feedback $r_t$ in real-time in the deployment environment according to their current cognitive state $\theta_t$. Because of this, similarly to the recommender system setting, their optimization objective corresponds to the real-time reward objective. %
However, the feedback about AI behavior generally comes from a third-person perspective, complicating the correspondence. See the section of ``the original RLHF method'' from \Cref{app:final-reward} for more details.

\prg{Multi-turn RL for LLMs with real-time feedback (with $t=$ conversation turn).} Although this is not currently known to be common practice, one could imagine a variant of the standard RLHF setup for LLMs (e.g. that of \citet{ouyang_training_2022}) in which a single user, over the course of normal usage of a language model is providing feedback for each model output: this could be via thumbs up/down (as is currently present in the ChatGPT interface), or if at every timestep the user is presented with two output options which they need to select between in order to continue the conversation (as in some early versions of Claude). 
By having the user provide feedback about the latest AI output at every timestep of the conversation, this would be equivalent to training a reward model with rewards always conditional on the current cognitive state of the person.
If one were to then to optimize cumulative reward under such reward model, this would be similar to optimizing the real-time reward objective.

\subsection{Final Reward}\label{app:final-reward}

\prg{Multi-turn RL for LLMs with final feedback (with $t=$ conversation turn).}
Similarly to the RLHF variant from \Cref{app:real-time-rew-correspondence}, one could imagine a variant of reinforcement learning from human feedback in which the user provides approval labels (e.g., thumbs-up or thumbs-down) which evaluate the entire conversation had with the LLM (in this setup, the feedback would only be given at the end of the conversation). If  for simplicity one assumes that all conversations are of the same length (say, $H$), this would lead to learning a reward model $\hat{R}_{\theta_H}$. 
Optimizing for cumulative reward with such a reward model 
would be similar to using the final reward DR-MDP objective.

\prg{The original RLHF method.} Consider the RLHF method from \citet{christiano_deep_2017}, in which the user provides preference feedback after viewing snippets of AI behaviors. If the length of the snippets is equivalent to the horizon length (i.e., the snippets are full trajectories), then this setup is most similar to the final reward objective: insofar as the preferences of the person giving feedback are changing while viewing the trajectory, it seems plausible that they would retroactively evaluate the trajectory based on their final point of view at time $t=H$, i.e., using $\theta_H$. Optimizing a reward model learned this way seems most similar to optimizing the final-reward objective $\sum_t^{H-1} R_{\theta_H}(s_t, a_t, s_{t+1})$, where $\theta_H$ is the cognitive state realized at the end of the trajectory. Note that while this is somewhat similar to \hyperlink{case2}{Case 2} from \Cref{app:reward-modeling}, the trajectories being evaluated likely do not include information about the person giving feedback, invalidating assumption 1. A different (and slightly more realistic) assumption may be that each person providing feedback initially approaches each annotation with the same initial cognitive state $\theta_0$ (which is trajectory independent, making this setting somewhat similar to \hyperlink{case3}{Case 3}), and progressively forms their opinion about the trajectory as they view each individual AI action, ultimately evaluating the trajectory from the point of view of their final option (captured by their cognitive state $\theta_H$). 
Even if the annotator's cognitive states were unobserved at reward learning time, by performing RL with the learned reward model $\hat{R}_{\theta_H}(s_t, a_t)$, the resulting system should still be able to find the policies which maximize cumulative reward under the annotators' final-reward over the partial observability (because despite not observing $\theta$, the system gets to see the reward signal, and can find the policies that maximize it \textit{in expectation}). %

\prg{Standard RLHF for LLMs (with $t=$ token).} If one considers each ``timestep'' as generating an individual token, the current practice of RLHF for LLMs \cite{ouyang_training_2022} may be thought of as similar to optimizing final reward for similar reasons to the previous paragraph. Consider annotators as providing comparisons based on their retrospective reward-evaluations for each LLM response they're presented with: i.e. they first assign each response $i$ a score according to the cognitive state that results from reading such response, $R^{(i)}_{\theta_H}$, and then pick the response with highest cumulative reward according to its respective final reward function. This is somewhat stretching the interpretation of preference changes, as it models the LLM's capacity to influence the user's preference (for the current response relative to others) with every additional token generated. And as in the case above, this also requires assuming that each annotator's cognitive state is ``reset'' to a common $\theta_0$ when starting to evaluate each output (potentially based on the annotation guidelines they have been provided).%

\subsection{Initial Reward}\label{app:initial-reward}

\prg{TI-Unaware Reward Modeling.} Consider Algorithm 5 from \citet{everitt_reward_2021}: note that it essentially encodes the initial reward DR-MDP objective. This is one of the approaches presented by \citet{everitt_reward_2021} to avoid influence incentives. As discussed in \Cref{app:CIDs} and \Cref{sec:solutions}, although this algorithm (and DR-MDP objective) avoid ``direct'' influence incentives, it can still lead to influence incentives as defined in \Cref{def:rew-influence-incentives}.

\prg{Long-horizon RL for LLMs ($t=$ conversation turn, uninfluenceable $\theta$ at reward learning time or mismatch in reward learning/deployment horizon).} 
If one were to take a longer-horizon perspective than in other subsections, and model the person's preferences as not being meaningfully changed during the reward learning process, this seems similar to learning $R_{\theta_0}$. This may be a reasonable modeling choice if one is using a form of reward learning such as GATE \cite{li_eliciting_2023} to obtain a reward model, which captures what the user initially wants (as in the example of Charlie from \Cref{subsec:initial-reward}).
If such a reward model is then used to optimize cumulative reward over long-horizon simulated interactions, this would be equivalent to optimizing $U_\text{IR}$. 
The optimal policy under $U_\text{IR}$ should then influence the user optimally (with actions personalized according to the preferences $\theta_0$) towards whatever cognitive states are most conducive towards long-term reward under $\theta_0$.

\prg{Preferences Implicit in the State of the World.} The approach proposed by \citet{shah_preferences_2019}, based on inferring the human's preferences based on the initial state of the environment, can be thought of as learning the $R_{\theta_0}$ reward model (which requires additional simplifications, as preferences in the initial state of the world may come from different timesteps, and thus conflict). As in the case above, 
optimizing cumulative reward with this kind of reward model seems similar to optimizing the initial-reward objective $U_\text{IR}$. %

\prg{Inverse Reinforcement Learning.} Inverse Reinforcement Learning (IRL) techniques \cite{russell_learning_1998,ng_algorithms_2000,abbeel_apprenticeship_2004,ziebart_modeling_2010} can also be thought of as roughly similar to the initial reward objective if assuming that people's cognitive states wouldn't be affected by providing demonstrations (i.e., the demonstrations would be provided consistent with the person's initial reward function $\theta_0$). One then optimizes the recovered reward function over a horizon $H$ with RL, leading to the $U_\text{IR}$ objective. %

\subsection{Natural Shifts Reward} 

\prg{``Natural Distribution'' from \citet{farquhar_path-specific_2022}}. The idea of natural shifts, and evaluating reward from its perspective, is also present in  \citeauthor{farquhar_path-specific_2022}, specifically in Equation 5. 

\prg{``Natural Shifts'' from \citet{carroll_estimating_2022}}. The correspondence between the ``natural shifts'' objective from \citeauthor{carroll_estimating_2022} and the the ``natural reward'' objective from \Cref{tab:maximization_objectives} is simple, as the objective is written and discussed in similar terms. The main differences are that they treat $\theta$ strictly as preferences, rather than cognitive states, and do not use the formalism of DR-MDPs.

\subsection{Myopic Reward}

\prg{Myopic recommender systems.} Despite a recent push towards using RL for training recommender systems \cite{afsar_reinforcement_2021}, most currently deployed recommender systems optimize engagement (and other metrics) only myopically \cite{thorburn_how_2022}.\footnote{However, as discussed in \Cref{app:myopic-or-not}, in some cases myopic optimization may be implicitly equivalent to long-horizon RL.}
While these metrics are not usually formalized in terms of reward, one can consider the probability of engagement
as a reward signal, as discussed in \Cref{app:real-time-rew-correspondence}. As the engagement signals depend on the user's current reward function (e.g. their preferences), the optimization objective is equivalent to the myopic reward objective from \Cref{tab:maximization_objectives}.

\prg{Standard RLHF used for LLMs ($t=$ conversation turn).} 
Consider the standard RLHF procedure considering each timestep as being equivalent to an AI conversation turn (as in one of the cases in \Cref{app:final-reward}). Viewed this way, standard RLHF is simply optimizing the reward over a single action choice, and can be viewed as a bandit setting \cite{ahmadian_back_2024}. In light of this, performing reward learning in this setting seems similar to obtaining the reward model $R_{\theta_0}$ (or $R_{\theta_1}$, depending how one models the feedback). 
When performing the standard RL component of RLHF, generally one only optimizes the next response's reward (rather than the multi-turn conversation reward). Viewing each AI response as a single action, this makes this training setup similar to the myopic reward objective. %
Using a system trained this way to generate multiple responses to user queries is equivalent to replanning with planning horizon of 1 (see \Cref{app:replanning}).

\subsection{Privileged Reward}\label{app:privileged-reward}

\prg{Methods for removing cognitive biases from reward inference.} In \Cref{tab:intuition_strengths_weaknesses} we included \citet{evans_learning_2015} as an example of reward inference work which tries to infer the ``true preferences'' of the human, in light of their feedback (and cognitive states $\theta$) appearing inconsistent. Other work of this kind could include \citet{shah_feasibility_2019}, if one were to view all preference changes as a form of bias. Most reward learning techniques have a component of this objective, in that they try to debias and denoise human feedback by generally making a Boltzmann Rationality assumption \cite{jeonRewardrationalImplicitChoice2020}.

\prg{Ideal observer theory \cite{firth_ethical_1952} and coherent extrapolated volition \cite{yudkowsky_coherent_2004}.} While they are not a practical approaches, these (highly related) proposals of what alignment should look like in spirit are clearly in line with the privileged reward objective. However, the privileged reward that either of these views are referring to are clearly not ``reachable'' in any meaningful sense, as they correspond to perspectives of practically unrealizable ``idealized'' agents---in our framework, they can best thought of as an ``ideal cognitive state''. Therefore, connecting our framework more explicitly to these perspectives would require extending it to handling unreachable reward functions, as discussed in \Cref{app:reachable}.

\section{Additional Related Work from Philosophy, Economics, and AI}\label{app:related-work-phil-econ}

\subsection{Philosophy}

\prg{Early work on personal identity over time and implications for rational decision-making.} While the nature of personal identity under changing selves has been discussed for centuries \cite{locke_essay_1689}, to the best of our knowledge \citet{parfit_personal_1982} was one of the first to discuss its implications with regards to our conception of rationality. In particular, Parfit describes the challenge with being \textit{timelessly} concerned with evaluating one's life as a whole, without considering time---he rejects this as a practical possibility because of the reality that one inhabits time, and every evaluation comes from the perspective of a particular time. With DR-MDPs, one could say we are trying to explore different approaches for an AI assistant to be timelessly concerned in this way for the wellbeing of a person: while the ultimate goal of finding a ``correct'' timeless evaluation may be futile, even settling on reasonable evaluations is challenging. The arguments from \citet{parfit_personal_1982} were further expanded in the seminal work ``Reasons and Persons'' \cite{parfit_reasons_1984}. Of particular note is Parfit's ``present aim theory'', according to which an individual's rational actions should be guided by their present aims or goals, without giving special weight to their future aims or goals (which seems broadly equivalent to our initial-reward objective $U_\text{IR}(\xi)$),\footnote{Potentially when used under re-planning, which is discussed in \Cref{app:replanning}.} and Parfit's ``self-interest theory'', which holds that a person should make decisions based on what will be best for them in the long run, even if it conflicts with their present desires or goals (the corresponding objective in our framework seems less clear, as ``what is best in the long run'' is not specific enough).

\prg{Welfare and rationality under changing preferences.} Around the same time, \citet{griffin_well-being_1986} also criticizes the ``totting-up model'' of well-bring, according to which we should simply sum well-being over time. 
There have been many philosophical works focused on the topic of personal welfare in the context of changing preferences: \citet{velleman_well-being_1991} considers the relation between the welfare value of a temporal period in someone’s life and his welfare at individual moments during that period. %
\citet{rosati_story_2013} describes the ``narrative thesis'', which posits that the way we think of the storyline of our life contributes (in it's own right) to our well being, and echoes related psychological theories \cite{bauer_narrative_2008}. 
\citet{bykvist_prudence_2006} states that, for judging inter-temporal decisions, one shouldn't simply look at a single timestep's point of view---we should consider the potential people we could become, and how \textit{they} would evaluate the worlds they are in. Note that this assumes that we trust the assessments and point of view of future selves, which is questionable if we are worried about undue influence. %
Indeed, \citet{ullmann-margalit_big_2006} questions the idea that one could possibly be rational about ``big decisions'' which change the self, claiming that there is no footing for a rational choice in these situations, as the ``rationality base'' changes as a consequence of the decision.
Building on \citeauthor{bykvist_prudence_2006} and \citeauthor{ullmann-margalit_big_2006}, \citet{paul_transformative_2014} and \citet{callard_aspiration_2018} argue that there is no rational basis for making decisions that change the self.
\citet{pettigrew_choosing_2019} expands on this line of work, and proposes a theory of individual decision making under changing selves based on taking a weighted average between the utility functions of different selves, challenging the notion of ``impossibility of grounding rationality'' under changing selves. While Paul did not find it convincing \cite{paul_choosing_2022}, we think the framework proposed by \citeauthor{pettigrew_choosing_2019} makes significant steps forward. That being said, challenges remain with regards to the details of how weights would be chosen in practice for multi-step decision making: in particular, if weights are re-assessed at every decision making node, it seems even reasonable choices of weights could lead to inconsistent plans. Also, it seems unclear why one could expect a clean separation between one's current weights for different selves, and their current utility function: for instance, wouldn't the utility function of Bob from \Cref{fig:influencev2} already include information about how much Bob would want to weigh being a happy conspiracy theorist in the future, as is (implicitly) the case in our example?\footnote{This is related to discussions of how meta-preferences may be easily expressible in our framework if the state were to include enough information to recover $\theta$ from it, as discussed in \Cref{app:reward-modeling}.} The main difference between \citet{pettigrew_choosing_2019} and our work is that while \citeauthor{pettigrew_choosing_2019} is squarely focused on human decision making, DR-MDPs can instead be thought of as focusing on choosing objectives that lead to reasonable (and consistent) AI plans, which implicitly account for all relevant selves. Another significant difference is our greater focus on the role of influence, and its implications for the legitimacy of the resulting preferences. On first impression, \citeauthor{pettigrew_choosing_2019}'s framework may seem more flexible, as it accounts for uncertainty in future weights, while DR-MDPs assume that the dynamics of reward functions are known. However, DR-MDPs can still account for uncertainty in future outcomes (and reward functions) using stochastic transitions (even though we don't do this in our examples).
For a recent comprehensive review and synthesis of philosophical positions on the problems of changing selves, including a more in-depth summary of Pettigrew's framework, see \citet{strohmaier_preference_2024}.

\prg{Ethics of influence and of nudging.} 
There is a lot of philosophical work on the ethics of influence and manipulation \cite{noggle_ethics_2020}. A specific area closely connected to our issue involves the ethical considerations of ``nudging''. This concept emerged from behavioral economics and refers to the efforts of institutions to steer the behavior and decision-making of groups towards certain outcomes \cite{thaler_nudge_2008}. Our problem setting is very closely related: in our case, it is an AI system (rather than an institution) that is deciding whether to nudge a user. Perhaps unsurprisingly, the ``optimality'' and ``acceptability'' of nudging in the literature are similarly unclear: while nudging was originally promoted as a tool to encourage pro-social outcomes, when it is ethical or overly paternalistic has often been contested \cite{hansen_nudge_2013,hausman_debate_2009,thaler_nudge_2018}.
There are various philosophical works which propose frameworks for external decision-makers to assessing the ethics of whether to perform a specific nudge \cite{paul_as_2019,pettigrew_nudging_2022}. In particular, \citet{paul_as_2019} claim that a nudge is legitimate if the nudged person is better off, \textit{as judged by themselves} after the nudge. \citet{pettigrew_nudging_2022} points out that this heuristic can be misleading, in the case that the nudge was illegitimate (e.g. if it manipulates the person to have different preferences), and proposes a stronger condition as heuristic: that people agree, before and after the nudge, that the nudge was beneficial.\footnote{However, there is additional nuance here, in that \citet{pettigrew_nudging_2022} only requires people's \textit{global} utilities to judge the nudge positively (before and after the nudge). Indeed, \citeauthor{pettigrew_nudging_2022} distinguishes between \textit{local} and \textit{global} utilities: the former encode one's values at a point in time time, and the latter---which assign some weight to the local utilities at other times---are what should be used to make decisions.} Note that the property of Unambiguous Desirability proposed in \Cref{sec:desirable-influence-individual-rationality} can be thought of as a generalization of the heuristic proposed by \citet{pettigrew_nudging_2022}, for arbitrary multi-timestep nudges in the form of AI policies. 

\prg{Work at the intersection of AI and philosophy.} Discussions about what forms of influence are appropriate have also taken place at the intersection of philosophy and artificial intelligence: there have been works on value changes \cite{ammann_problem_2024}, preference change \cite{kolodny_ai_2022,zhi-xuan_beyond_2024}, influence \cite{benn_whats_2022,malvone_shape_2023,gabriel_ethics_2024}, and manipulation \cite{benn_whats_2022,carroll_characterizing_2023,gabriel_ethics_2024}. The concept of ``informed preferences'' \cite{gabriel_artificial_2020}, ideal observer theory \cite{firth_ethical_1952}, and Coherent Extrapolated Volition (CEV) \cite{yudkowsky_coherent_2004} also relate to our work, as we discussed in relation to privileged reward \Cref{app:privileged-reward} (and other appendix discussions, such as that of \Cref{app:justifying-reward-values}). Various works have also tackled the problem of whether it is possible to attribute intent to AI systems actions, and how one could do so \cite{ward_reasons_2024,halpern_towards_2018}. System intent and incentives for influence are closely related \cite{carroll_characterizing_2023}, and intent has already been studied as a coordination mechanism across time for individual decision-making under changing preferences \cite{bratman_intention_1987}.

\subsection{Economics}

\prg{Early work on welfare under variable tastes.} 
While others have alluded to changes in tastes and its implications for welfare \cite{samuelson_note_1937,hayek_pure_1941}, to our knowledge \citet{harsanyi_welfare_1953} was the first economist that investigated these issues in more depth. In this work, Harsanyi considers a notion of welfare grounded in what people say they prefer rather than in external normative principles such as social welfare. Note that this is a normative stance in its own right, and is similar to how our notions of rewards are grounded in what people say they want. Moreover, unlike our framework, Harsanyi's notion of welfare requires assuming comparability between the utility judgements of a person before and after their preference change. A similar move is taken by \citet{von_weizsacker_notes_1971}, who builds on works such as that of \citet{peston_chapter_1967}.
\citet{elster_ulysses_1979} discusses the limitations of von Weizs{\"a}cker's assumptions in more depth, emphasizing the challenges regarding paternalism and which perspectives should be considered to ground welfare judgements when preferences change---which are central to our work. In ``Sour Grapes'' \cite{elster_sour_1983}, Elster later also discusses in depth a specific class of preference changes which occur unintentionally to oneself, namely ``adaptive preferences''. In the same work, Elster also briefly discusses the limitations of state-dependent preference formulations (relevant to \Cref{app:context-vs-pov}).

\prg{Explaining away preference changes as time-inconsistent discounting.} Shortly after Harsanyi's work, \citet{strotz_myopia_1955} studied the phenomenon of time-inconsistency in human's behaviors. Why do we not save up for retirement enough, and then regret it? Strotz showed that only exponential discounting leads to consistent (re-)planning, therefore people must implicitly not be using that kind of discounting (as they exhibit time-inconsistent behavior). He then discusses two ways that people can account for their time-inconsistency: via commitment devices, or by only considering plans that they would actually be able to follow through on without inconsistency.\footnote{This analysis was later corrected by \citet{pollak_consistent_1968}.} %
Note that this stance is implicitly still assuming that people's underlying preferences are static, but just that their discounting scheme is such that they would exhibit time-inconsistent behaviors nonetheless. This move of ``explaining away the appearance of preference changes'' by casting them as suboptimal discounting with fixed preferences ended up influencing much of the later work on the topic, as reviewed by \cite{loewenstein_time_2003}.
Even though the model of hyperbolic discounting may have sufficient explanatory power of people's decision-making in many settings that economics is interested in \cite{benzion_discount_1989,chabris_individual_2008}, this is not the case more broadly \cite{loewenstein_anomalies_1992,frederick_time_2002,loewenstein_time_2003}.

\prg{Shying away from modeling changing preferences.} Why did early economics works---with few exceptions---avoid analyzing \textit{changing} preferences? \citet{george_preference_2001} and \citet{grune-yanoff_preference_2009} give various reasons: preference creation and change have historically been considered topics that lay outside the scope of economics;\footnote{On this point, see also \citet{von_weizsacker_notes_1971} and \citet{pollak_endogenous_1978}.} macroeconomists believed that institutional change (relative to changes in individual's preferences), is by far the more important explanatory factor of economic growth; and maybe most importantly, many microeconomists were of the conviction that human preferences ultimately do not change, and even if they did, it was mathematically counterproductive to model such changes.
The most impactful work from this last camp was undoubtedly that of \citet{stigler_gustibus_1977}. They go as far as to say that ``no significant behavior has been illuminated by assumptions of differences in tastes'', and that analyses considering changing tastes ``give the appearance of considered judgement, yet really have only been ad hoc arguments that disguise analytical failures''.
\citet{grune-yanoff_preference_2009} interpret their position as follows:
\begin{quote}
    ``This position may be interpreted either as the ontological claim that preferences indeed are stable, or alternatively as the methodological claim that explanations based on stable preferences are better than those that refer to preference changes. The second interpretation can be based on the assumed relation between explanatory power and simplicity: explaining any conceivable human behaviour through the paradigm of individuals maximizing utility constrained by income and present capital stocks is simpler than supposing that tastes change.''
\end{quote}
While we agree with the risk of introducing unnecessary formal complexity, we think that in the context of AI interactions with humans, influence effects are too important to be ignored.\footnote{For further criticism of \citet{stigler_gustibus_1977}'s position, see \citet{pollak_endogenous_1978}.} And as we show in our analysis, ignoring such effects has a cost---that they will likely be optimal under standard objective functions (or notions of welfare, to use the language of economists). %

\prg{Egonomics, weakness of will, and adaptive preferences.} A notable line of work in economics is that of weakness of will and self control problems, introduced by \citet{schelling_egonomics_1978} with his notion of ``egonomics''. This area investigates questions of self control and preference change, in particular with respect to internal conflict and addictive behaviors such as smoking. \citet{schelling_enforcing_1985} discusses the enforcement of rules for oneself, which is relevant to our discussion: people are not always able to commit to the plans that they agree are good for them, often leading them to turn to external accountability partners or enforcers. They play a similar role to the AI systems in our work, as they aggregate across the wishes of the person at different times, and try to encourage their best guess of the behavior that the person would ultimately want for themselves.\footnote{The usage of ``self'' in our work comes with similar caveats to those described by \citet{schelling_self-command_1984}, although we recognize that this usage is vague and arguably misleading---for some criticisms, see how the usage by Schelling was criticized by \citet{elster_weakness_1985}.}
Our work sidesteps most of these issues around consistency of plans and self-control \cite{pollak_consistent_1968} by considering an AI assistant's actions, which unlike humans, can credibly commit to carry out a plan (assuming the person cannot switch it off).
Importantly, the settings considered by these works are ones in which the normatively desirable course of action is clear, which removes the main complications of dealing with changing preferences: the notion of what rational behavior should consist of remains mostly unthreatened.
Concurrently, \citet{elster_weakness_1985} interprets weakness of will as a collective action problem between the different selves. For a survey of other works in the literature of weakness of will, see \citet{ainslie_specious_1975}.

\prg{Recent economics work has started contending with changing preferences more directly.}
In recent decades, there have been many more works on the topic of changing preferences \cite{loewenstein_time_2003,grune-yanoff_preference_2009}. In particular, \citet{loewenstein_time_2003} identifies several sources of preference change: habit formation, satiation, visceral factors, maturation, conditioning, social influences, and motivated taste change. \citet{george_preference_2001} formulates an theory of individual welfare that can account for changing preferences by appealing to second-order preferences. To address the regress problem, this work argues that preference changes are most commonly first-order ones, and even when second-order preferences occur, as long as they don't move in tandem with first-order changes, welfare assessments are still possible. 
More recently, \citet{bernheim_theory_2019} attempt to model and unify various preference change phenomena under a single descriptive theoretical model, according to which individuals choose their preferences according to what they expect will maximize their utility (subject to their level of ``open-mindedness'').
The work from \citet{dietrich_where_2013} is also of note: they also propose a descriptive model of preference change, based on ``motivationally salient'' properties of alternatives available to the agent changing. However, both of these descriptive accounts have yet to be tested (to our knowledge), and it's unclear what the normative implications for decision making of this model should be---which are the focus of our paper.

\prg{Unambiguous Desirability and Individual Rationality.} The property of unambiguous desirability was inspired by the notion of ``individual rationality'' from algorithmic game theory \cite{nisan_algorithmic_2007}, which captures the notion of whether any of the individuals involved in an ongoing deal would ever prefer to defect. This is also known as a ``participation constraint'' or ``voluntary participation'' \cite{jackson_mechanism_2014}.

\subsection{AI}\label{appsubsec:AI}

\prg{Multi-objective MDPs.} With a choice of $U(\xi)$, one implicitly replaces the multiple competing notions of optimality (corresponding to each $\theta$) with a single one.
The process of choosing a single $U(\xi)$ which implicitly reduces a DR-MDP to an MDP, is similar to the \textit{scalarization} step in Multi-Objective MDPs \citep{roijers_survey_2013} which reduces a MOMDP to an MDP, which similarly requires an implicit \textit{value judgement} \citep{chankong_multiobjective_2008}. 
However, DR-MDPs importantly differ from MOMDPs, in that the objectives which should be used to evaluate a trajectory may depend on the trajectory itself (as the actions taken can affect the selves that are realized). Relatedly, MOMDPs don't keep track of which reward function was associated with each step (as it's meaningless in their framing).

\prg{Interdisciplinary AI work.} The adaptive and changing nature of human feedback has also been emphasized by \citet{lindner_humans_2022}. We think there a good area of inspiration for tentative solutions is that of Fiduciary AI \cite{benthall_designing_2023}. More broadly, our conclusion about the challenges of avoiding normative choices when operationalizing alignment rings similar to the points made by \citet{dobbe_hard_2021} and \citet{kirk_empty_2023}.

\prg{Welfare under conflicting preferences.} One of the works which is most related to ours is \citet{kleinberg_challenge_2022}: in the setting of recommender systems, they also study what should be conceived of as user welfare, focusing on preference conflict. In particular, they model users as having ``system 1'' and ``system 2'' preferences that can be in conflict, and show how only optimizing engagement will often be insufficient to guarantee welfare. Our work can be thought of as extending theirs to considering many possible preferences that can change over time, thus modeling change in addition to conflict. Moreover, we don't assume access to any ultimate notion of welfare (which in their case is the judgement of system 2).

\prg{Algorithmic amplification in social media, and $\pi_\text{noop}$.} The study of algorithmic amplification in social media \citep{thorburn_what_2022,ribeiro_amplification_2023,huszar_algorithmic_2021} can be thought of as a study of influence emerging from specific algorithmic choices. %
Notions of amplification also need to be specified relative to a ``neutral'' baseline, similarly to our notions of influence (\Cref{def:rew-influence}): what baseline is most appropriate has been subject to debate \cite{laufer_algorithmic_2023,thorburn_making_2023}. Maybe the most common choice in practice has been comparing to a reverse chronological recommender \cite{huszar_algorithmic_2021,milli_engagement_2023}, but others include comparing to others recommender systems \cite{fast_unveiling_2023}, no platform at all \cite{allcott_welfare_2020}, or random recommendations \cite{carroll_estimating_2022}.

\prg{Performative prediction, performative power, and preference influence.} An interesting line of work which has emerged in recent years is that of performative prediction \cite{perdomo_performative_2020} and performative power \cite{hardt_performative_2022}, which concerns itself with the capacity of classifiers to affect the distribution of their future inputs. This idea is similar in spirit to the work of \citet{krueger_hidden_2020}, and is connected to our concern with AI systems' capacity to influence humans. However, we see various reasons to prefer the RL formalism to that of \citet{hardt_performative_2022} for the types of influences were are interested in: performative prediction and power are mostly focused on firms which operate in sequential decision problems (e.g. domains in which the algorithm's choices affect future users' behavior), but use algorithms that myopically optimize over only the next timestep's outcomes---from this perspective, they only allow to model 1 of the 8 objectives we consider in \Cref{tab:maximization_objectives}. For instance, to the best of our understanding, performative power \cite{hardt_performative_2022} can be thought of as a measure of how much a firm can shift users over the course of \textit{a single timestep}, if they choose to do so. The steering analysis of ex-ante and ex-post optimization only performs a one-timestep lookahead, feels like a less natural formalism for the multi-timestep nature of most preference changes---especially if one considers that the RL formalism solves the multi-timestep generalization of the ex-post optimization problem by design: in RL training, the human's adaptation to the AI is already factored into how the AI should be making decisions in order to maximize the multi-timestep objectives. In short, the lens of RL seems strictly more expressive and more suited to our purposes than that of performative prediction, but comes at the cost of additional computational challenges. As a final point of comparison, the framing of \citet{hardt_performative_2022} is mostly focused on the misalignment between firms and targets of the firm's algorithms---focusing on the power that firms have to steer them to their benefit (under well-defined metrics). Instead, we can be thought of as focusing on the problem of ``steering oneself'' with the help of an AI system, and the challenges which remain in determining an appropriate metric for success, as discussed in relation to ``egonomics'' \cite{schelling_egonomics_1978}. Essentially, while we recognize that firm-user misalignment is a very significant reason for worry, we focus on the challenges that would remain even if AI systems were to be developed solely with user alignment in mind. %

\prg{Social choice theory.} Our setting shares various similarities with preference aggregations across multiple individuals, as mentioned in \Cref{sec:intro}. The problem of social aggregation is studied by social choice theory \cite{list_social_2022,brandt_computational_2012}, which mainly differs from our framework in that there is no temporal dependency between elements of the decision which is being made. There have also been recent works discussing the parallels between social choice and AI Alignment \cite{mishra_ai_2023,conitzer_social_2024}, in particular with respect to RLHF. While there has been some work focusing on collective decision-making across time (for individuals who can change their preferences), these works mostly ignore the influence incentives which emerge from their notions of optimality \cite{parkes_dynamic_2013,freeman_fair_2017,kulkarni_social_2020}. Indeed, to our understanding, the objectives proposed by these works could lead to undesirable influence incentives: for example, in \citet{parkes_dynamic_2013} the optimal social choice function (i.e. policy for the social choice MDP) may initially take actions dispreferred by everyone if this can influence future preferences to be in accordance with one another (guaranteeing future unanimity). 
An interesting direction for future work would be to consider traditional social welfare functions (e.g. Rawlsian, Nash, Utilitarian, etc.) as objectives for the aggregations across the different selves. However, in some preliminary investigations in this direction, it seemed to us that such approach would not necessarily achieve more desirable influence properties than the objectives we considered in \Cref{tab:maximization_objectives}.

\prg{Reward uncertainty in MDPs.} There have been some works accounting for uncertainty in reward functions: most relevant is the work by \citet{regan_robust_2010}, who consider Imprecise Reward MDPs, in which there is a feasible set of reward functions that cannot be disambiguated by, e.g., acting in the environment. For such setting, they define the optimization objective in terms of minimax regret, that is, they aim to minimize the maximum regret incurred if the worst reward function from the feasible set were to be chosen. We could have also included this criterion in our analysis and in \Cref{tab:maximization_objectives}, but we are confident it would also run into issues---minimax regret in this setting would require comparing regret across different possible reward functions, requiring ``interpersonal'' comparisons that could always be set to be in favor of undesirable manipulation incentives.
Other works also account for reward uncertainty, but assume that there is one true Markovian reward function that one can learn about, and one should use to evaluate trajectories \citep{hadfield-menell_cooperative_2016,desai_uncertain_2017,chan_assistive_2019}. A gap in the literature is work which tries to disambiguate between reward changes, and updates in belief regarding reward functions (which we discuss briefly in \Cref{app:should-visceral-factors-be-in-reward?}).

\end{document}